\newif\iffull
\title{A Separation Result Between Data-oblivious and Data-aware Poisoning Attacks}
\author{%
    Samuel Deng\\
    Columbia University\\
    samdeng@cs.columbia.edu\\
   \And
   Sanjam Garg \\
   UC Berkeley and NTT Research\\
   \texttt{sanjamg@berkeley.edu } \\
   \And
     Somesh Jha\\
  University of Wisconsin\\
  \texttt{jha@cs.wisc.edu } \\
   \AND
     Saeed Mahloujifar \\
   Princeton \\
   \texttt{sfar@princeton.edu} \\
   \And
     Mohammad Mahmoody \\
   University of Virginia \\
   \texttt{mohammad@virginia.edu} \\
   \And
  Abhradeep Thakurta \\
   Google Research - Brain Team \\
   \texttt{athakurta@google.com} \\
}
\newcommand{\OblFt}{\mathbf{OblFtrSel}}
\newcommand{\FullFt}{\mathbf{AwrFtrSel}}
\newcommand{\OblRisk}{\mathbf{OblRisk}}
\newcommand{\FullRisk}{\mathbf{AwrRisk}}
\newcommand{\LearnFtr}{\mathsf{FtrSelector}}
\newcommand{\elim}{\mathsf{elim}}
\newcommand{\MohNote}[1]{}
\newcommand{\SanNote}[1]{}
\newcommand{\SomNote}[1]{}
\newcommand{\SaeedNote}[1]{}
\newcommand{\AbhrNote}[1]{}
\newcommand{\SamNote}[1]{}
\newcommand{\htet}{\hat{\theta}}
\newcommand{\fhat}[2]{\ifthenelse{\equal{#2}{}}{\hat{f}(#1)}{\ifthenelse{\equal{#2}{0}}{\hat{f}(\emptyset)}{\hat{f}(#1_{\leq #2})}}}
\newcommand{\ftild}[2]{\ifthenelse{\equal{#2}{}}{\tilde{f}(#1)}{\ifthenelse{\equal{#2}{0}}{\tilde{f}(\emptyset)}{\tilde{f}(#1_{\leq #2})}}}
\newcommand{\ftildstar}[2]{\ifthenelse{\equal{#2}{}}{\tilde{f^*}(#1)}{\ifthenelse{\equal{#2}{0}}{\tilde{f^*}(\emptyset)}{\tilde{f^*}(#1_{\leq #2})}}}
\newcommand{\ghat}[2]{\ifthenelse{\equal{#2}{}}{\hat{g}(#1)}{\ifthenelse{\equal{#2}{0}}{\hat{g}(\emptyset)}{\hat{g}(#1_{\leq #2})}}}
\newcommand{\mfix}[2]{\ifthenelse{\equal{#2}{}}{m(#1)}{\ifthenelse{\equal{#2}{0}}{m(\emptyset)}{m(#1_{\leq #2})}}}
\newcommand{\fapp}[2]{\ifthenelse{\equal{#2}{}}{\tilde{f}(#1)}{\ifthenelse{\equal{#2}{0}}{\tilde{f}(\emptyset)}{\tilde{f}(#1_{\leq #2})}}}
\newcommand\aug{\hspace{-2.5mm}\rule[-1mm]{0.1mm}{4.5mm}\hspace{-2.5mm}}
\newcommand\augp{\hspace{-2.5mm}\rule[-2mm]{0.1mm}{5mm}\hspace{-2.5mm}}
\newcommand\augt{\hspace{-2.5mm}\rule[-1mm]{0.1mm}{5mm}\hspace{-2.5mm}}
\newcommand{\concatt}[4]{\begin{bmatrix}
\vspace{-5mm}\\
   #1&\augp&#2\\
   #3&\augt&#4\\
   \vspace{-5mm}\\
    \end{bmatrix}}
\newcommand{\concat}[2]{\begin{bmatrix}
\vspace{-5mm}\\
   #1&
   \aug
   &#2\\
   \vspace{-5mm}\\
    \end{bmatrix}}
\newcommand{\concatv}[2]{\begin{bmatrix}
   #1\\
   #2\\
    \end{bmatrix}}
\newcommand\numberthis{\addtocounter{equation}{1}\tag{\theequation}}
\newcommand{\parag}[1]{\noindent{\bf #1}}
\renewcommand{\paragraph}[1]{\parag{#1}}
\newcommand{\distdist}{\mathfrak{D}}
\newcommand{\ERM}{\mathsf{ERM}}
\newcommand{\X}{\cX} 
\newcommand{\Risk}{\mathsf{Risk}}
\renewcommand{\th}{^\mathrm{th}}
\newcommand{\Exists}{\exists\,}
\newcommand{\aSF}{\mathsf{a}}
\newcommand{\gSF}{\mathsf{g}}
\newcommand{\hSF}{\mathsf{h}}
\newcommand{\avr}[2]{\ifthenelse{\equal{#2}{}}{\aSF({#1})}{\ifthenelse{\equal{#2}{0}}{\aSF(\emptyset)}{\aSF({#1}_{\leq #2})}}}
\newcommand{\avrMax}[2]{\ifthenelse{\equal{#2}{}}{\aSF^*({#1})}{\ifthenelse{\equal{#2}{0}}{\aSF^*(\emptyset)}{\aSF^*({#1}_{\leq #2})}}}
\newcommand{\avrApp}[2]{\ifthenelse{\equal{#2}{}}{\tilde{\aSF}({#1})}{\ifthenelse{\equal{#2}{0}}{\tilde{\aSF}(\emptyset)}{\tilde{\aSF}({#1}_{\leq #2})}}}
\newcommand{\avrAppMax}[2]{\ifthenelse{\equal{#2}{}}{\tilde{\aSF}^*({#1})}{\ifthenelse{\equal{#2}{0}}{\tilde{\aSF}^*(\emptyset)}{\tilde{\aSF}^*({#1}_{\leq #2})}}}
\newcommand{\ArgMax}[2]{\ifthenelse{\equal{#2}{}}{\hSF({#1})}{\ifthenelse{\equal{#2}{0}}{\hSF(\emptyset)}{\hSF({#1}_{\leq #2})}}}
\newcommand{\AppArgMax}[2]{\ifthenelse{\equal{#2}{}}{\tilde{\hSF}({#1})}{\ifthenelse{\equal{#2}{0}}{\tilde{\hSF}(\emptyset)}{\tilde{\hSF}({#1}_{\leq #2})}}}
\newcommand{\gain}[2]{\ifthenelse{\equal{#2}{}}{\gSF(#1)}{\gSF(#1_{\leq #2})}}
\newcommand{\gainMax}[2]{\ifthenelse{\equal{#2}{}}{\gSF^*(#1)}{\gSF^*(#1_{\leq #2})}}
\newcommand{\gainApp}[2]{\ifthenelse{\equal{#2}{}}{\tilde{\gSF}(#1)}{\tilde{\gSF}(#1_{\leq #2})}}
\newcommand{\gainAppMax}[2]{\ifthenelse{\equal{#2}{}}{\tilde{\gSF}^*(#1)}{\tilde{\gSF}^*(#1_{\leq #2})}}
\newcommand{\pr}[2][]{\Pr_{\ifthenelse{\isempty{#1}}{}{{#1}}}\left[{#2}\right]}
\newcommand{\AND}{\mathsf{AND}}
\newcommand{\Loss}{\mathrm{Risk}}
\newcommand{\remove}[1]{}
\newcommand{\se}{\subseteq}
\newcommand{\set}[1]{\left\{ #1 \right\}}
\newcommand{\norm}[1]{\left\lVert#1\right\rVert}
\newcommand{\Learn}{\mathsf{Lasso}}
\newcommand{\Lasso}{\mathsf{Lasso}}
\newcommand{\R}{{\mathbb R}}
\newcommand{\N}{{\mathbb N}}
\newcommand{\Adv}{\mathsf{Adv}}
\newcommand{\cD}{{\mathcal D}}
\newcommand{\cN}{{\mathcal N}}
\newcommand{\cS}{{\mathcal S}}
\newcommand{\cX}{{\mathcal X}}
\newcommand{\cY}{{\mathcal Y}}
\newcommand{\eps}{\varepsilon}
\newcommand{\Exp}{\operatorname*{\mathbb{E}}}
\newcommand{\Ex}{\Exp}
\newcommand{\Sign}{\operatorname{Sign}}
\newcommand{\Supp}{\operatorname{Supp}}
\newcommand{\argmin}{\operatorname*{argmin}}
\newtheorem{theorem}{Theorem}[section]
\newtheorem{proposition}[theorem]{Proposition}
\newtheorem{lemma}[theorem]{Lemma}
\newtheorem{corollary}[theorem]{Corollary}
\newtheorem{definition}[theorem]{Definition}
\newtheorem{remark}[theorem]{Remark}
\newcommand{\sdotfill}{\textcolor[rgb]{0.8,0.8,0.8}{\dotfill}} 
\def\th@protocol{%
    \normalfont 
    \setbeamercolor{block title example}{bg=orange,fg=white}
    \setbeamercolor{block body example}{bg=orange!20,fg=black}
    \def\inserttheoremblockenv{exampleblock}
  }
\newtheorem{proto}[theorem]{Protocol}
\newtheorem{protoc}[theorem]{Protocol}
\newcommand{\namedref}[2]{#1~\ref{#2}}
\newcommand{\torestate}[3]{%
\expandafter \def \csname BBRESTATE #2 \endcsname{#3}
\theoremstyle{plain}
\newtheorem{BBRESTATETHMNUM#2}[theorem]{#1}
\begin{BBRESTATETHMNUM#2}\label{#2}\csname BBRESTATE #2 \endcsname   \end{BBRESTATETHMNUM#2}
\newtheorem*{BBRESTATETHMNONNUM#2}{\namedref{#1}{#2}}
}
\newcommand{\restate}[1]{\begin{BBRESTATETHMNONNUM#1}[Restated] \csname BBRESTATE #1 \endcsname
\end{BBRESTATETHMNONNUM#1}}
\begin{document}

\maketitle
\begin{abstract}
Poisoning attacks have emerged as a significant security threat to machine learning algorithms. It has been demonstrated that adversaries who make small changes to the training set, such as adding specially crafted data points, can hurt the performance of the output model. Some of the stronger poisoning attacks require the full knowledge of the training data. This leaves open the possibility of achieving the same attack results using poisoning attacks that do not have the full knowledge of the clean training set.
In this work, we initiate a theoretical study of the problem above. Specifically, for the case of feature selection with LASSO, we show that \emph{full information} adversaries (that craft poisoning examples based on the rest of the training data) are provably stronger than the optimal attacker that is \emph{oblivious} to the training set yet has access to the distribution of the data.  Our separation result shows that the two setting of data-aware and data-oblivious are fundamentally different and we cannot hope to always achieve the same attack or defense results in these scenarios. 

\end{abstract}
\tableofcontents
\section{Introduction} \label{sec:intro}

Traditional approaches to supervised machine learning focus on a benign setting where honestly sampled training data is given to a learner. 
However, the broad use of these learning algorithms in safety-critical applications  makes them targets for sophisticated attackers. Consequently, machine learning has gone through a revolution of studying the same problem, but this time under so-called adversarial settings. Researchers have investigated several types of attacks, including  test-time (a.k.a., evasion attacks to find adversarial examples)~\citep{Szegedy:intriguing,Evasion:TestTime,Adversarial::Harnessing, shafahi2018poison}, training-time attacks (a.k.a., poisoning or causative attacks)~\citep{barreno2006can,biggio2012poisoning,papernot2016towards}, backdoor attacks~\citep{turner2018clean,gu2017badnets},  
membership inference attacks \citep{shokri2017membership},
etc. In response, other works have put forth several defenses~\citep{papernot2016distillation,madry2017towards,biggio2018wild} followed by adaptive attacks~\citep{carlini2017towards,athalye2018obfuscated, tramer2020adaptive} that circumvent some of the proposed defenses. Thus, developing   approaches that are based on solid theoretical foundations (that prevent further adaptive attacks) has stood out as an important area of investigation. 

\parag{Poisoning Attacks.} In a poisoning attack, an adversary changes a training set $\cS$ of examples into a ``close'' training set $\cS'$ (The difference is usually measured by Hamming distance; i.e., the number of examples injected and/or removed.). Through these changes, the goal of the adversary, generally speaking, is to degrade the ``quality'' of the learned model, where quality here could be interpreted in different ways. In a recent industrial survey~\cite{kumar2020adversarial}, poisoning attacks were identified as the most important threat model against applications of machine learning. The main reason behind the importance of poisoning attacks are the feasibility of performing the attack for adversary. As the data is usually gathered from multiple sources, the adversary can perform the poisoning attacks by corrupting one of the sources. Hence, it is extremely important to fundamentally understand this threat model. In particular, we need to investigate the role of design choices that are made in both poisoning attacks and defenses. 

\parag{Does the attacker know the training data?} The role of knowledge of the clean training set is one of the less investigated aspects of poisoning attacks. Many previous work on theoretical analysis of poisoning attacks implicitly, or explicitly, assume that the adversary has  full knowledge of the training data $\cS$ before choosing what examples to add or delete from $\cS$~\cite{koh2018stronger,steinhardt2017certified,mahloujifar2018curse, suya2020model}. 
In several natural scenarios, an adversary might not have access to the training data before deciding on how to tamper with it. This has led researchers to study poisoning attacks that do not use the knowledge of the training set to craft the poison points. In this work, we explore the following question:
\begin{displayquote} \textit{What is the role of the knowledge of  training set in the success of poisoning adversaries? Can the knowledge of training set help the attacks? Or alternatively, can hiding the training set from adversaries help the defenses?} \footnote{This question was independently asked as an open question in the concurrent survey of Goldblum et al. \cite{goldblum2020data}.}
 \end{displayquote}

In this work, as a first step to understand this question, we show a separation result between data-oblivious and data-aware poisoning adversaries. In particular, we show that there exist a learning setting (Feature selection with LASSO on Gaussian data) where poisoning adversaries that know the distribution of data but are oblivious to specific training samples that are used to train the model are provably weaker than the adversaries with the knowledge of both training set and the distribution. To the best of our knowledge, this is the first separation result for poisoning attacks.

\paragraph{Implications of our separation result:}  Here, we mention some implications of our separation result.
\begin{itemize}
    \item \textbf{Separation of threat models:} The first implication of our result is the separation of data-oblivious and data-aware poisoning threat models. Our result shows that data-oblivious attacks are strictly weaker than data-aware attacks. In other words, it shows that we cannot expect the defenses to have the same effectiveness in both scenarios. This makes the knowledge of data a very important design choice that should be clearly stated when designing defenses or attacks.
    \item \textbf{Possibility of designing new defenses:} Although data-oblivious poisoning is a weaker attack model, it might still be the right threat model for many applications. For instance, if data providers use cryptographically secure multi-party protocols to train the model \cite{wagh2019securenn}, then each participant can only observe their own data. Note that each party might still have access to some data pool from the true distribution of training set and that still fits in our data-oblivious threat model. In these scenarios, it is natural to use defenses that are only secure against data-oblivious attacks.  Our results shows the possibility of designing defense mechanisms that leverage the secrecy of training data and can provide much stronger security guarantees in this threat mode. In particular, our result shows the provable robustness of LASSO algorithm in defending against data-oblivious attacks. 
    
    Note that this approach is distinct from the demoted notion of ``security through obscurity'' as the attacker knows every detail of the algorithm as well as the data distribution. The only unknown to the adversary is the randomness involved in the process of sampling training examples from the training distribution. This is exactly similar to how secret randomness helps security in cryptography. 
    \item \textbf{A new motive for privacy:} privacy is often viewed as a utility for data owners in the machine learning pipeline. Due to the trade-offs between privacy and the efficiency/utility, data-users often ignore the privacy of data owners while doing their analysis, especially when there is no incentive to enforce the privacy of the learning protocol. The possibility of improving the security against poisoning attacks by enforcing  the (partial) data-obliviousness of the adversary could create a new incentive for keeping training datasets secret. Specifically, the users of data would now have more motivation to try to keep training dataset private, with the goal of securing their models against poisoning and increasing their utility in scenarios where part of data is coming from potentially malicious sources.
\end{itemize}




\subsection{Our Contributions}
In this work, we provide theoretical evidence that obliviousness of attackers to the training data can indeed help robustness against poisoning attacks. In particular, we provide a provable difference between: (i) an adversary that is aware of the training data as well as the distribution of training data,  before launching the attack (data-aware adversary) and (ii) an adversary that only knows the distribution of training data and does not know the specific clean examples in the training set (data-oblivious adversary). 


We start by formalizing what it means mathematically for the poisoning adversary to be data-oblivious or data-aware. 

\parag{Separations for feature selection with Lasso.} 
We then prove a separation theorem between the data-aware and  data-oblivious poisoning threat models in the context of \emph{feature selection}.
We study data-aware and data-oblivious attackers against the Lasso estimator and show that if certain natural properties holds for the distribution of dataset, the power of optimal data-aware and data-oblivious poisoning adversaries differ significantly. 

We emphasize that in our data-oblivious setting, the adversary \emph{fully knows} the \emph{data distribution}, and hence it implicitly has access to a lot of auxiliary information about the data set, yet the very fact that it does not know the \emph{actual} sampled dataset makes it harder for adversary to achieve its goal.


\parag{Experiments.} To further investigate the power of data-oblivious and data-aware attacks in the context of feature selection, we experiment on synthetic datasets sampled from Gaussian distributions, as suggested in our theoretical results. Our experiments confirm our theoretical findings by showing that  the power of data-oblivious and poisoning attacks differ significantly.  Furthermore, we experimentally evaluate the power of \emph{partially-aware} attackers who only know part of the data. These experiments show the gradual improvement of the attack as the knowledge of data grows.

In our experimental studies we go beyond Gaussian setting and show that the the power of data-oblivious attacks could be significantly lower on real world distributions as well. In our experiments, sometimes (depending on the noise nature of the dataset), even an attacker that knows $20\%$ of the dataset cannot have much of improvement over an oblivious attacker.

\parag{Separation for classification.} In addition to our main results  in the context of feature selection, in this work, we also take initial steps to study the role of adversary's knowledge (about the data set) when the goal of the attacker is to increase the risk of the produced model in the context of classification. These results are presented supplemental material (Section \ref{sec:sepRiskproof} and \ref{ref:class_exp}). 
\subsection{Related Work}
Here, we provide a short version of related prior work. A more comprehensive description of previous work has been provided in Appendix \ref{full_related_work} where we also categorize the existing attacks into data-aware and data-oblivious categories.


Beatson et al.~\cite{beatson2016blind} study ``Blind'' attackers against machine learning models that do not even know the distribution of the data. They show that poisoning attacks could be successful in such a restricted setting by studying the minimax risk of learners. They also introduced ``informed'' attacks that see the data distribution, but not the actual training samples and leave the study of these attacks to future work. Interestingly, the ``informed'' setting of \citep{beatson2016blind} is equivalent to the ``oblivious'' setting in our work.

Xiao et al.~\cite{xiao2015feature} empirically examine the robustness of feature selection in the context of poisoning attacks, but their measure of stability is across sets of features. We are distinct in  that our paper studies the effect of data-oblivious attacks on \textit{individual} features and with provable guarantees.


We distinguish our work with another line of work that studies the computational complexity of the attacker \citep{mahloujifar2018can,garg2019adversarially}. Here, we study the ``information complexity'' of the attack; namely, what information the attacker needs to succeed in a poisoning attack, while those works study the \emph{computational resources} that a poisoning attacker needs to successfully degrade the quality of the learned model. Another recent exciting line of work that studies the computational aspect of robust learning in poisoning contexts, focuses on the computational complexity of the \emph{learning} process itself \citep{diakonikolas2016robust,lai2016agnostic,charikar2017learning,diakonikolas2017statistical,diakonikolas2018list,diakonikolas2018sever,prasad2018robust,diakonikolas2018efficient}, and other works have studied the same question about the complexity of the learning process for evasion attacks \citep{bubeck2018adversarial1,bubeck2018adversarial2,degwekar2019computational}. Furthermore,   our work deals with information complexity and  is distinct from  works that study the impact of the training set (e.g., using clean labels) on the success of poisoning \cite{shafahi2018poison,zhu2019transferable,suciu2018does,turner2018clean}.

Our work's motivation for data secrecy  might seem similar to other works that leverage privacy-preserving learning (and in particular differential privacy \citep{dinur2003revealing,TCC:DMNS06,dwork2008differential}) to limit the power of poisoning attacks by making the learning process less sensitive to poison data \citep{ma2019data}. However, despite seeming similarity, what we pursue here is fundamentally different. In this work, we try to understand the effect of keeping the data secret from adversaries. Whereas the robustness guarantees that come from differential privacy has nothing to do with secrecy and hold even if the adversary gets to see the full training set (or even select the whole training set in an adversarial way.).

We also point out some separation results in the context of adversarial examples. The work of Bubeck et al.~\cite{bubeck2018adversarial} studies the separation in the power of \emph{computationally bounded} v.s. \emph{computationally unbounded} learning algorithms in learning robust model. Tsipras et al.~\cite{tsipras2018robustness} studies the separation between \emph{benign accuracy} and \emph{robust accuracy} of classifiers showing that they can be even at odds with each other. Schmidt et al.~\cite{schmidt2018adversarially} show the separation between sample complexity of learning algorithms in training an adversarially robust model versus a model with high benign accuracy. Garg et al.~\cite{garg2019adversarially} separate the notions of \emph{computationally bounded} v.s. \emph{computationally unbounded} attacks in successfully generating adversarial examples. Although all these results are only proven for few (perhaps unrealistic) settings, they still significantly helped the understanding of adversarial examples.

As opposed to the data poisoning setting, the question of adversary's (adaptive) knowledge was indeed previously studied in the line of work on adversarial examples \citep{Adversarial:Old,Adversarial:models,Szegedy:intriguing}. 
In a test time evasion attack the adversary's goal is to find an adversarial example, the adversary knows the input $x$ \emph{entirely} before trying to find a close input $x'$   that is misclassified. So, this adaptivity aspect already differentiates adversarial examples from random noise. 

\SanNote{Expand on this or say that this is open for future work. }



\section{ Defining Threat Models: Data-oblivious and Data-aware Poisoning} \label{sec:Data-obliviousVsFullInfo}
In this section, we formally define the security games of  learning systems under \emph{data-oblivious} poisoning attacks. It is common in cryptography to define security model based on a game between an adversary and a challenger \citep{KatzLiBook}. Here, we use the same approach and introduce  game based definitions for data-oblivious and data-aware adversaries. 

\paragraph{Feature selection.} The focus of this work is mostly on the feature selection  which is a significant task in machine learning. In a feature selection problem, the learning algorithm wants to discover the relevant features that determine the ground truth function. For example, imagine a dataset of patients with many features, who suffer from a specific disease with different levels of severity. One can try to find the most important features contributing to the severity of the disease in the context of feature selection. Specifically, the learners' goal is to recover a vector $\theta^* \in \R^d$ whose non-zero coordinates determine the relevant features contributing to the disease. In this scenario, the goal of the adversary is to deceit the learning process and make it output a model $\hat{\theta}' \in \R^d$ with a different set of non-zero coordinates.  
As motivation for studying feature selection under adversarial perturbations, note that the non-zero coordinates of the learned model 
could be related to a sensitive subject. For example, in the patient data example described in the introduction, the adversary might be a pharmaceutical institute who tries to imply that a non-relevant feature is contributing to the disease, in order to advertise for a specific medicine.  

We start by separating the \emph{goal} of a poisoning attack from \emph{how} the adversary achieves the goal. The setting of an \emph{data-oblivious} attack deals with the latter, namely it is about how the attack is done, and this aspect is orthogonal to the goal of the attack. In a nutshell, many previous works on data poisoning deal with increasing the population risk of the produced model (see Definition \ref{security:PopRisk} below and Section \ref{sec:defDetails} for more details and variants of such attacks). In a different line of work, when the goal of the learning process is to recover a set of features (a.k.a., model recovery) the goal of an attacker would be defined to counter the goal of the feature selection, namely to add or remove features from the correct model.

In what follows, we describe the security games for a feature selection task. We give this definition for a basic reference setting in which the data-oblivious attacker injects data into the data set, and its goal is to change the selected features. (See Section \ref{sec:defDetails}  for more variants of the attack.)  Later, in Section~\ref{sec:seperation} we will see how to construct problem instances (by defining their data distributions) that provably separate the power of data-oblivious attacks from data-aware ones.

\paragraph{Notation.} We first define some useful notation. For an arbitrary vector $\theta\in \R^d$ we use $\Supp(\theta)=\set{i \colon \theta_i\neq 0}$, we denote the set of (indices of) its non-zero coordinates. We use capital letters (e.g $X$) to denote sets and calligraphic letters (e.g. $\cX$) to denote distributions. $(\cX,\cY)$ denotes the joint distribution of $\cX$ and $\cY$ and $\cX_1\equiv \cX_2$ denotes the equivalence of two distributions $\cX_1$ and $\cX_2$. We use $\norm{\theta}_2$ and $\norm{\theta}$ to denote the $\ell_2$ and $\ell_1$ norms of $\theta$ respectively. For two matrices $X\in R^{n\times d}$ and $Y\in R^{n\times 1}$, we use $\concat{X}{Y}\in R^{n\times (d+1)}$ to denote a set of $n$ regression observations on feature vectors $X_{i\in [n]}$ such that $Y_i$ is the real-valued observation for $X_i$. For two matrices $X_1\in \R^{n_1 \times d}$ and $X_2 \in \R^{n_2 \times d}$, we use $\concatv{X_1}{X_2}\in \R^{(n_1 + n_2)\times d}$ to denote the concatenation of $X_1$ and $X_2$. Similarly, for two set of observations $\concat{X_1}{Y_1}\in \R^{n_1\times (d+1)}$ and $\concat{X_2}{Y_2}\in \R^{n_2\times (d+1)}$, we use $\concatt{X_1}{Y_1}{X_2}{Y_2} \in \R^{(n_1+n_2)\times(d+1)}$ to denote the concatenation of $\concat{X_1}{Y_1}$ and $\concat{X_2}{Y_2}$. For a security game $G$ and an adversary $A$ we use $\Adv(A,G)$ (advantage of adversary $A$ in game $G)$ to denote probability of adversary $A$ winning the security game $G$, where the probability is taken over the randomness of the game and adversary.

Since the security games for data-aware and data-oblivious games are close, we use Definition \ref{security:Feature} below for both, while we specify their exact differences.

\begin{definition}[Data-oblivious and data-aware data injection poisoning for feature selection] \label{security:Feature} 
We first describe the \emph{data-oblivious} security game  between a challenger $C$ and an adversary $A$. The game is parameterized by the adversary's budget $k$ and the training data $\cS=\concat{X}{Y}$ which is a matrix $X$ and a set of labels $Y$, and the feature selection algorithm $\LearnFtr$. 

\noindent~~~$\OblFt(k, \cD,\LearnFtr,n)$.
\begin{compactenum}

    \item Knowing the algorithm $\LearnFtr$ and distribution $\cD$ supported on $\R^{d+1}$, and given $k$ as input, the adversary $A$ generates a poisoning dataset  $\concat{X'}{Y'} \in [-1,1]^{k\times (d+1)}$ of size $k$ such that each row has $\ell_1$ norm at most 1  and sends it to $C$.
    
    \item $C$ samples a dataset $\concat{X}{Y}\gets \cD^n$
    \item $C$ recovers models $\hat{\theta}=\LearnFtr(\concat{X}{Y})$
using the clean data and $\hat{\theta}'=\LearnFtr\left(\concatt{X}{Y}{X'}{Y'}\right)$   using the  {poisoned} data.
    \item Adversary wins if $\Supp(\hat{\theta})\neq \Supp(\hat{\theta}')$, and we use the following notation to denote the winning:
    $$ \OblFt(A, k, \cD,\LearnFtr,n)=1. $$
\end{compactenum}
In the  security game for \emph{data-aware} attackers, all the steps are the same as above, except that the order of steps 1 and 2 are different. Namely, challenger first samples and sends the dataset to adversary.

\noindent~~~$\FullFt(k, \cD,\LearnFtr,n)$.
\begin{compactenum}
\item $C$ samples   $\concat{X}{Y}\gets \cD^n$ and sends it $A$.
\item Knowing the algorithm $\LearnFtr$ and distribution $\cD$ supported on $\R^{d+1}$, the dataset $\concat{X}{Y}$, and given $k$ as input, the adversary $A$ generates a poisoning dataset  $\concat{X'}{Y'} \in [-1,1]^{k\times (d+1)}$  of size $k$ such that each row $\concat{X'}{Y'}$ has $\ell_1$ norm at most 1 and sends it to $C$.
    \item $C$ recovers models $\hat{\theta}=\LearnFtr(\concat{X}{Y})$
using the clean data and $\hat{\theta}'=\LearnFtr\left(\concatt{X}{Y}{X'}{Y'}\right)$   using the  {poisoned} data.
    \item Adversary wins if $\Supp(\hat{\theta})\neq \Supp(\hat{\theta}')$, and we use the following notation to denote the winning:
$$ \FullFt(A, k, \cD,\LearnFtr,n)=1. $$
\end{compactenum}
\end{definition}


\paragraph{Variations of security games for Definition~\ref{security:Feature}.} Definition~\ref{security:Feature} is written only for the case of feature-flipping attacks by only injecting poison data. 
One can, however, envision variants by changing the adversary's goal and how it is doing the poisoning attack. In particular, one can define more specific goals for the attacker to violate the feature selection, by  aiming to add or remove non-zero coordinates to the recovered model compared to the ground truth.\footnote{In fact, one can even define \emph{targeted} variants in which the adversary even picks the feature that it wants to add/remove or flip.} In addition, it is also possible to change the method of the adversary to employ data elimination or substitution attacks. 

One can also imagine \emph{partial-information} attackers who are exposed to a fraction of the data set $\cS$ (e.g., by being offered the knowledge of a  randomly selected $p$ fraction of the rows of $[X|Y]$. Our experiments deal with this very setting.

\paragraph{Why bounding the norm of the poison points?} When bounding the number of poison points, it is important to bound  the norm of the poisoning points according to some threshold (e.g. through a clipping operation) otherwise a single poison point can have infinitely large effect on the trained model. By bounding the $\ell_1$ norm of the poison data, we make sure that a single poison point has a bounded effect on the objective function and cannot play the role of a large dataset. We could remove this constraint from the security game and enforce it in the algorithm through a clipping operation but we keep it as a part of definition to emphasize on this aspect of the security game. Note that in this work we always assume that the data is centered around zero. That is why we only use a constraint on the norm of the poison data points.  However, the security game could be generalized by replacing the $\ell_2$ norm constraint with an arbitrary filter $F$ for different scenarios.

\paragraph{Why using $\hat{\theta}$ instead of $\theta$.}
Note that in security games of Definition \ref{security:Feature} we do not use the \emph{real} model $\theta$ (or more accurately its set of features $\Supp(\theta)$), but rather we work with $\Supp(\hat{\theta})$. That is because, we will work with promised data sets for which $\LearnFtr$ provably recovers the true set of features $\Supp(\hat{\theta}) = \Supp({\theta})$. This could be guaranteed, e.g., by putting conditions  on the  data. 

\paragraph{Why injecting the poison data to the end?} Note that in security games of Definition \ref{security:Feature}, we are simply injecting the poison examples to the \emph{end} of the training sequence defined by $X,Y$, instead of asking the adversary to pick their locations. That is only for simplicity, and the definition is implicitly assuming that the feature selection algorithm is symmetric with respect to the order of the elements int the data set (e.g., this is so for  $\Lasso$ estimator). However, one can generalize the definition directly to allow the adversary to pick the specific location of the added elements. 

\section{Separating   Data-oblivious and Data-aware Poisoning for Feature Selection} \label{sec:seperation}
In this section, we provably demonstrate that the power of data-oblivious and data-aware adversaries could  significantly differ. Specifically, we study the power of poisoning attacks on feature selection.

\parag{Feature selection by the Lasso estimator.} We work in the feature selection setting, and the exact format of our problem is as follows. There is a target parameter vector $\theta^*\in (0,1)^d$. We have a $n \times d$ matrix $X$ ($n$ vectors, each of $d$ features)
and we have $Y=X\times \theta^* + W$ where $W$ itself is a small noise, and $Y$ is the vector of noisy observations about  $\theta^*$, where the number of non-zero elements (denoting the actual relevant features) in $\theta^*$ is bounded by $s$ namely, $|\Supp(\theta^*)|\leq s$. 
The goal of the feature selection is to find a model $\hat{\theta}$, given $\concat{X}{Y}$, such that $\Supp(\hat{\theta})=\Supp(\theta^*).$   

The Lasso Estimator tries to learn $\theta^*$ by optimizing the regularized loss with regularization parameter $\lambda$ and obtain the solution $\hat{\theta}_\lambda$  as
$$\hat{\theta}_\lambda = \argmin_{\theta\in {(0,1)^d}}{\frac{1}{n}\cdot\norm{Y-X\times\theta}_2^2 + \frac{2\lambda}{n}\cdot \norm{\theta}_1}.$$
We use $\Learn(\concat{X}{Y},\lambda)$ to denote $\hat{\theta}_\lambda$, as learned by the Lasso optimization described above. When we $\lambda$ is clear from the context, we use $\Learn(\concat{X}{Y})$ and $\hat{\theta}$.

We also use $\Loss(\htet,\concat{X}{Y}, \lambda)$ (and $\Loss(\htet,\concat{X}{Y}, \lambda)$ when $\lambda$ is clear from the context) to denote the ``scaled up'' value of the Lasso's objective function  
$$\Loss(\htet,\concat{X}{Y})= \norm{Y-X\times\htet}_2^2 +2\cdot\lambda\cdot \norm{\htet}_1.$$

It is known by a work of  Wainwright~\cite{wainwright2009sharp} that under proper conditions Lasso estimator can recover the correct feature vector (See Theorems \ref{thm:LassoRecoveryGen} and \ref{thm:LassoRecovery} in Appendix \ref{sec:gen_lasso} for more details.) The robust version of this result, where part of the training data is chosen by an adversary, is also studied in Thakurta et al. \cite{thakurta2013differentially}. (See Theorems \ref{thm:LassoRobustRecovery} and \ref{thm:LassoRobustRecoveryGen} in Appendix \ref{sec:gen_lasso} for more details.)  However, the robust version considers robustness against data-aware adversaries that can see the dataset and select the poisoning points based on the rest of training data. In the following theorem, we show that the robustness against data-oblivious adversaries could be much higher than robustness against data-aware adversaries.

\parag{Separation for feature selection.}
We prove the existence of a feature selection problem such that, with high probability, it stays secure in the data-oblivious attack model of Definition \ref{security:Feature}, while the same problem's setting is highly vulnerable to poisoning adversaries as defined in the data-aware threat model of Definition \ref{security:Feature}. We use Lasso estimator for proving our separation result.

\newcommand{\Normal}{\mathcal{N}}
\begin{theorem}\label{thm:main}
For any $k\in \N$ and $\eps_1< \eps_2 \in(0,1)$, there exist an $n,d\in\N$, $\sigma\in \R$ and $\theta^*\in \R^d$ such that the distribution $\cD\equiv(\cX,\cY)$ for $\cX\equiv\Normal(0,\sigma^2)^{n\times d}$ and $\cY\equiv X\times \theta^* + \Normal(0,1/4)$ is recoverable using Lasso estimator, meaning that with high probability over the randomness of sampling a dataset  $\concat{X}{Y}\gets \cD^n$ we have 
$$\Supp(\Lasso(\concat{X}{Y}) = \Supp(\theta^*),$$
while the advantage of any  data-oblivious adversary in changing the support set is at most $\eps_1$. Namely for any data-oblivious adversary $A$ we have

$$\Ex_{\substack{\cS\gets D}}\Big[\OblFt(A,k, \cD, \Lasso,n)\Big] \leq \eps_1$$

 On the other hand, there is an adversary that can win the data-aware security game with probability at least $\eps_2$. Namely, there is an data-aware adversary $A$ such that
  $$\Ex_{\substack{\cS\gets D}}\Big[\FullFt(A,k, \cD, \Lasso,n)\Big] \geq \eps_2.$$
\end{theorem}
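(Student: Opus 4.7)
My overall plan is to pick a Gaussian design in which Lasso, on clean data, recovers $\Supp(\theta^*)$ with large margin in the KKT dual certificate, but in which the empirical correlation between one \emph{null} feature (say coordinate $d$ with $\theta^*_d=0$) and the observation noise $W$ is non-trivial. The data-aware adversary will read this empirical correlation off the sample and use its $k$ bounded-norm points to ``push'' feature $d$ across the Lasso activation threshold; the data-oblivious adversary, lacking access to the realized noise, will only be able to affect that threshold by a fixed deterministic shift, which cannot beat the randomness in $\langle X_{\cdot,d},W\rangle$.

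The key steps are as follows. First, I would fix $\theta^*$ so that $|\Supp(\theta^*)|=s$ with all nonzero coordinates of moderate magnitude, and pick $\sigma,n,d$ and the Lasso parameter $\lambda$ using Wainwright's conditions (Theorems~\ref{thm:LassoRecoveryGen}/\ref{thm:LassoRecovery}) so that clean recovery holds with probability $\geq 1-\eps_1/4$, giving the first bullet. Next, for the data-aware side, I would invoke the KKT/subgradient characterization of Lasso: support is preserved iff $\|\tfrac{1}{n}X_{\cdot,j}^\top(Y-X\hat\theta)\|_\infty<\lambda$ for $j\notin\Supp(\theta^*)$. The attacker can examine $(X,Y)$, compute the residual on coordinate $d$, and craft $k$ injected rows of the form $(x'_i,y'_i)$ with $x'_i$ a scaled standard basis vector $e_d$ and $y'_i=\pm 1$ aligned with the observed sign of the residual. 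A direct calculation shows this increases the dual quantity at $j=d$ by a deterministic amount $\Omega(k/n)$, and with constant probability the clean residual on coordinate $d$ already sits in a band within this distance of $\lambda$; choosing parameters so this probability is at least $\eps_2$ yields the data-aware advantage.

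For the data-oblivious side, the adversary must commit to a fixed $(X',Y')$ before seeing $(X,Y)$. I would analyze the perturbed KKT condition for the composite dataset $\bigl[\begin{smallmatrix}X\\X'\end{smallmatrix}\bigr]$ and show it still factors into a deterministic drift term (depending only on $X',Y',\theta^*$, and moments of $\cD$) plus a random term whose distribution is essentially unchanged by the $k$ bounded-norm injections when $k\ll n$. Then by Gaussian anti-concentration, for \emph{any} fixed drift the event ``support flips'' occurs with probability $\leq \eps_1$, uniformly over the choice of $(X',Y')$. Concretely, I would apply the robust recovery guarantee of Thakurta et al.\ (Theorems~\ref{thm:LassoRobustRecovery}/\ref{thm:LassoRobustRecoveryGen}) to the \emph{expected} injection (a deterministic dataset) and then use a sub-Gaussian concentration / Lévy-type argument to compare the oblivious injection to its expectation.

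The main obstacle will be choosing parameters $(n,d,s,\sigma,\lambda,k)$ to open a genuine gap $\eps_1<\eps_2$. I need clean recovery to be robust enough to survive any fixed $k$-point perturbation (forcing the ``margin'' on null coordinates to exceed the worst-case deterministic shift of the dual quantity under $k$ unit-norm points), while still being fragile enough that an adaptive $k$-point injection aligned with the empirical residual flips a null coordinate with non-trivial probability. The delicate balance is that the clean dual quantity at a null coordinate has standard deviation $\Theta(\sigma/\sqrt{n})$, the deterministic shift achievable by $k$ bounded-norm points is $\Theta(k/n)$, and the adaptive shift can be as large as $\Theta(k/\sqrt{n})$ by aligning with the sign of the realized residual; picking $\lambda$ to lie at the right percentile of the clean dual quantity exploits exactly this $\sqrt{n}$ factor to produce $\eps_1<\eps_2$.
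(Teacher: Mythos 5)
Your proposal correctly identifies several ingredients that the paper also uses: the KKT/subgradient characterization of when a null coordinate enters the support (the paper's Lemma~\ref{lem:1}), the form of the data-aware attack (inject $k$ rows equal to a standard basis vector with label $\pm 1$ aligned with the sign of the realized dual quantity $\alpha_j = X^T[j](Y - X\hat\theta)$), and a Gaussian anti-concentration bound for the oblivious side. However, there is a genuine gap in the mechanism you propose for opening the separation. You target a \emph{single} pre-specified null coordinate and claim the adaptive adversary gains a $\sqrt{n}$ factor in the achievable shift of the dual quantity ``by aligning with the sign of the realized residual.'' That factor is not there: for a fixed coordinate $j$, both an oblivious and an aware injection of $k$ bounded-norm rows shift $\alpha_j$ by the same magnitude $\Theta(k)$; adaptivity only buys the \emph{sign} of the shift. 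Since the clean $\alpha_j$ is symmetric, the oblivious adversary that commits to the same coordinate with a fixed sign succeeds whenever $\alpha_j \geq \lambda - k$ (versus $|\alpha_j| \geq \lambda - k$ for the aware one), i.e., it loses at most a factor of roughly $2$. A constant-factor gap cannot yield the theorem for arbitrary $\eps_1 < \eps_2$ (e.g., $\eps_1 = 0.01$, $\eps_2 = 0.99$).

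The missing idea is amplification over the dimension. In the paper's argument, each of the $d - s$ exchangeable null coordinates is $k$-unstable (its realized $\alpha_j$ lands within $k$ of $\lambda$) with some small probability $p$, essentially independently, so for $d$ large enough \emph{some} unstable coordinate exists with probability $\geq \eps_2$; the data-aware adversary searches the dataset, finds it, and attacks it. The oblivious adversary, by contrast, must commit a poisoning set whose total $\ell_1$ mass across coordinates is bounded by $O(k)$, so it cannot ``invest'' $k$ in every coordinate; by the symmetry of the Gaussian columns and the bound $\Pr[\|a + b\|_\infty > r] \leq 2e^{-(r - \|a\|_1)^2/(2\sigma^2)}$ for fixed $a$ and Gaussian $b$ (the paper's Corollary~\ref{cor:gaussianstrategy}), any fixed injection succeeds with probability at most $\eps_1$ once $\lambda = 2k + \sigma\sqrt{2(n+k)\log(2/\eps_1)}$. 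This is why the theorem quantifies over ``there exist $n, d$'': $d$ is chosen as a function of the per-coordinate instability probability to push the aware adversary's success above $\eps_2$. A secondary issue: invoking the Thakurta et al.\ robust-recovery theorem for the oblivious bound cannot work as the source of separation, since that guarantee is against worst-case (data-aware) injections and would therefore cap the aware adversary as well; the paper uses it only as context and derives the oblivious bound from the bespoke symmetry argument above.
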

\parag{The main idea behind the proof.} To prove the separation, we use the fact that data-oblivious adversaries cannot discriminate between the coordinates that are not in the support set of $\theta^*$. Imagine the distribution of data has a property that with high probability there exists a unique feature that is not in the support set, but it is possible to add that feature to the support set with a few number of poisoning examples. We call such a feature an ``unstable'' feature. Suppose the distribution also has an additional property that each coordinate has the same probability of being the unstable feature. Then, the only way that adversary can find the unstable feature is by looking into the dataset. Otherwise, if the adversary is data-oblivious, it does not have any information about the unstable feature and should attack blindly and pick one of the coordinates at random. On the other hand, the data-aware adversary can investigate the dataset and find the unstable feature. In the rest of this section we formalize this idea by constructing a distribution $D$ that has the properties mentioned above. 

Below we first define the notion of stable and unstable features and then formally define two properties for a distribution $\cD$ that if satisfied, we derive Theorem \ref{thm:main} for it.

\begin{definition} [Stable and unstable coordinates]
Consider a dataset $\concat{X}{Y}\in \R^{n\times(d+1)}$ with a unique solution $\hat{\theta}_\lambda$ for the Lasso minimization. $\concat{X}{Y}$ is $k$-unstable on coordinate $i\in[d]$ 

if the $i\th$ coordinate of the feature vector obtained by running Lasso on $\concat{X}{Y}$ is 0, namely $\Learn\left(\concat{X}{Y}\right)_i =0$, and there exist a data set $\concat{X'}{Y'}$ with size $k$ and $\ell_\infty$ norm at most $1$ on each row such that
$i \in \Supp\left(\Learn\left(\concatt{X}{Y}{X'}{Y'}\right)\right).$
On the other hand, $\concat{X}{Y}$ is $k$-stable on a coordinate $i$, if for all  datasets $\concat{X'}{Y'}$ with $k$ rows and  $\ell_\infty$ norm at most 1 on each row we have $$\Sign(\Learn\left(\concat{X}{Y}\right)_i) = \Sign\left(\Learn\left(\concatt{X}{Y}{X'}{Y'}\right)_i\right).$$
\end{definition}

The following definitions capture two properties of a distribution $D$. The first property states that with high probability over the randomness of $D$, a dataset sampled from $D$ has at least one unstable feature. 

 \begin{definition}[$(k,\delta)$-unstable distributions]\label{def:stable}
 A distribution $D$ is $(k,\eps_2)$-unstable if it is $k$-unstable on at least one coordinate with probability at least $(\eps_2)$. Namely
 $$\Pr_{S\gets D}[\Exists i \in [d]: \text{ The $i\th$ feature is $k$-unstable on $S$}]\geq \eps_2.$$
 \end{definition}
 The following notion defines the resilience of a distribution against a single poison dataset. In a nutshell, a distribution is resilient if there does not exist a universal poisoning set that can be effective against all the datasets coming from that distribution.
\begin{definition} \label{def:resilient}[$(k,\eps)$-resilience]
 A distribution $D$ over $\R^{n\times(d+1)}$ is $(k,\eps)$-resilient if 
for any poisoning dataset $\cS'$ of size $k$ and $\ell_\infty$ norm bounded by $1$ we have
$$\Pr_{\cS\gets D}[\Supp\left(\Lasso\left(\concatv{\cS}{\cS'}\right)\right)\neq\Supp(\Lasso(S))]\leq \eps.$$

\end{definition}
\begin{remark}
Note that Definitions \ref{def:stable} and \ref{def:resilient} have an implicit dependence on $n$, the size of the dataset sampled from the distribution that we omit from the notation for simplicity.
\end{remark}
Before constructing a distribution $D$ we first prove the following Proposition about $(k,\delta)$-unstable and $(k,\eps)$-resilient distributions. The proof   can be found in Appendix \ref{proof:prop}
\begin{proposition}[Separation for unstable yet resilient distributions] \label{prop:main}If a data distribution is $(k,\eps_1)$-resilient and $(k,\eps_2)$-unstable, then there is an adversary that wins the data-aware game of definition \ref{security:Feature} with probability $\eps_2$, while no adversary can win the data-oblivious game with probability more than $\eps_1$.
\end{proposition}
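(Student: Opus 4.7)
The proposition has two independent directions, and my plan is to handle each by unwinding the relevant definition and then using a simple conditioning argument to handle randomized adversaries.

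\textbf{Data-aware direction (lower bound $\eps_2$).} I would describe an explicit deterministic strategy for the data-aware adversary $A$. On input $\cS = \concat{X}{Y}$ sampled by the challenger, $A$ examines all coordinates $i \in [d]$ and checks whether any of them is $k$-unstable on $\cS$; this is well-defined since $A$ knows both $\cS$, the algorithm $\Lasso$, and the parameter $\lambda$. If some unstable coordinate $i$ exists, $A$ fixes one (say the lexicographically smallest) and outputs any poisoning set $\cS' = \concat{X'}{Y'}$ of size $k$ with bounded rows that witnesses the instability of $i$; the existence of such a witness is exactly what $k$-instability guarantees. By construction, $\Learn(\cS)_i = 0$ but $i \in \Supp(\Learn(\concatv{\cS}{\cS'}))$, so $\Supp(\hat\theta) \neq \Supp(\hat\theta')$ and $A$ wins. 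By $(k,\eps_2)$-instability of $\cD$, the event that at least one unstable coordinate exists has probability at least $\eps_2$ over $\cS \gets \cD$, which is therefore a lower bound on $A$'s winning probability in $\FullFt$.

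\textbf{Data-oblivious direction (upper bound $\eps_1$).} Let $A$ be an arbitrary (possibly randomized) data-oblivious adversary with internal randomness $r$. In $\OblFt$, the poisoning set $\cS' = \cS'(r)$ that $A$ produces is generated from $\cD$, $\LearnFtr$, $k$, and $r$ alone, i.e., it is sampled \emph{before} and \emph{independently} of the clean dataset $\cS \gets \cD^n$. I would condition on the randomness $r$: for every fixed $r$, the set $\cS'(r)$ is a deterministic poisoning set of size $k$ whose rows satisfy the norm constraint, so $(k,\eps_1)$-resilience of $\cD$ gives
\[
\Pr_{\cS \gets \cD^n}\!\bigl[\Supp(\Lasso(\concatv{\cS}{\cS'(r)})) \neq \Supp(\Lasso(\cS))\bigr] \leq \eps_1.
\]
Averaging over $r$ (law of total probability, using independence of $r$ and $\cS$) yields $\Adv(A, \OblFt) \leq \eps_1$.

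\textbf{Minor care points.} The only subtleties to spell out are (i) that the norm bound in the security game ($\ell_1 \leq 1$ per row) is consistent with the bound used in the instability/resilience definitions (so that a witness to instability is a legal move for the adversary, and conversely any legal adversarial move is covered by the resilience guarantee), and (ii) that the independence of $\cS'$ and $\cS$ in the oblivious game is exactly what lets resilience—stated for a single fixed $\cS'$—control a randomized adversary. Neither is a real obstacle; the bulk of the work in the overall argument is not here but in the subsequent construction of a distribution $\cD$ that is simultaneously $(k,\eps_1)$-resilient and $(k,\eps_2)$-unstable, which Proposition \ref{prop:main} itself reduces away.
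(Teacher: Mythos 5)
Your proposal is correct and follows essentially the same route as the paper's own proof: the data-aware adversary locates an unstable coordinate and plays its witness (winning whenever one exists, i.e., with probability at least $\eps_2$), while the oblivious bound follows from applying resilience to each fixed poison set and averaging over the adversary's randomness. Your write-up is in fact more careful than the paper's, in particular in explicitly conditioning on the adversary's coins and in flagging the mismatch between the $\ell_1$ row bound in the security game and the $\ell_\infty$ bound in the stability/resilience definitions, which the paper glosses over.
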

\vspace{-14pt}
\subsection{(In)Stability and Resilience of Gaussian}\label{subsec:construction}

The only thing that remains to prove Theorem \ref{thm:main} is to show that Gaussian distributions with proper parameters are $(k,\eps_2)$-unstable and $(k,\eps_1)$-resilient at the same time.  Here we sketch the two steps we take to prove this.

\parag{Gaussian is Unstable.}  We first show that each feature in the Gaussian sampling process has a probability of being k-unstable that is proportional to $e^{\lambda-k}$. Note that the unstability of $i$-th feature is independent from all other features and also note that the probability is independent of $d$. This shows that, if $d$ is chosen large enough, with high probability, there will be at least one coordinate that is $k$-unstable. However, note that the probability of a particular feature being unstable is still low and we are only leveraging the large dimensionality to increase the chance of having an unstable feature. Roughly, if we select $d=\omega(\eps_2/\eps_1)$, we can make sure that the ratio of the success rate between data-aware and data-oblivious adversary is what we need. The only thing that remains is to select $n,\lambda$ and $\sigma$ in a way that the data oblivious adversary has success rate of at most $\eps_1$ and at least $\Omega(\eps_1)$. 

This result actually shows the tightness of the robustness theorem in \citep{thakurta2013differentially} (See Theorem \ref{thm:LassoRobustRecoveryGen} for the full description of this result). The authors in \citep{thakurta2013differentially} show that running Lasso on Gaussian distribution can recover the correct support set, and is even robust to a certain number of adversarial entries. Our result complements theirs and shows that their theorem is indeed tight. Note that the robustness result of \citep{thakurta2013differentially} is against dataset-aware attacks. In the next step, we show a stronger robustness guarantee for data-oblivious attacks in order to prove our separation result. See Appendix \ref{proof:prop} for a formalization of this argument.

\parag{Gaussian is Resilient.}
We show the LASSO is resilient when applied on Gaussian of any dimension. In particular, we show that if the adversary aims at adding a feature to the support set of the model, it should ``invest'' in that feature meaning that the $l_2$ weight on that feature should be high across all the poison entries. The bound on the $l_2$ norm of each entry will prevent the adversary to invest on all features and  therefore, the adversary has to predict which features will be unstable and invest in them. On the other hand, since Gaussian is symmetric, each feature has the same probability of being unstable and  the adversary will have a small chance of succeeding. In a nutshell, by selecting $\lambda=\Omega(k+\sigma\sqrt{(n+k)\ln(1/\eps_1)})$ we can make sure that the success probability of the oblivious adversary is bounded by $\eps_1$. This argument is formalized in Appendix~\ref{proof:prop}. 

\subsection{Experiments}\label{sec:experiments}
In this section, we highlight our experimental findings on both synthetic and real data to compare the power of data-oblivious and data-aware poisoning attacks in the context of feature selection. Our  experiments  empirically support our separation result in Theorem \ref{thm:main}. 

\paragraph{Attacking a specific feature:} For our experiments on feature selection we use the following attack similar to the one described in Section \ref{sec:optimal-attack}. 
To attack a feature $i$ with $k$ examples, we use a dataset $\cS'=\concat{X'}{Y
 '}$ as follows:
\begin{align*}
    X' = \left[
    \begin{array}{cccc}
         0 & \dots & 1 & 0\\
         \vdots&\ddots &\vdots &\vdots\\
         0 & \dots & 1 & 0
    \end{array}
    \right] \in \mathbb{R}^{k \times d} 
    , 
    Y' = \left[
    \begin{array}{c}
          1\\
         \vdots\\
         1
    \end{array}
    \right] \in \mathbb{R}^{k \times 1}.
\end{align*}
The attack then adds $S'$ to the training set. Note that this attack is oblivious as it does not use the knowledge of the clean training set.



\paragraph{How to select the feature to attack?}
In total, the attack aims to find the feature that requires the minimum number of rows to add to $\Supp(\hat{\theta}),$ where $\hat{\theta}$ is the learned parameter vector from Lasso. From Section \ref{sec:optimal-attack} we know that the attack above is almost optimal for adding a specific feature. However, the attack still has to decide which feature to attack as some features are much more unstable than others. A question that might arise here is why  the power of data-oblivious and data-aware adversaries would change if the same underlying attack is used by both of them. The key here is that the data-aware adversary can search for the most vulnerable feature and target that feature specifically. However, the data-oblivious adversary has to choose the features that will \emph{probably} be unstable. Namely, the attack should sample multiple dataset from the distribution and identify which features get unstable more frequently. However, there could be a lot of entropy in the instability of different features which makes the job of oblivious adversary hard. 

We provide experiments on Gaussian synthetic data and experiments with real datasets. In both sets of experiments, we simulate the power of an adversary that is completely oblivious to the dataset all the way up to a full-information adversary. We change the power of partial information adversaries that only have $p\%$ information about the dataset at regular intervals in between $0\%$ and $100\%.$

\paragraph{Our partial-knowledge attack:} The attack first explores through the part of data that it has access to and identifies which feature is the most unstable feature.
The key here is that the data-aware adversary can search for the most vulnerable feature in the available data. Then, the attack will use that feature to craft poison points that create maximum correlation between that feature and the response variable. See Appendix \ref{sec:optimal-attack} for more details.

\paragraph{Experiments with Gaussian distribution.} For the synthetic experiment, we demonstrate the separation result occurs for a large dataset sampled from a Gaussian distribution. For $n = 300$ rows and $d = 5 \times 10^5$ features, we demonstrate that unstable features  occur for a dataset drawn from $\mathcal{N}(0, 1)^{n \times d}.$ For the LASSO algorithm, we use the hyperparameter of $\lambda = 2\sigma\sqrt{n\log p}$. We vary the ``knowledge'' the adversary has of the dataset from $p = 0, 5, 10, \dots 95, 100 \%$ by only showing the adversary a random sample of $p\%$ (for $p = 0$, the adversary is completely oblivious and so must choose a feature uniformly at random). The adversary then chooses the most unstable feature out of their $p\%$ of the data and perform the attack on that feature to add it to the $\Supp(\hat{\theta}).$ We observe a clear separation between data-oblivious, data-aware, and partially-aware adversaries in Figure \ref{fig:synthetic_exp}.

\begin{figure}[ht]
     
\centering
    \includegraphics[scale=0.1]{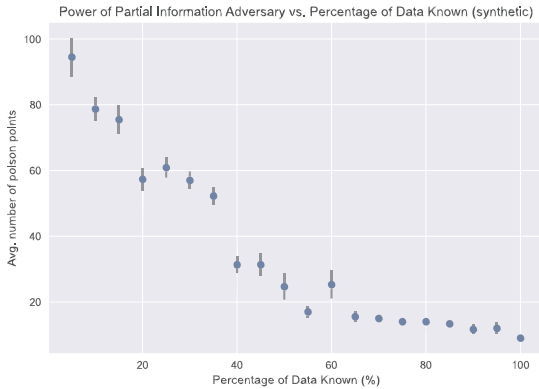}
     \caption{\textit{Synthetic experiment.}\label{fig:synthetic_exp} 
     The y-axis is the average (over 30 random $p\%$ splits of the dataset given to the adversary) number of poison points needed to add the feature to $\hat{\theta}$.The leftmost point shows the power of an oblivious adversary while the rightmost point shows the power of a full-information adversary. The oblivious adversary needs significantly more poison points, on average, to add their uniformly chosen feature to $\Supp(\hat{\theta})$.\vspace{-5pt}\\}
     
    
\end{figure}


\vspace{-20pt}
\paragraph{Experiments with real data.} We also consider MNIST and four other datasets  used widely in the feature selection literature to explore this separation in real world data: Boston, TOX, Protate\_GE, and SMK. \footnote{TOX, SMK, and Prostate\_GE can be found here: \url{http://featureselection.asu.edu/datasets.php}. Boston can be found with scikit-learn's built-in datasets:\\ \url{https://scikit-learn.org/stable/modules/generated/sklearn.datasets.load_boston.html}}

\begin{itemize}
    \item \textbf{Boston.} \cite{boston} (506 examples, 13 features) The task in this dataset is to predict the median value of a house in the Boston, Mass. area, given attributes that describe its location, features, and surrounding area. The outcome variable is continuous in the range $[0, 50].$
    \item \textbf{TOX.} \cite{tox} (171 examples, 5,748 features) The task in this dataset is to predict whether a patient is a myocarditis and dilated cardiomyopathy (DCM) infected male, a DCM infected female, an uninfected male, or an uninfected female. Each feature is a gene, and each example is a patient. The outcome variable is discrete in $\{1, 2, 3, 4\},$ for each of the four possibilities.
    \item \textbf{Prostate\_GE.} \cite{prostate} (102 examples, 5,966 features) The task in this dataset is to predict whether a patient has prostate cancer. Each feature is a gene, and each example is a patient. The outcome variable is binary in $\{0, 1\},$ for cancer or no cancer.
    \item \textbf{SMK.} \cite{tox} (187 examples, 19,993 features) The task in this dataset is to predict whether a smoker has lung cancer or not. Each example is a smoker, and each faeture is a gene. The outcome variable is binary in $\{0, 1\}$ for cancer or no cancer.
    \item \textbf{MNIST} We use MNIST data for feature selection. Each pixel number constitutes a feature and we want to recover a subsst of them that are more relevant.
\end{itemize}


We first preprocess the data by standardizing to zero mean and unit variance. Then, we chose $\lambda$ such that the resulting parameter vector $\hat{\theta}$ has a reasonable support size (at least 10 features in the support); this was done by searching over the space of $\lambda/n \in [0, 1.0],$ and resulted in $\lambda = 50.1$ for Boston, $\lambda = 9.35$ for SMK, $\lambda = 17$ for TOX, $\lambda = 5.1$ for Prostate, and $\lambda = 1000$ for MNIST. Just as in the synthetic experiments, we allow the adversary to have the knowledge of $p = 0, 5, 10, \dots , 95, 100\%$ fraction of the data. Denote the features \textit{not} in $\Supp(\hat{\theta})$ as $\mathcal{G}.$ We attack each feature $i \in \mathcal{G}$ with the same attack as our synthetic experiment, where $X' \in \R^{k \times d}$ and $Y' \in \R^{k \times 1}.$ We plot the average best value of $k$ needed by the adversary to add a feature to $\Supp(\hat{\theta})$ against how much knowledge ($p\%$) of the dataset they have. We show the results for SMK and TOX in Figure \ref{fig:SMK-TOX} and the result for MNIST in Figure \ref{fig:MNIST}. 

\begin{figure}[ht]\label{fig:realworld_exp}
        \begin{subfigure}[b]{0.48\linewidth}

    \includegraphics[scale=0.1]{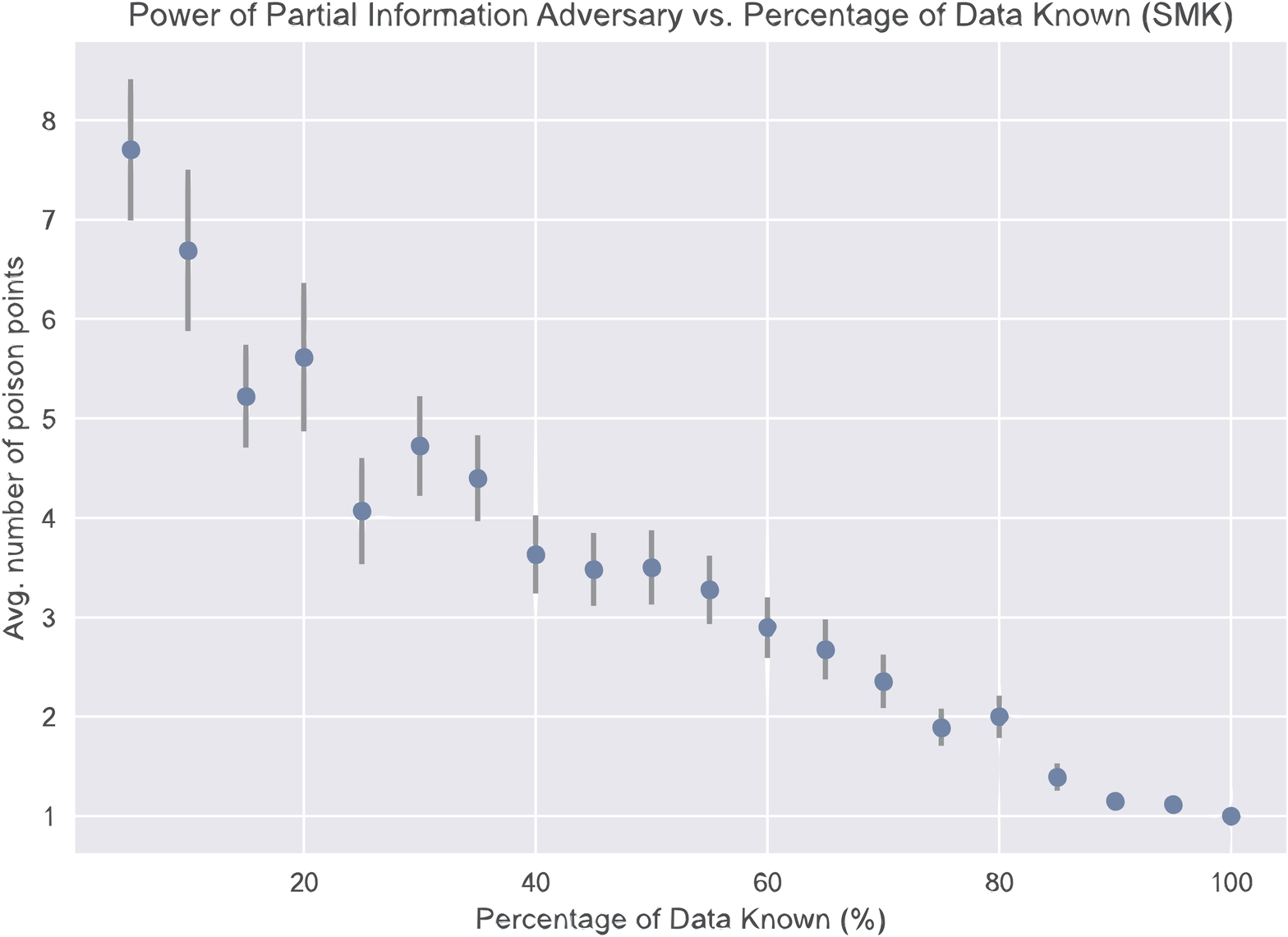}
\end{subfigure}
    \begin{subfigure}[b]{0.48\linewidth}
    \includegraphics[scale=0.1]{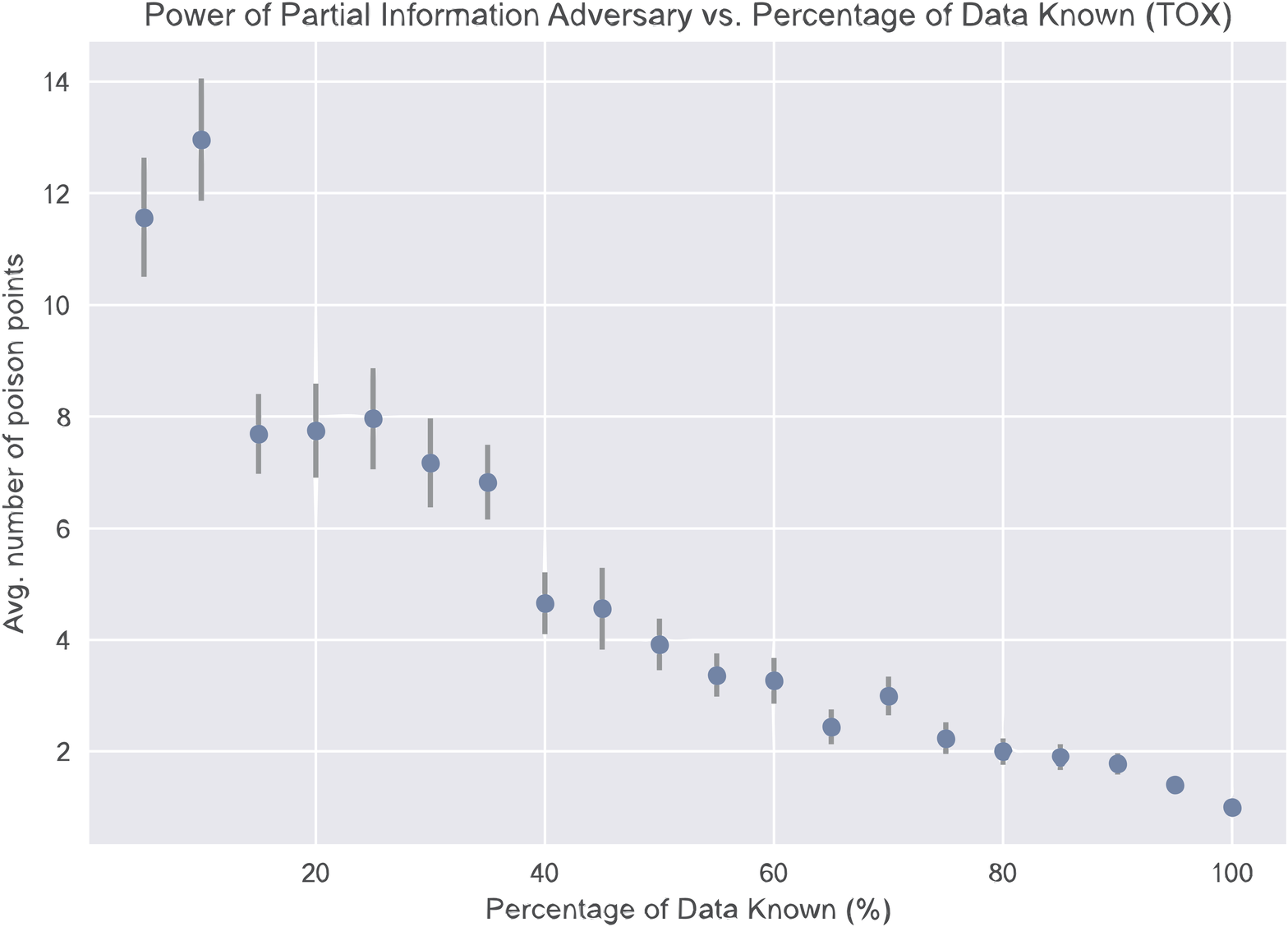}
\end{subfigure}
     \caption{\textit{SMK and TOX Experiments.}\label{fig:SMK-TOX} 
     The behavior of attack on these two datasets is very similar to synthetic experiments. We believe this is because of the noisy nature of these feature selection datasets which causes them to be similar to the Gaussian distribution. Since the noise is large, even given the half of the dataset, the attacker cannot identify the most unstable feature.}
     \begin{subfigure}[b]{0.98\linewidth}
     \centering
     \includegraphics[scale=0.1]{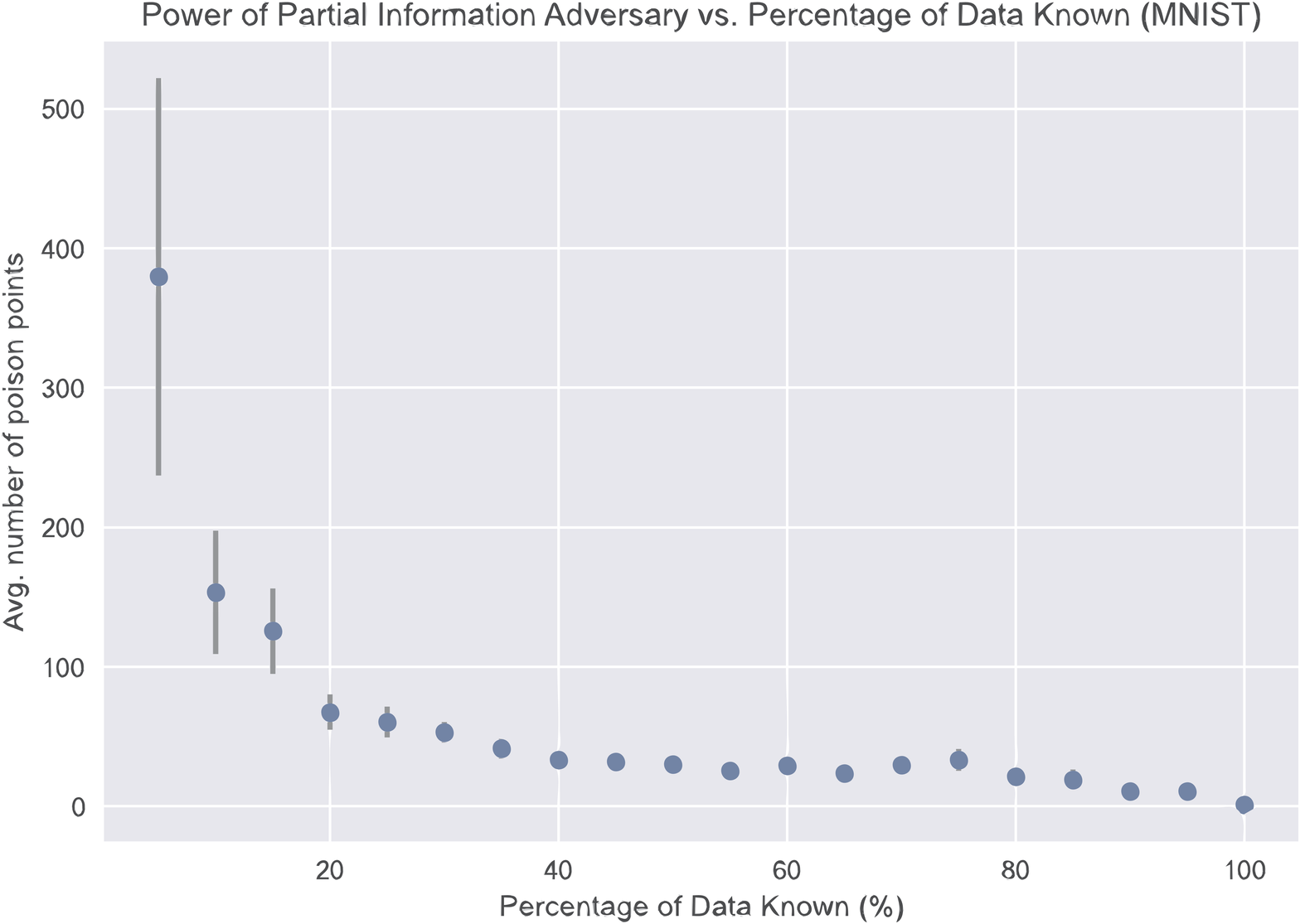}
     \end{subfigure}
     \caption{\textit{MNIST experiments.}\label{fig:MNIST} Compared to other experiments, the number of poison points drops faster as the percentage of data-awareness grows. This can be explained by separability (less noisy nature) of MNIST dataset.\vspace{-10pt}}
\end{figure}

\section{Conclusion} \label{sec:conc}
\vspace{-10pt}
In this paper we initiated a formal study of the power of data-oblivious adversaries who do not have the knowledge of the training set in comparison with data-aware adversaries who know the training data completely before adding poison points to it. Our main result proved a separation between the two threat models  by constructing a sparse linear regression problem. We show that in this natural problem, Lasso estimator is robust against data-oblivious adversaries that aim to add a non-relevant features to the model with a certain poisoning budget. On the other hand, for the same problem, we prove that data-aware adversaries, with the same budget,  can find specific poisoning examples based on the rest of the training data in such a way that they can successfully add non-relevant features to the model. We also experimentally explored the partial-information adversaries who only observe a fraction of the training set and showed that even in this setting, the adversary could be much weaker than full-information adversary. As a result, our work sheds light on an important and yet
subtle aspect of modeling the threat posed by poisoning
adversaries. We, leave open the question of separating different aspects of poisoning threat model including computational power of adversaries, computational power of learners, clean-label nature of adversaries and etc. 
\paragraph{Acknowledgments.}
Mohammad Mahmoody was supported by NSF grants CCF-1910681 and CNS-1936799. Sanjam Garg is supported in part by DARPA under Agreement No. HR00112020026, AFOSR Award FA9550-19-1-0200, NSF CNS Award 1936826, and research grants by the Sloan Foundation, and Visa Inc. The work is partially supported by Air Force Grant FA9550-18-1-0166, the
National Science Foundation (NSF) Grants CCF-FMitF-1836978, IIS-2008559,
SaTC-Frontiers-1804648 and CCF-1652140, and ARO grant number W911NF17-1-0405. Somesh Jha is partially supported by the DARPA GARD problem under agreement number 885000. Any opinions, findings and conclusions or recommendations expressed in this material are those of the author(s) and do not necessarily reflect the views of the United States Government or DARPA.
\bibliography{Biblio/references.bib}

\begin{thebibliography}{10}

\bibitem{aghakhani2020bullseye}
Hojjat Aghakhani, Dongyu Meng, Yu-Xiang Wang, Christopher Kruegel, and Giovanni
  Vigna.
\newblock Bullseye polytope: A scalable clean-label poisoning attack with
  improved transferability.
\newblock {\em arXiv preprint arXiv:2005.00191}, 2020.

\bibitem{athalye2018obfuscated}
Anish Athalye, Nicholas Carlini, and David Wagner.
\newblock Obfuscated gradients give a false sense of security: Circumventing
  defenses to adversarial examples.
\newblock In {\em International Conference on Machine Learning}, 2018.

\bibitem{barreno2006can}
Marco Barreno, Blaine Nelson, Russell Sears, Anthony~D Joseph, and J~Doug
  Tygar.
\newblock Can machine learning be secure?
\newblock In {\em Proceedings of the 2006 ACM Symposium on Information,
  computer and communications security}, pages 16--25. ACM, 2006.

\bibitem{beatson2016blind}
Alex Beatson, Zhaoran Wang, and Han Liu.
\newblock Blind attacks on machine learners.
\newblock In {\em Advances In Neural Information Processing Systems}, pages
  2397--2405, 2016.

\bibitem{bhagoji2018analyzing}
Arjun~Nitin Bhagoji, Supriyo Chakraborty, Prateek Mittal, and Seraphin Calo.
\newblock Analyzing federated learning through an adversarial lens.
\newblock In {\em International Conference on Machine Learning}, pages
  634--643, 2019.

\bibitem{Evasion:TestTime}
Battista Biggio, Igino Corona, Davide Maiorca, Blaine Nelson, Nedim Srndic,
  Pavel Laskov, Giorgio Giacinto, and Fabio Roli.
\newblock {Evasion Attacks against Machine Learning at Test Time}.
\newblock In {\em ECML/PKDD}, pages 387--402, 2013.

\bibitem{biggio2011support}
Battista Biggio, Blaine Nelson, and Pavel Laskov.
\newblock Support vector machines under adversarial label noise.
\newblock In {\em Asian conference on machine learning}, pages 97--112, 2011.

\bibitem{biggio2012poisoning}
Battista Biggio, Blaine Nelson, and Pavel Laskov.
\newblock Poisoning attacks against support vector machines.
\newblock In {\em Proceedings of the 29th International Coference on
  International Conference on Machine Learning}, pages 1467--1474. Omnipress,
  2012.

\bibitem{biggio2018wild}
Battista Biggio and Fabio Roli.
\newblock Wild patterns: Ten years after the rise of adversarial machine
  learning.
\newblock {\em Pattern Recognition}, 84:317--331, 2018.

\bibitem{bubeck2018adversarial2}
S{\'e}bastien Bubeck, Yin~Tat Lee, Eric Price, and Ilya Razenshteyn.
\newblock Adversarial examples from cryptographic pseudo-random generators.
\newblock {\em arXiv preprint arXiv:1811.06418}, 2018.

\bibitem{bubeck2018adversarial1}
S{\'e}bastien Bubeck, Eric Price, and Ilya Razenshteyn.
\newblock Adversarial examples from computational constraints.
\newblock {\em arXiv preprint arXiv:1805.10204}, 2018.

\bibitem{bubeck2018adversarial}
S{\'e}bastien Bubeck, Eric Price, and Ilya Razenshteyn.
\newblock {Adversarial examples from computational constraints}.
\newblock {\em arXiv preprint arXiv:1805.10204}, 2018.

\bibitem{burkard2017analysis}
Cody Burkard and Brent Lagesse.
\newblock Analysis of causative attacks against svms learning from data
  streams.
\newblock In {\em Proceedings of the 3rd ACM on International Workshop on
  Security And Privacy Analytics}, pages 31--36, 2017.

\bibitem{cao2019understanding}
Di~Cao, Shan Chang, Zhijian Lin, Guohua Liu, and Donghong Sun.
\newblock Understanding distributed poisoning attack in federated learning.
\newblock In {\em 2019 IEEE 25th International Conference on Parallel and
  Distributed Systems (ICPADS)}, pages 233--239. IEEE, 2019.

\bibitem{carlini2017towards}
Nicholas Carlini and David Wagner.
\newblock Towards evaluating the robustness of neural networks.
\newblock In {\em IEEE Symposium on Security and Privacy}, pages 39--57. IEEE,
  2017.

\bibitem{charikar2017learning}
Moses Charikar, Jacob Steinhardt, and Gregory Valiant.
\newblock Learning from untrusted data.
\newblock In {\em Proceedings of the 49th Annual ACM SIGACT Symposium on Theory
  of Computing}, pages 47--60. ACM, 2017.

\bibitem{degwekar2019computational}
Akshay Degwekar and Vinod Vaikuntanathan.
\newblock Computational limitations in robust classification and win-win
  results.
\newblock {\em arXiv preprint arXiv:1902.01086}, 2019.

\bibitem{diakonikolas2016robust}
Ilias Diakonikolas, Gautam Kamath, Daniel~M Kane, Jerry Li, Ankur Moitra, and
  Alistair Stewart.
\newblock Robust estimators in high dimensions without the computational
  intractability.
\newblock In {\em Foundations of Computer Science (FOCS), 2016 IEEE 57th Annual
  Symposium on}, pages 655--664. IEEE, 2016.

\bibitem{diakonikolas2018sever}
Ilias Diakonikolas, Gautam Kamath, Daniel~M Kane, Jerry Li, Jacob Steinhardt,
  and Alistair Stewart.
\newblock Sever: A robust meta-algorithm for stochastic optimization.
\newblock {\em arXiv preprint arXiv:1803.02815}, 2018.

\bibitem{diakonikolas2017statistical}
Ilias Diakonikolas, Daniel~M Kane, and Alistair Stewart.
\newblock Statistical query lower bounds for robust estimation of
  high-dimensional {G}aussians and {G}aussian mixtures.
\newblock In {\em Foundations of Computer Science (FOCS), 2017 IEEE 58th Annual
  Symposium on}, pages 73--84. IEEE, 2017.

\bibitem{diakonikolas2018list}
Ilias Diakonikolas, Daniel~M Kane, and Alistair Stewart.
\newblock List-decodable robust mean estimation and learning mixtures of
  spherical {G}aussians.
\newblock In {\em Proceedings of the 50th Annual ACM SIGACT Symposium on Theory
  of Computing}, pages 1047--1060. ACM, 2018.

\bibitem{diakonikolas2018efficient}
Ilias Diakonikolas, Weihao Kong, and Alistair Stewart.
\newblock Efficient algorithms and lower bounds for robust linear regression.
\newblock {\em arXiv preprint arXiv:1806.00040}, 2018.

\bibitem{dinur2003revealing}
Irit Dinur and Kobbi Nissim.
\newblock Revealing information while preserving privacy.
\newblock In {\em Proceedings of the twenty-second ACM SIGMOD-SIGACT-SIGART
  symposium on Principles of database systems}, pages 202--210, 2003.

\bibitem{diochnos2019lower}
Dimitrios~I Diochnos, Saeed Mahloujifar, and Mohammad Mahmoody.
\newblock Lower bounds for adversarially robust pac learning.
\newblock {\em arXiv preprint arXiv:1906.05815}, 2019.

\bibitem{dwork2008differential}
Cynthia Dwork.
\newblock Differential privacy: A survey of results.
\newblock In {\em International conference on theory and applications of models
  of computation}, pages 1--19. Springer, 2008.

\bibitem{TCC:DMNS06}
Cynthia Dwork, Frank McSherry, Kobbi Nissim, and Adam Smith.
\newblock Calibrating noise to sensitivity in private data analysis.
\newblock In {\em Theory of cryptography conference}, pages 265--284. Springer,
  2006.

\bibitem{fang2020influence}
Minghong Fang, Neil~Zhenqiang Gong, and Jia Liu.
\newblock Influence function based data poisoning attacks to top-n recommender
  systems.
\newblock In {\em Proceedings of The Web Conference 2020}, pages 3019--3025,
  2020.

\bibitem{fung2018mitigating}
Clement Fung, Chris~JM Yoon, and Ivan Beschastnikh.
\newblock Mitigating sybils in federated learning poisoning.
\newblock {\em arXiv preprint arXiv:1808.04866}, 2018.

\bibitem{garg2019adversarially}
Sanjam Garg, Somesh Jha, Saeed Mahloujifar, and Mohammad Mahmoody.
\newblock Adversarially robust learning could leverage computational hardness.
\newblock {\em arXiv preprint arXiv:1905.11564}, 2019.

\bibitem{geiping2020witches}
Jonas Geiping, Liam Fowl, W~Ronny Huang, Wojciech Czaja, Gavin Taylor, Michael
  Moeller, and Tom Goldstein.
\newblock Witches' brew: Industrial scale data poisoning via gradient matching.
\newblock {\em arXiv preprint arXiv:2009.02276}, 2020.

\bibitem{goldblum2020data}
Micah Goldblum, Dimitris Tsipras, Chulin Xie, Xinyun Chen, Avi Schwarzschild,
  Dawn Song, Aleksander Madry, Bo~Li, and Tom Goldstein.
\newblock Data security for machine learning: Data poisoning, backdoor attacks,
  and defenses.
\newblock {\em arXiv preprint arXiv:2012.10544}, 2020.

\bibitem{tox}
T.~R. Golub, D.~K. Slonim, P.~Tamayo, C.~Huard, M.~Gaasenbeek, J.~P. Mesirov,
  H.~Coller, M.~L. Loh, J.~R. Downing, M.~A. Caligiuri, and et~al.
\newblock Molecular classification of cancer: Class discovery and class
  prediction by gene expression monitoring.
\newblock {\em Science}, 286(5439):531–537, 1999.

\bibitem{prostate}
T.~R. Golub, D.~K. Slonim, P.~Tamayo, C.~Huard, M.~Gaasenbeek, J.~P. Mesirov,
  H.~Coller, M.~L. Loh, J.~R. Downing, M.~A. Caligiuri, and et~al.
\newblock Molecular classification of cancer: Class discovery and class
  prediction by gene expression monitoring.
\newblock {\em Science}, 286(5439):531–537, 1999.

\bibitem{Adversarial::Harnessing}
Ian Goodfellow, Jonathon Shlens, and Christian Szegedy.
\newblock {Explaining and Harnessing Adversarial Examples}.
\newblock In {\em ICLR}, 2015.

\bibitem{gu2017badnets}
Tianyu Gu, Brendan Dolan-Gavitt, and Siddharth Garg.
\newblock Badnets: Identifying vulnerabilities in the machine learning model
  supply chain.
\newblock {\em arXiv preprint arXiv:1708.06733}, 2017.

\bibitem{guopractical}
Junfeng Guo and Cong Liu.
\newblock Practical poisoning attacks on neural networks.
\newblock {\em Proceedings of the European Conference on Computer Vision},
  2020.

\bibitem{boston}
David Harrison and Daniel Rubinfeld.
\newblock Hedonic housing prices and the demand for clean air.
\newblock {\em Journal of Environmental Economics and Management}, 5:81--102,
  03 1978.

\bibitem{hu2019targeted}
Rui Hu, Yuanxiong Guo, Miao Pan, and Yanmin Gong.
\newblock Targeted poisoning attacks on social recommender systems.
\newblock In {\em 2019 IEEE Global Communications Conference (GLOBECOM)}, pages
  1--6. IEEE, 2019.

\bibitem{KatzLiBook}
Jonathan Katz and Yehuda Lindell.
\newblock {\em Introduction to Modern Cryptography}.
\newblock Chapman \& Hall/CRC, 2007.

\bibitem{koh2017influencefunction}
Pang~Wei Koh and Percy Liang.
\newblock Understanding black-box predictions via influence functions.
\newblock In {\em International Conference on Machine Learning}, pages
  1885--1894, 2017.

\bibitem{koh2018stronger}
Pang~Wei Koh, Jacob Steinhardt, and Percy Liang.
\newblock Stronger data poisoning attacks break data sanitization defenses.
\newblock {\em arXiv preprint arXiv:1811.00741}, 2018.

\bibitem{kumar2020adversarial}
Ram Shankar~Siva Kumar, Magnus Nystr{\"o}m, John Lambert, Andrew Marshall,
  Mario Goertzel, Andi Comissoneru, Matt Swann, and Sharon Xia.
\newblock Adversarial machine learning-industry perspectives.
\newblock In {\em 2020 IEEE Security and Privacy Workshops (SPW)}, pages
  69--75. IEEE, 2020.

\bibitem{lai2016agnostic}
Kevin~A Lai, Anup~B Rao, and Santosh Vempala.
\newblock Agnostic estimation of mean and covariance.
\newblock In {\em Foundations of Computer Science (FOCS), 2016 IEEE 57th Annual
  Symposium on}, pages 665--674. IEEE, 2016.

\bibitem{Adversarial:Old}
Daniel Lowd and Christopher Meek.
\newblock {Adversarial learning}.
\newblock In {\em KDD}, pages 641--647, 2005.

\bibitem{ma2019data}
Yuzhe Ma, Xiaojin Zhu, and Justin Hsu.
\newblock Data poisoning against differentially-private learners: attacks and
  defenses.
\newblock In {\em Proceedings of the 28th International Joint Conference on
  Artificial Intelligence}, pages 4732--4738. AAAI Press, 2019.

\bibitem{madry2017towards}
Aleksander Madry, Aleksandar Makelov, Ludwig Schmidt, Dimitris Tsipras, and
  Adrian Vladu.
\newblock {Towards Deep Learning Models Resistant to Adversarial Attacks}.
\newblock In {\em ICLR}, 2018.

\bibitem{mahloujifar2018curse}
Saeed Mahloujifar, Dimitrios~I Diochnos, and Mohammad Mahmoody.
\newblock The curse of concentration in robust learning: Evasion and poisoning
  attacks from concentration of measure.
\newblock In {\em Proceedings of the AAAI Conference on Artificial
  Intelligence}, volume~33, pages 4536--4543, 2019.

\bibitem{pTampTCC17}
Saeed Mahloujifar and Mohammad Mahmoody.
\newblock {Blockwise p-Tampering Attacks on Cryptographic Primitives,
  Extractors, and Learners}.
\newblock In {\em Theory of Cryptography Conference}, pages 245--279. Springer,
  2017.

\bibitem{mahloujifar2018can}
Saeed Mahloujifar and Mohammad Mahmoody.
\newblock Can adversarially robust learning leverage computational hardness?
\newblock {\em arXiv preprint arXiv:1810.01407}, 2018.

\bibitem{mahloujifar2019can}
Saeed Mahloujifar and Mohammad Mahmoody.
\newblock Can adversarially robust learning leverage computational hardness?
\newblock In {\em Algorithmic Learning Theory}, pages 581--609, 2019.

\bibitem{mahloujifar97universal}
Saeed Mahloujifar, Mohammad Mahmoody, and Ameer Mohammed.
\newblock Universal multi-party poisoning attacks.
\newblock In {\em International Conference on Machine Learning}, pages
  4274--4283, 2019.

\bibitem{Adversarial:models}
Blaine Nelson, Benjamin I.~P. Rubinstein, Ling Huang, Anthony~D. Joseph, and
  J.~D. Tygar.
\newblock {Classifier Evasion: Models and Open Problems}.
\newblock In {\em PSDM}, pages 92--98, 2010.

\bibitem{papernot2017practical}
Nicolas Papernot, Patrick McDaniel, Ian Goodfellow, Somesh Jha, Z~Berkay Celik,
  and Ananthram Swami.
\newblock Practical black-box attacks against machine learning.
\newblock In {\em Proceedings of the 2017 ACM on Asia Conference on Computer
  and Communications Security}, pages 506--519. ACM, 2017.

\bibitem{papernot2016towards}
Nicolas Papernot, Patrick McDaniel, Arunesh Sinha, and Michael Wellman.
\newblock Towards the science of security and privacy in machine learning.
\newblock {\em arXiv preprint arXiv:1611.03814}, 2016.

\bibitem{papernot2016distillation}
Nicolas Papernot, Patrick McDaniel, Xi~Wu, Somesh Jha, and Ananthram Swami.
\newblock Distillation as a defense to adversarial perturbations against deep
  neural networks.
\newblock In {\em 2016 IEEE Symposium on Security and Privacy (SP)}, pages
  582--597. IEEE, 2016.

\bibitem{prasad2018robust}
Adarsh Prasad, Arun~Sai Suggala, Sivaraman Balakrishnan, and Pradeep Ravikumar.
\newblock Robust estimation via robust gradient estimation.
\newblock {\em arXiv preprint arXiv:1802.06485}, 2018.

\bibitem{schmidt2018adversarially}
Ludwig Schmidt, Shibani Santurkar, Dimitris Tsipras, Kunal Talwar, and
  Aleksander Madry.
\newblock {Adversarially Robust Generalization Requires More Data}.
\newblock {\em arXiv preprint arXiv:1804.11285}, 2018.

\bibitem{shafahi2018poison}
Ali Shafahi, W~Ronny Huang, Mahyar Najibi, Octavian Suciu, Christoph Studer,
  Tudor Dumitras, and Tom Goldstein.
\newblock Poison frogs! targeted clean-label poisoning attacks on neural
  networks.
\newblock In {\em Advances in Neural Information Processing Systems}, pages
  6103--6113, 2018.

\bibitem{shen2016uror}
Shiqi Shen, Shruti Tople, and Prateek Saxena.
\newblock A uror: defending against poisoning attacks in collaborative deep
  learning systems.
\newblock In {\em Proceedings of the 32nd Annual Conference on Computer
  Security Applications}, pages 508--519. ACM, 2016.

\bibitem{shokri2017membership}
Reza Shokri, Marco Stronati, Congzheng Song, and Vitaly Shmatikov.
\newblock Membership inference attacks against machine learning models.
\newblock In {\em 2017 IEEE Symposium on Security and Privacy (SP)}, pages
  3--18. IEEE, 2017.

\bibitem{steinhardt2017certified}
Jacob Steinhardt, Pang Wei~W Koh, and Percy~S Liang.
\newblock Certified defenses for data poisoning attacks.
\newblock In {\em Advances in neural information processing systems}, pages
  3517--3529, 2017.

\bibitem{suciu2018does}
Octavian Suciu, Radu Marginean, Yigitcan Kaya, Hal Daume~III, and Tudor
  Dumitras.
\newblock When does machine learning $\{$FAIL$\}$? generalized transferability
  for evasion and poisoning attacks.
\newblock In {\em 27th $\{$USENIX$\}$ Security Symposium ($\{$USENIX$\}$
  Security 18)}, pages 1299--1316, 2018.

\bibitem{sun2020natural}
Lichao Sun.
\newblock Natural backdoor attack on text data.
\newblock {\em arXiv preprint arXiv:2006.16176}, 2020.

\bibitem{suya2020model}
Fnu Suya, Saeed Mahloujifar, David Evans, and Yuan Tian.
\newblock Model-targeted poisoning attacks: Provable convergence and certified
  bounds.
\newblock {\em arXiv preprint arXiv:2006.16469}, 2020.

\bibitem{Szegedy:intriguing}
Christian Szegedy, Wojciech Zaremba, Ilya Sutskever, Joan Bruna, Dumitru Erhan,
  Ian Goodfellow, and Rob Fergus.
\newblock {Intriguing properties of neural networks}.
\newblock In {\em ICLR}, 2014.

\bibitem{thakurta2013differentially}
Abhradeep~Guha Thakurta and Adam Smith.
\newblock Differentially private feature selection via stability arguments, and
  the robustness of the lasso.
\newblock In {\em Conference on Learning Theory}, pages 819--850, 2013.

\bibitem{tolpegin2020data}
Vale Tolpegin, Stacey Truex, Mehmet~Emre Gursoy, and Ling Liu.
\newblock Data poisoning attacks against federated learning systems.
\newblock In {\em European Symposium on Research in Computer Security}, pages
  480--501. Springer, 2020.

\bibitem{tramer2020adaptive}
Florian Tramer, Nicholas Carlini, Wieland Brendel, and Aleksander Madry.
\newblock On adaptive attacks to adversarial example defenses.
\newblock {\em arXiv preprint arXiv:2002.08347}, 2020.

\bibitem{tsipras2018robustness}
Dimitris Tsipras, Shibani Santurkar, Logan Engstrom, Alexander Turner, and
  Aleksander Madry.
\newblock Robustness may be at odds with accuracy.
\newblock {\em arXiv preprint arXiv:1805.12152}, 2018.

\bibitem{turner2018clean}
Alexander Turner, Dimitris Tsipras, and Aleksander Madry.
\newblock Label-consistent backdoor attacks.
\newblock {\em arXiv preprint arXiv:1912.02771}, 2019.

\bibitem{wagh2019securenn}
Sameer Wagh, Divya Gupta, and Nishanth Chandran.
\newblock Securenn: 3-party secure computation for neural network training.
\newblock {\em Proceedings on Privacy Enhancing Technologies}, 2019(3):26--49,
  2019.

\bibitem{wainwright2009sharp}
Martin~J Wainwright.
\newblock Sharp thresholds for high-dimensional and noisy sparsity recovery
  using $\ell_1$-constrained quadratic programming (lasso).
\newblock {\em IEEE transactions on information theory}, 55(5):2183--2202,
  2009.

\bibitem{wang2018data}
Yizhen Wang and Kamalika Chaudhuri.
\newblock Data poisoning attacks against online learning.
\newblock {\em arXiv preprint arXiv:1808.08994}, 2018.

\bibitem{xiao2015feature}
Huang Xiao, Battista Biggio, Gavin Brown, Giorgio Fumera, Claudia Eckert, and
  Fabio Roli.
\newblock Is feature selection secure against training data poisoning?
\newblock In {\em ICML}, pages 1689--1698, 2015.

\bibitem{zhao2017efficient}
Mengchen Zhao, Bo~An, Wei Gao, and Teng Zhang.
\newblock Efficient label contamination attacks against black-box learning
  models.
\newblock In {\em IJCAI}, pages 3945--3951, 2017.

\bibitem{zhu2019transferable}
Chen Zhu, W~Ronny Huang, Ali Shafahi, Hengduo Li, Gavin Taylor, Christoph
  Studer, and Tom Goldstein.
\newblock Transferable clean-label poisoning attacks on deep neural nets.
\newblock {\em arXiv preprint arXiv:1905.05897}, 2019.

\end{thebibliography}
\bibliographystyle{plain}

\newpage
\appendix

\remove{
\medskip

\begin{center} 
{\centering \Large\bf A Separation Result Between Data-oblivious and Data-aware Poisoning Attacks: Supplemental Material}
\end{center}
In the supplemental material, we first provide a more detailed related work section. Then, In Section \ref{sec:defDetails} we discuss some subtle aspects of defining poisoning attacks for data-oblivious and data-aware attacks. In Section \ref{sec:gen_lasso} we provide the details on the borrowed results about the robustness of LASSO that we use to prove our results. Then, in Section \ref{proof:prop} we provide the full proofs of the Theorems stated in the main body. In Section \ref{sec:exp_details_fs} we provide more details and additional experimental results for our feature selection experiments. Finally, in Sections \ref{sec:sepRiskproof} and \ref{ref:class_exp}, we provide some additional theoretical and experimental results (These results are not stated in the main body) about separation of data-oblivious and data-aware adversaries for the case of classification. 
}
\section{Separating Data-oblivious and Data-aware Poisoning for Classification}\label{sec:sepRiskproof}
In this section, we show a separation on the power of data-oblivious and data-aware poisoning attacks on classification. In particular we show that empirical risk minimization (ERM) algorithm could be much more susceptible to data-aware poisoning adversaries, compared to data-oblivious adversaries. 

Before stating our results, we shall clarify that the attack on classification can also focus on different goals. One goal could be to increase the population risk of the resulting model $\theta'$ that the learner generates from the (poisoned) data $\cS'$, compared to the model $\theta$ that  would have been learned from $\cS$ \citep{steinhardt2017certified}. A different goal could be to make $\theta'$ fail on a particular test set of adversary's interest, making it a \emph{targeted poisoning}~\citep{barreno2006can,shen2016uror} or increase the probability of a general ``bad predicate'' of $\theta$~\citep{mahloujifar2018curse}. Our focus here is on attacks that aim to increase the population risk.

We begin by giving a formal definition of the threat model.

\begin{definition}[data-oblivious and data-aware data injection poisoning for population risk]\label{security:PopRisk}
We first describe the \emph{data data-oblivious} security game  between a challenger $C$ and an adversary $A$, and then will describe how to modify it into a data-aware variant. Such game is parameterized by adversary's budget $k$, a data set $\cS$ a learning algorithm $L$, and a distribution $D$ over $\cX \times \cY$ (where $\cX$ is the space of inputs and $\cY$ is the space of outputs).\footnote{Since we deal with risk, we need to add $D$ as a new parameter compared to the  games of Definition \ref{security:Feature}.}

\noindent~~~~~$\OblRisk(k, \cS, L, D)$.
\begin{compactenum}

    \item Adversary $A$ generates $k$ new examples $(e'_1,\dots,e'_k)$ and them to $C$.
    \item $C$ produces the new data set $\cS'$ by adding the injected examples to $\cS$.
    \item $C$ runs $L$ over $\cS'$ to obtain (poisoned) model $\theta' \gets L(\cS')$.
    \item $A$'s advantage (in winning the game) will be $\Risk(\theta',D) = \Pr_{(x,y) \gets D}[\theta'(x)\neq y]$.
    \footnote{Note that this is a real number, and more generally we can use any loss function, which allows covering the case ore regression as well.}
\end{compactenum}

In the \emph{data-aware} security game, all the steps are the same as above, except that in the first step the following is done.

\noindent~~~~~$\FullRisk(k, \cS,L,D)$.
\begin{compactitem}
\item Step 0:   {\color{purple}  $C$ sends $\cS$ to $A$}.
\item The rest of the steps are the same as those of the game $\OblRisk(k, \cS,L,D)$.
\end{compactitem}
\end{definition}

One can also envision variations of Definition \ref{security:PopRisk} in which the goal of the attacker is to increase the error on a particular instance (i.e., a \emph{targeted} poisoning \cite{barreno2006can,shen2016uror}) or use other poisoning methods that eliminate or substitute poison data rather than just adding some.

We now state and prove our separation on the power of data-oblivious and data-aware poisoning attacks on classification. In particular we show that empirical risk minimization (ERM) algorithm could be much more susceptible to data-aware poisoning adversaries, compared to data-oblivious adversaries.

\begin{theorem}\label{thm:SepRisk}
There is a distribution of distributions $\distdist$ 

 such that there is a data injecting adversary with budget $\eps\cdot n$ that wins the data-aware security game for classification by advantage $\eps$, namely
$$\exists A : \Ex_{\substack{D\gets \distdist\\\cS \gets D^n}}\Big[\text{\em Advantage of } A \text{ \em in } \FullRisk(\eps\cdot n, \cS, \ERM, D)\big)\Big] \geq \Omega(\eps).$$
On the other hand, any adversary will have much smaller advantage in the data-oblivious game. Namely, the following holds.
$$\forall A : \Ex_{\substack{D\gets \distdist\\\cS \gets D^n}}\Big[\text{\em Advantage of } A \text{ \em in } \OblRisk(\eps\cdot n,\cS, \ERM, D)\big)\Big] \leq O(\eps^2).$$
\end{theorem}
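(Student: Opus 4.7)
I would take $\distdist$ to be a point mass on a single distribution $D$ over $[d]\times\{-1,+1\}$, where $x$ is uniform on $[d]$ and $y=+1$ independently with probability $p=\Theta(\eps^2)$, else $y=-1$. ERM is over all Boolean functions $f:[d]\to\{-1,+1\}$, with tie-breaking (and the default on unobserved inputs) set to $-1$. The key parameter tuning is $d=\Theta(n/\log(1/\eps))$, which arranges that the probability a given location receives no samples in $\cS\sim D^n$ is $(1-1/d)^n\approx e^{-n/d}=\Theta(\eps)$, and hence that the number of zero-count locations in $\cS$ is $\Theta(\eps d)$ with high probability. A quick accounting shows the baseline unattacked risk of ERM is $O(\eps^2)$, dominated by the $p$-fraction of $+1$ examples that ERM predicts incorrectly at unseen $x$.

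For the data-aware lower bound, the adversary inspects $\cS$, enumerates zero-count locations, and injects one $(x,+1)$ poison at each; if fewer than $\eps n$ zero-count locations are available, the remaining budget is spent on count-one $(-1)$-labelled locations at cost $2$ per flip (these are similarly abundant by Poisson-type concentration on the empirical counts). Either way, a constant fraction of the budget produces successful flips, so that ERM ends up predicting $+1$ at $\Theta(\eps n/\log(1/\eps))$ distinct locations, each contributing $1/d=\Theta(\log(1/\eps)/n)$ of measure whose true label is $-1$ with probability $1-O(\eps^2)$. The post-attack risk is therefore $\Omega(\eps)$, as required.

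For the data-oblivious upper bound, the adversary sees $D$ but not $\cS$, so by the exchangeability of the coordinates under $D$ every location $x$ has the same a priori clean-count distribution $\mathrm{Bin}(n,1/d)$. Any oblivious strategy is therefore characterized by how many poisons $m_x\ge 0$ it places at each location, with $\sum_x m_x\le\eps n$, and by linearity of expectation its expected risk gain equals $\sum_x(1/d)\cdot\Pr[\mathrm{Bin}(n,1/d)<m_x]$ (times $1-2p$, which I absorb). Since the per-poison marginal $\Pr[\mathrm{Bin}(n,1/d)<m]/m$ is maximized at $m=1$, where it equals $\Theta(\eps)$, the best oblivious strategy is to spread the budget as one poison each over $\eps n$ random locations, yielding expected risk at most $\eps n\cdot(1/d)\cdot\Theta(\eps)=O(\eps^2)$ (up to a $\log(1/\eps)$ factor that I would absorb into constants or shave by retuning $p$ and $d$ slightly).

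The principal technical obstacle will be this oblivious upper bound: one must rule out clever poison-placement strategies, including randomized, adaptive, or nonuniform ones. My plan is to reduce, via the symmetry and linearity argument above, to a single-location convex program in $m$, and then verify concavity of the ``expected risk as a function of $m$'' curve so that $m=1$ is optimal. Morally, this construction is the classification analogue of the feature-selection separation: in a symmetric distribution the random sample has sample-specific ``soft spots'' (unstable features in the LASSO setting, unobserved coordinates here) which a data-aware adversary can locate and target with one unit of effort each, while a data-oblivious adversary must average over all possible spots and therefore pays an extra $\eps$ factor in the risk.
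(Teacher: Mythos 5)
Your data-aware lower bound is fine, but the oblivious upper bound --- which you correctly identify as the crux --- fails, and the failure is fatal to the construction rather than a fixable gap. The key claim, that the per-poison marginal $\Pr[\mathrm{Bin}(n,1/d)<m]/m$ is maximized at $m=1$, is false in exactly the regime your tuning forces. With $d=\Theta(n/\log(1/\eps))$ the clean count at a location is approximately $\mathrm{Poisson}(\mu)$ with $\mu=n/d=\Theta(\log(1/\eps))$, and the CDF $m\mapsto\Pr[\mathrm{Poisson}(\mu)<m]$ is \emph{convex} (not concave) for $m<\mu$, since its increments $e^{-\mu}\mu^j/j!$ are increasing in $j$ in that range. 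Consequently the ratio is maximized near $m\approx\mu$, where $\Pr[\mathrm{Poisson}(\mu)<\mu]=\Theta(1)$, not at $m=1$ where it equals $e^{-\mu}=\Theta(\eps)$. An oblivious adversary that stacks $\mu$ poisons labelled $+1$ on each of $\eps n/\mu$ arbitrary locations flips each targeted location with constant probability and gains expected risk $(\eps n/\mu)\cdot\Theta(1)\cdot(1/d)=\Theta(\eps)$, using $\mu d=n$ --- the same order as your data-aware attack, so there is no separation. The structural reason is that in your construction the cost of flipping a location (about $\mu$ poisons for a typical one) and its risk value ($1/d=\mu/n$) scale together, so obliviousness costs at most a $\log(1/\eps)$ factor, never the $\eps$ factor you need; retuning $p$ and $d$ cannot escape this trade-off.

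The paper's proof avoids this by a construction in which most poison placements are provably \emph{worthless} to the oblivious adversary, not merely more expensive: it learns concentric halfspaces over $\cN(0,1)^2$ with $\distdist$ uniform over the direction of the ground truth $w^g$. There, a poison point can only contribute to the risk increase if it lands in the $O(\eps)$-angular disagreement region around $w^g$, a region that is random and unknown to the oblivious adversary (this is where $\distdist$ being a genuine distribution over distributions, rather than your point mass, does real work); each of the $\eps n$ poison points therefore contributes in expectation at most $2\eps\cdot(1/n)$ to the risk, giving the $O(\eps^2)$ bound. To salvage a discrete construction of your flavor you would need an analogous feature: locations whose risk value is large but which an oblivious adversary cannot even partially attack, rather than locations it can always flip at a price proportional to their value.
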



\begin{proof}[Proof]
Here we only sketch the proof. To prove this we use the problem of learning concentric halfspaces in Gaussian space $\cN(0,1)^2$. We assume that the prior distribution is uniform over all concentric halfspaces. We first show that there is a data-aware attack with success $(\eps)$. The way this attack works is as follows, attacker first uses $\ERM$ to learn a halfspace $w_1$ on the clean data. Assume this halfspace has risk $\delta$. Then the attacker selects another halfspace $w_2$ that disagrees with $w_1$ on $\eps\cdot n-1$ number of points in the training data. Note that this is possible because the attacker can keep rotating the half-space until it has exactly $n\cdot \eps-1$ points disagreeing with $w_1$. Now if the adversary puts all the poison points on the separating line for $w_1$ and with the opposite label of what $w_1$ predicts, then $\ERM$ would prefer $w_2$ over $w_1$. Therefore the empirical error of $w_2$ on clean dataset would be at least equal to $\eps-\delta$. Now if we increase $n$, the generalization error would go to zero which means the population error of $w_2$ would be close to $\eps-\delta$. Also, since we are assuming the problem is realizable by half-spaces, it means $\delta$ would also converge to $0$. Therefore, the final population risk could be bounded to be at least $\eps/2$ for $n$ larger than some reasonable values. Which means our proof for the data-aware attack is complete. 

Now, we show that no data-oblivious adversary cannot increase the error by more than $\eps^2$, on average. The reason behind this boils down to the fact that each poison point added can affect at most $\epsilon$-fraction of the choices of ground truth. To be more specific, we can fix the poison data to a fixed set $D_p$ with size $\epsilon\cdot n$, as we can assume that the data-oblivious adversary is deterministic. Now if we fix the ground truth to some $w^g$, and define the epsilon neighborhood of a model  $w$ to be all the points that have angle at most $\epsilon\cdot\pi$ with $w$ and denote it by $w_\epsilon$. Then we have
\begin{align}\Ex_{\substack{X_c \gets \cN(0,1)^n\\ y_c=w^g(X_c)\\
D_c=(X_c,y_c)\\
w^p=\ERM(D_c\cup D_p), w^c=\ERM(D_c)}}[\Risk(w^p) - \Risk(w^c)]
&\leq \Ex_{\substack{X_c \gets \cN(0,1)^n\\ y_c=w^g(X_c)\\
D_c=(X_c,y_c)\\
w^p=\ERM(D_c\cup D_p)}}[\Risk(\ERM(w^p))]\nonumber
\\&\leq\Ex_{\substack{X_c \gets \cN(0,1)^n\\ y_c=w^g(X_c)\\
D_c=(X_c,y_c)\\
w^p=\ERM(D_c\cup D_p)}}[\Risk_{D_c}(w^p)] + \delta\numberthis\
\end{align}
where $\delta$ is the generalization parameter that relates to $n$ and goes to $0$ with rate $1/n$.
Now consider an event $E$ where the angle between $w^c$ and $w^g$ is at most $\epsilon\cdot\pi$ and $w^g_{2\epsilon} \cap X_c$  has at least $\epsilon$ points on each side of $w^g$. We denote the probability of this event by $1-\delta'$ and we know that $\delta'$ goes down to $0$ as $n$ grows, by rate $1/\sqrt{n}$ (Using Chernoff Bound). Now we can observe that conditioned on $E$, we have $\Risk_{D_c}(w_p)\leq |w^g_{2\epsilon} \cap X_c|$. This is because the poison points cannot increase the errorn by more than $\epsilon$ so $w^p$ would disagree with $w^c$ on at most $\epsilon\cdot n$ points in $D_c$. On the other hand, we know that in $2\epsilon$ neighborhood of $w_g$ there are at least $\epsilon\cdot n$ points on each side of $w_g$, which means there are at least $\epsilon\cdot n$ points on each side of $w^c$ (because $w^c$ and $w^g$ would fall between the same two points in $D_c$). Therefore, the poisoned model, would definitely be in the $2\cdot\epsilon$ neighborhood of the $w_g$. At the same time, we know that the maximum number of points in $D_c$ that $w^g$ and $w^p$ disagree on are at most equal to the number of poison points that fall in their disagreement region. And since the disagreement region is a subset of $w^g_{2\epsilon}$, we have the maximum number of points in $D_c$ that $w^g$ and $w^p$ disagree on are at most equal to $|w^g_{2\epsilon} \cap X_c|$. Now having this, using Equation (12) we can write
\begin{align*}\Ex_{\substack{X_c \gets \cN(0,1)^n\\ y_c=w^g(X_c)\\
D_c=(X_c,y_c)\\
w^p=\ERM(D_c\cup D_p), w^c=\ERM(D_c)}}[\Risk(w^p) - \Risk(w^c)]
&\leq \frac{|D_p \cap w^g_{2\epsilon}|}{n} + \delta + \delta'
\end{align*}
Now by also taking the average over $w^g$ we get
\begin{align*}\Ex_{\substack{w^g\gets\distdist\\X_c \gets \cN(0,1)^n\\ y_c=w^g(X_c)\\
D_c=(X_c,y_c)\\
w^p=\ERM(D_c\cup D_p), w^c=\ERM(D_c)}}[\Risk(w^p) - \Risk(w^c)]
&\leq \Ex_{w^g\gets \distdist}[\frac{|D_p \cap [w^g_{2\epsilon}|}{n}] + \delta + \delta'= 2\epsilon^2 + \delta + \delta'
\end{align*}
 As $\delta$ and $\delta'$ converge to 0 with rate $1/\sqrt{n}$, for $n\geq \omega(1/\epsilon^2)$ we have 
 $$\Ex_{\substack{w^g\gets\distdist\\X_c \gets \cN(0,1)^n\\ y_c=w^g(X_c)\\
D_c=(X_c,y_c)\\
w^p=\ERM(D_c\cup D_p), w^c=\ERM(D_c)}}[\Risk(w^p) - \Risk(w^c)]
\leq O(\epsilon^2).$$
\end{proof}
We also state the theorem about separation of data-oblivious and data-aware adversaries in the data elimination setting. This theorem has shows that the gap between data-oblivious and data-aware adversaries could be wider in the data elimination settings. We use $\FullRisk^{\elim}$ and $\OblRisk_{\elim}$ to denote the information risk in presence of data-oblivious and data-aware data elimination attacks.
\begin{theorem}
There is a distribution of distributions $\distdist$ 

 such that there is a data elimination adversary with budget $\eps\cdot n$ that wins the data-aware security game for classification by advantage $\eps$, namely
$$\exists A : \Ex_{\substack{D\gets \distdist\\\cS \gets D^n}}\Big[\text{\em Advantage of } A \text{ \em in } \FullRisk^{\elim}(\eps\cdot n, \cS, \ERM, D)\big)\Big] \geq \Omega(\eps).$$
On the other hand, any adversary will have much smaller advantage in the data-oblivious game. Namely, the following holds. 
$$\forall A : \Ex_{\substack{D\gets \distdist\\\cS \gets D^n}}\Big[\text{\em Advantage of } A \text{ \em in } \OblRisk_{\elim}(\eps\cdot n,\cS, \ERM, D)\big)\Big] \leq e^{-\omega((1-\eps)n)}.$$
\end{theorem}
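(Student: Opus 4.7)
I would mirror the template used for Theorem~\ref{thm:SepRisk}, but exploit a key asymmetry specific to the elimination setting: a data-oblivious elimination attacker must pre-commit to an index set $I\subseteq[n]$ with $|I|=\epsilon n$ without seeing $\cS$, so by i.i.d.~symmetry the surviving rows $\cS\setminus I$ are distributed exactly as $(1-\epsilon)n$ \emph{fresh} samples from the same $D$. Consequently the oblivious adversary cannot do strictly better than ``training $\ERM$ on a smaller but still i.i.d.~dataset,'' and it suffices to pick $\distdist$ so that $\ERM$ on $(1-\epsilon)n$ i.i.d.~samples has exponentially small risk, while the aware attacker can use its $\epsilon n$ targeted deletions to create genuine ambiguity in the empirical risk landscape.

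\textbf{The construction and the data-aware direction.} I would take $\distdist$ to be the uniform distribution over a finite family $\{D_{i^*}\}_{i^*\in[k]}$ with $k=\Theta(1/\epsilon)$, where each $D_{i^*}$ is uniform over $\cX=\{1,\dots,k\}$ and the label is the singleton indicator $y=\mathbf{1}[x=i^*]$; the hypothesis class is the set of singletons $\{h_1,\dots,h_k\}$ together with a deterministic tie-breaking rule for $\ERM$. The data-aware adversary reads $\cS$, identifies $i^*$ from the unique input value labeled $1$ (which it sees w.p.\ $1-e^{-\Omega(\epsilon n)}$ by a coupon-collector/Chernoff argument, since $i^*$ occurs $\Theta(\epsilon n)$ times), and deletes every row with $x=i^*$. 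A Chernoff bound shows this stays within budget except with exponentially small probability. After elimination every surviving label is $0$ and tie-breaking forces $\ERM$ onto a hypothesis $h_j$ with $j\ne i^*$, whose population risk under $D_{i^*}$ is exactly $1/k=\Theta(\epsilon)$; averaging over the prior on $i^*$ yields an $\Omega(\epsilon)$ advantage.

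\textbf{The data-oblivious direction and the main obstacle.} For any fixed $I$ with $|I|=\epsilon n$, conditional on $D=D_{i^*}$ the probability that $i^*$ fails to appear among the $(1-\epsilon)n$ surviving samples is at most $(1-1/k)^{(1-\epsilon)n}\le e^{-(1-\epsilon)n/k}$, and whenever $i^*$ does appear $\ERM$ recovers $h_{i^*}$ with risk zero. Thus the expected oblivious advantage is at most $e^{-\Omega((1-\epsilon)n/k)}$. The main obstacle is matching the exponent in the theorem: with $k=\Theta(1/\epsilon)$ I only get $e^{-\Omega(\epsilon(1-\epsilon)n)}$, which is weaker than the claimed $e^{-\omega((1-\epsilon)n)}$. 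I would close this gap by \emph{inflating} the identifying region, replicating it across $\omega(1)$ independent witness coordinates (or sub-populations) so that pinning down $i^*$ requires the oblivious adversary to wipe out every witness, while the aware adversary's budget is tight enough to be exhausted in doing so. The concentration then becomes a product of exponentially small per-witness failure probabilities, driving the exponent above any constant multiple of $(1-\epsilon)n$; the remaining steps are a repetition of the structural argument already used in Theorem~\ref{thm:SepRisk}.
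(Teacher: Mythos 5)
Your overall skeleton matches the paper's: the data-aware attacker deletes exactly the examples that distinguish the true hypothesis from a nearby wrong one, and the data-oblivious bound rests on the observation that a pre-committed deletion set leaves $(1-\eps)n$ rows that are still i.i.d.\ from $D$, so ordinary learning guarantees apply. The paper instantiates this with concentric halfspaces over $\cN(0,1)^2$ (rotate by angle $\pi\eps/2$, delete the disagreement wedge), whereas you use a finite singleton class; the difference is cosmetic for the aware direction but material for the oblivious one, since your construction (once $i^*$ survives, $\ERM$ recovers it exactly) yields an exponentially small oblivious advantage while the paper's uniform-convergence argument only yields $O(1/((1-\eps)n))$. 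Neither argument actually reaches the stated $e^{-\omega((1-\eps)n)}$, so the gap you flag at the end is real but is shared by the paper itself; your proposed ``witness replication'' fix is too vague to evaluate and I would not expect it to push the exponent superlinearly in $n$.

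There is, however, a concrete error in your data-aware direction. With $D_{i^*}$ uniform on $\{1,\dots,k\}$, $y=\mathbf{1}[x=i^*]$, and hypotheses $h_j=\mathbf{1}[x=j]$, deleting every row with $x=i^*$ does \emph{not} create a tie: on the surviving data all labels are $0$, so $h_{i^*}$ (which predicts $0$ everywhere on the survivors) has empirical risk exactly $0$, while every $h_j$ with $j\neq i^*$ still misclassifies the $\approx n/k$ surviving rows with $x=j$ and has empirical risk $\approx 1/(k-1)>0$. Hence $\ERM$ uniquely returns $h_{i^*}$, whose population risk is $0$, and the attack gains nothing; no tie-breaking rule is ever invoked. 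To salvage the construction the adversary must delete both the rows with $x=i^*$ \emph{and} the rows with $x=j$ for some fixed $j$ preferred by the deterministic tie-breaker (e.g.\ the smallest index), which genuinely ties $h_{i^*}$ and $h_j$ at empirical risk $0$ and forces the output to $h_j$ with population risk $2/k=\Theta(\eps)$; this doubles the required budget to $\approx 2n/k$, which still fits within $\eps n$ after adjusting the constant in $k=\Theta(1/\eps)$. With that repair, and with the exponent weakened to $e^{-\Omega(\eps(1-\eps)n)}$, your argument goes through.
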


\begin{proof}
For the negative part on the power of data-aware attacks, we observe that for a fixed $w_g$ the attacker can find a half-space $w_c$ that has angle $\pi\epsilon/2$ with the ground-truth $w_g$, and remove all the points where $w_c$ and $w_g$ disagree. Note that the number of points in the disagreement region would be at most $\epsilon$ with some large probability $1-\delta$ where $\delta$ goes to $0$ with rate $1/\sqrt{n}$. After the adversary removes all the points in disagreement region, the learner cannot distinguish them and will incur an error $\epsilon/2$ on average. We note that this attack is similar to the hybrid attack described in the work of Diochnos et al. \cite{diochnos2019lower}. For the positive result, we make a simple observation that data-oblivious poisoning adversary can only reduce the sample complexity for the learner. In other words, non-removed examples would remain i.i.d examples.  This means that after removal, we can still use uniform convergence theorem to bound the error of resulting classifier.  Since the error of learning realizable half-spcaces will go to zero with rate $\Omega(1/n)$, therefore the average error after the attack would be $\Omega(1/(1-\epsilon)n)).$
\end{proof}

\subsection{Experiments}\label{ref:class_exp}
In this section, we design an experiment to empirically validate the claim made in Theorem \ref{thm:SepRisk}, that there is a separation between oblivious and data-aware poisoning adversaries for classification. We setup the experiment just as in the proof of Theorem \ref{thm:SepRisk}, as follows.

\par

First, we sample training points $X = x_1, x_2, \dots x_m$ for $m = 1,000$ from the Gaussian space $\mathcal{N}(0, 1)^2$, and pick a random ground-truth halfspace $w^*$ from $\mathcal{N}(0, 1)^2.$ Using $w^*,$ we find our labels $y_1, y_2, \dots y_m$ by taking $(w^*)^T x_k$ for $k \in [m].$ This ensures the data is linearly separable by the homogeneous halfspace produced by $w^*.$ 

\par

To attack this dataset simulating our data-aware adversary with budget $\epsilon$, we construct $\epsilon \cdot m$ poison points $d$ as follows:

\[ d = \cos(\epsilon \pi) \cdot \frac{v}{\|v\|} + \sin(\epsilon \pi) \cdot \frac{w}{\|w\|}, \quad \text{ where } v = \begin{bmatrix} 1, & -\frac{w_1}{w_2}\end{bmatrix}\]

and we add $\epsilon \cdot m$ of these $d$ rows to our dataset. Note that this specific $d$ corresponds to halfspace $w_2$ in our Proof of Theorem \ref{thm:SepRisk}, the halfspace obtained by rotating the original halfspace until it has exactly $ \epsilon \cdot m$ points disagreeing with $w^*$. We label each of these $d$ rows to be $y_d = -(w^*)^T d,$ the opposite label from ground-truth. Then, we train our halfspace via ERM on this poisoned dataset of $m \cdot (1 + \epsilon)$ points (from appending $\epsilon \cdot m$ rows of $d$). We evaluate our poisoned halfspace on another $X' = x_1', x_2', \dots x_m'$ test points from the same Gaussian $\mathcal{N}(0, 1)^n$ distribution.

\par 

To attack this dataset simulating the oblivious adversary, we try three oblivious strategies of attack that an adversary with no knowledge of the dataset might wage, each with $\epsilon$ budget:

\begin{enumerate}
    \item Sample a single random point $p$ from $\mathcal{N}(0, 1)^n$ and repeat it $\epsilon \cdot m$ times. Choose the label $p_y$ uniformly at random from $\{-1, 1\}$.  Poison by adding these $\epsilon \cdot m$ rows to the dataset.
    \item Sample $\epsilon \cdot m$ points IID from $\mathcal{N}(0, 1)^n$ and choose the label $p_y$ uniformly at random from $\{-1, 1\}.$ Label all of the $\epsilon \cdot m$ points with $p_y$. Poison by adding these $\epsilon \cdot m$ rows to the dataset.
    \item Sample $\epsilon \cdot m$ points IID from $\mathcal{N}(0, 1)^n$ and choose the label $p_y$ uniformly at random from $\{-1, 1\}$ for \textit{each point.} That is, we flip a coin to label each poison example, rather than just choosing one label, as in 2. Poison by adding these $\epsilon \cdot m$ rows to the dataset.
\end{enumerate}
We also use the same ERM algorithm, as in the data-aware case, to train the poisoned classifiers on these three oblivious poisoning strategies. 
 \begin{figure}[ht]
    \centering
    \includegraphics[width=0.5\textwidth]{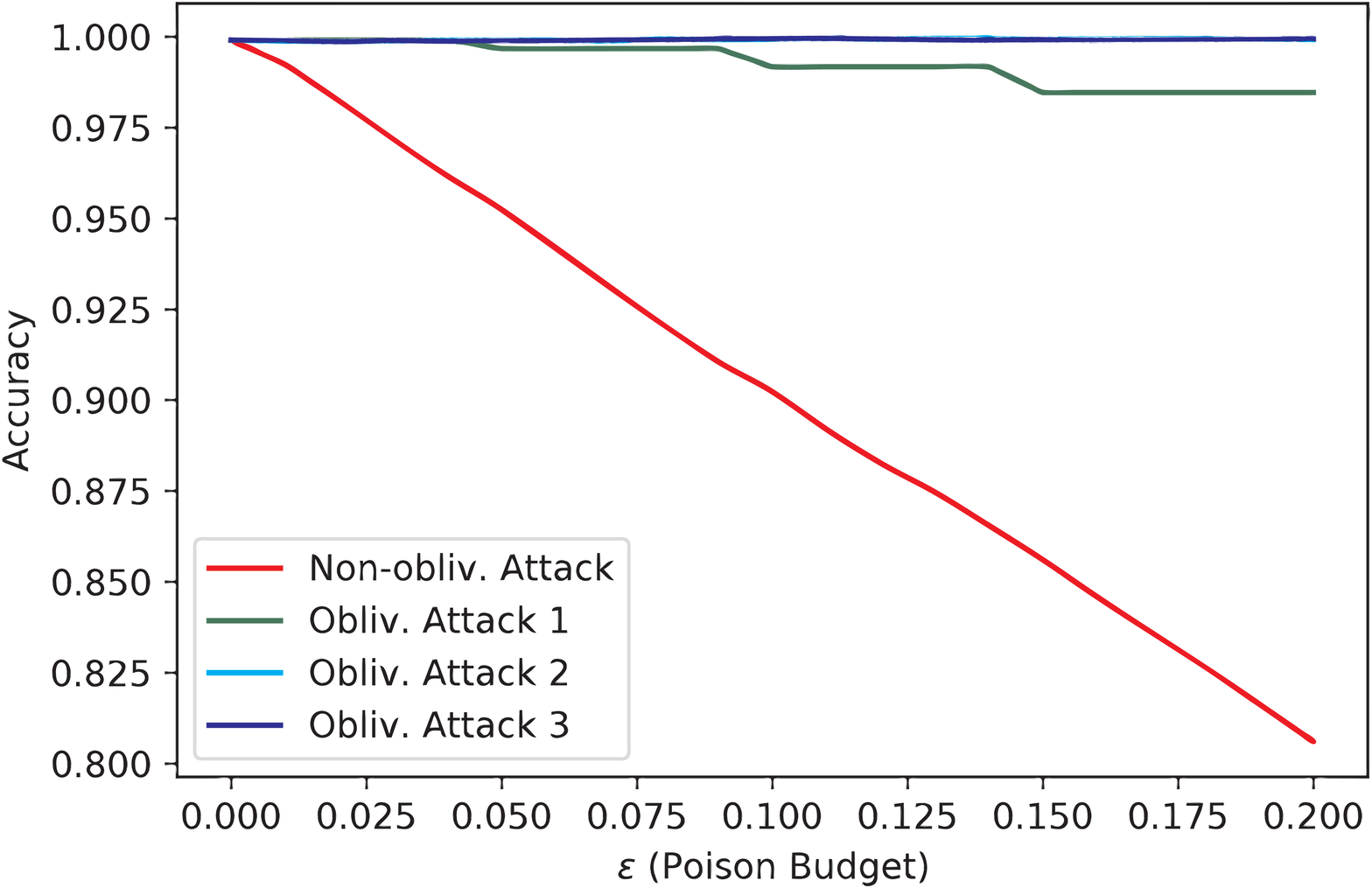}
     \caption{\textit{Oblivious and data-aware poisoning separation in classification.} Over 20 trials, we vary the poisoning budget $\epsilon$ and construct poisoned datasets as discussed above for each adversary. We plot the effect of each adversary's attack on the accuracy of our resulting poisoned ERM halfspace.} 
     \label{fig:classification_exp}
\end{figure}
We repeat this experiment 20 times for poison budget $\epsilon \in \{0, 0.01, 0.02, \dots 0.19, 0.2\}.$ We observe in Figure \ref{fig:classification_exp} that there indeed exists a separation between the power of our data-aware adversary and the oblivious adversaries. The data-aware adversary can increase the error linearly with $\epsilon$ using this strategy, while the oblivious adversaries fail to have any consistent impact on the resulting classifier's error with their strategies.

\section{More Details on Related Work} \label{full_related_work}
As opposed to the data poisoning setting, the question of adversary's (adaptive) knowledge was indeed previously studied in the line of work on adversarial examples \citep{Adversarial:Old,Adversarial:models,Szegedy:intriguing}. 
In a test time evasion attack the adversary's goal is to find an adversarial example, the adversary knows the input $x$ \emph{entirely} before trying to find a close input $x'$   that is misclassified. So, this adaptivity aspect already differentiates adversarial examples from random noise. Moreover, the question of whether adversary knows the $\theta$ completely or it only has a black-box access to it \citep{papernot2017practical} also adds another dimension of adaptivity to the story. 

Some previous work have studied poisoning attacks in the setting of federated/distributed learning~\citep{bhagoji2018analyzing,mahloujifar97universal}. Their attacks, however, either (implicitly) assume a full  information attacker, or aim to increase the population risk (as opposed to injecting features in a feature selection task). Thus, our work is novel in both formally studying the differences between data-aware vs. data-oblivious attacks, and \emph{provably} separating the power of these two attack models in the contexts of feature selection.  Xiao et al.~\cite{xiao2015feature} also empirically examine the robustness of feature selection in the context of poisoning attacks, but their measure of stability is across sets of features. We are distinct in  that our paper studies the effect of data-oblivious attacks on \textit{individual} features and with provable guarantees.

Our work's motivation for data secrecy  might seem similar to other works that leverage privacy-preserving learning (and in particular differential privacy \citep{dinur2003revealing,TCC:DMNS06,dwork2008differential}) to limit the power of poisoning attacks by making the learning process less sensitive to poison data \citep{ma2019data}. However, despite seeming similarity, what we pursue here is fundamentally different. In this work, we try to understand the effect of keeping the data secret from adversaries. Whereas the robustness guarantees that come from differential privacy has nothing to do with secrecy and hold even if the adversary gets to see the full training set (or even select the whole training set in an adversarial way.).


We also distinguish our work with another line of work that studies the computational complexity of the attacker \citep{mahloujifar2018can,garg2019adversarially}. Here, we study the ``information complexity'' of the attack; namely, what information the attacker needs to succeed in a poisoning attack, while those works study the \emph{computational resources} that a poisoning attacker needs to successfully degrade the quality of the learned model. Another recent exciting line of work that studies the computational aspect of robust learning in poisoning contexts, focuses on the computational complexity of the \emph{learning} process itself \citep{diakonikolas2016robust,lai2016agnostic,charikar2017learning,diakonikolas2017statistical,diakonikolas2018list,diakonikolas2018sever,prasad2018robust,diakonikolas2018efficient}, and other works have studied the same question about the complexity of the learning process for evasion attacks \citep{bubeck2018adversarial1,bubeck2018adversarial2,degwekar2019computational}. Furthermore,   our work deals with information complexity and  is distinct from  works that study the impact of the training set (e.g., using clean labels) on the success of poisoning \cite{shafahi2018poison,zhu2019transferable,suciu2018does,turner2018clean}.

  Finally, we try to categorize the existing poisoning attacks in literature into data-oblivious and data-aware categories. The recent survey of \citep{goldblum2020data} and classifies existing poisoning attacks based on their techniques and goals. We use the same classes to categorize the attacks.
  
  \begin{itemize}
  \item{Feature Collision Attacks: [data-oblivious]} Feature Collision is a technique used in targeted poisoning attacks where the adversary tries to inject poison points around a target point $x$ so that the classification of $x$ is different than the correct label \citep{aghakhani2020bullseye,guopractical,shafahi2018poison,zhu2019transferable}. There is usually a ``clean label'' constraint for targeted attacks that prevents the adversary from using the same point as the target point. These attacks will be mostly categorized as data-oblivious as the attacker does not usually need to see the training set.
  \item{Bi-level Optimization Attacks: [data-aware]} Bi-level optimization is generic technique used for optimizing the poisoning points to achieve attacker's objective \citep{biggio2012poisoning,burkard2017analysis,geiping2020witches,hu2019targeted}. This optimization heavily relies on knowledge of training set. 
  \item{Label-Flipping Attacks: [both]} The idea of label-flipping is very simple yet effective. The random label-flipping attacks are data-oblivious as the only thing that the adversary does is to sample data from (conditional) distribution and flip the label. However, some variants of label-flipping \cite{zhao2017efficient,fung2018mitigating,biggio2011support} are relying on the training set to optimize the examples which makes them data-aware. 
  \item{Influence Function Attacks: [data-aware]} Attacks based on influence function look at the effect of training examples on the final loss of the model\cite{koh2017influencefunction,fang2020influence}. This technique require the knowledge of the training set.
  \item{Online Learning Attacks: [data-aware]}  \emph{Online} poisoning adversaries studied in~\cite{pTampTCC17,wang2018data,mahloujifar2019can}is a form of attack that lies somewhere between data-oblivious and data-aware attacks. In their model, an online adversary needs to choose its decision about the $i\th$ example (i.e., to tamper or not tamper it) based only on the history of the first $i-1$ examples, and without the knowledge of the future examples. So, their knowledge about the training data is limited, in a partial way. Since we separate the power of data-aware vs. data-oblivious attacks, a corollary of our results is that at least one of these models is different from the online variant for recovering sparse linear regression. In other words, we are in one of the following worlds: (i)  online adversaries are provably stronger than data-oblivious adversaries or (ii) data-aware  adversaries are provably stronger than online adversaries.
  \item{Federated Learning Attacks: [both]} The attack against federated learning \citep{bhagoji2018analyzing,tolpegin2020data,sun2020natural,cao2019understanding}, use a range of ideas that covers all the previous techniques and hence have both data-aware and data-oblivious variants. In general, since in federated learning the adversary sees the model updates at each round, they are more aware of the randomness of training process compared to typical poisoning attacks hence they can be more effective. 
  \end{itemize}
  
\section{Further Details on Defining Oblivious Attacks} \label{sec:defDetails}
In this section, we discuss other definitional aspects of oblivious and full-information poisoning attacks.
\subsection{Oblivious Variants of (Data-aware) Data Poisoning Attacks} \label{sec:FeatureDetails}
In this section, we explain how to formalize oblivious poisoning attackers in general, and in the next subsection we will describe how to instantiate this general approach for the case of feature selection. 

A poisoning adversary of ``budget'' $k$, can tamper with a training sequence $\cS=\set{e_1,\dots,e_n}$, by ``modifying'' $\cS$ by at most $k$ changes. Such changes can be in three forms
\begin{itemize}
    \item {\bf Injection.} Adversary can inject $k$ new examples $e'_1,\dots,e'_k$ to $\cS$. This is without loss of generality when the learner is symmetric and is not sensitive to  the order in the training examples. More generally, when the training set is treated like a sequence $\cS = (e_1,\dots,e_n)$, the adversary can even choose the \emph{location} of these planted examples $e'_1,\dots,e'_k$. More formally, the adversary picks $k$ numbers $1\leq i_1 < \dots < i_k \leq n+k$, and  constructs the new data sequence $\cS'=(e''_1,\dots,e''_{n+k})$ by letting $e''_j=e'_{i_j}$ and letting $\cS$ fill the remaining coordinates of $\cS'$ in their original order from $\cS$.
    
    {\bf Oblivious injection.} In the full-information setting, the adversary can choose the poison examples and their locations based on  $\cS$. In the oblivious variant, the adversary chooses the poison examples $e'_1,\dots,e'_k$ and their locations $1\leq i_1 < \dots < i_k \leq n+k$ without knowing the original set $\cS$.
    
    \item {\bf Elimination.} Adversary can eliminate $k$ of the examples in $\cS$. When $\cS$ is a sequence, the adversary only needs to state the indexes $1 \leq i_1 < \dots , i_k \leq n$ of the removed examples.
    
    {\bf Oblivious elimination.}   In the full-information setting, the adversary can choose the locations of the deleted examples  based on  $\cS$. In the oblivious variant, the adversary chooses the locations  without knowing the original set $\cS$.
    
    \item {\bf Substitution and it oblivious variant.}
    These two settings are  similar to data elimination, with the difference that the adversary, in addition to the sequence of locations, chooses $k$ poison examples $e'_1,\dots,e'_k$ to substitute $e_{i_j}$ by $e'_j$ for all $j \in [k]$.
    
\end{itemize}

\paragraph{More general attack strategies.} One can think of more fine-grained variants of the substitution attacks above by having different "budgets" for injection and elimination processes (and even allowing different locations for eliminations and injections), but we keep the setting simple by default.

\subsection{Taxonomy for Attacks on Feature Selection}

Sometimes the goal of a learning process is to recover a model $\hat{\theta}$, perhaps from noisy data, that has the same set of features $\Supp(\hat{\theta})$ as the true model $\theta$. For example, those features could be the relevant factors determining a decease. Such process is called feature selection (or model recovery). A poisoning attacker attacking a feature selection task would directly try to counter this goal. Now, regardless of \emph{how} an attacker is transforming a data set $\cS$ into $\cS'$, let $\hat{\theta}'$ be the model that is learned from $\cS'$. Below we give a taxonomy of various attack scenarios.

\begin{compactitem}
    \item {\bf Feature adding.} In this case, the adversary's goal is to achieve $\Supp(\hat{\theta}') \not \se \Supp(\theta)$. Namely, adding a feature that is not present in the true model $\theta$.
    \item {\bf Feature removal.} In this case, the adversary's goal is to achieve $\Supp(\theta) \not \se \Supp(\hat{\theta}') $. Namely, removing a feature that is  present in the true model $\theta$.
    \item {\bf Feature flipping.} In this case, the adversary's goal is to do either of the above. Namely, $\Supp(\theta) \neq \Supp(\hat{\theta}') $, which means that at least one of the features' existence is flipped.
\end{compactitem}

\paragraph{Targeted variants of the attacks above.} For each of the three attack goals above (in the context of feature selection), one can envision a \emph{targeted} variant in which the adversary aims to add/remove or flip a specific feature $i \in [d]$ where $d$ is the data dimension.

\section{Borrowed Results}\label{sec:gen_lasso}
In this section, we provide some preliminary results about the LASSO estimator. We first specify the sufficient conditions for a dataset that makes it a good dataset for robust recover using Lasso estimator. We borrow these specifications from the work of \cite{thakurta2013differentially}. We use these results in proving Theorem \ref{thm:main}. 

\begin{definition}[Typical systems] Suppose $\theta^*\in [0,1]^d$ be a model such that $|\Supp(\theta^*)|=s$. Let $X\in \R^{n\times d}$ and $Y\in \R^{n\times 1}$ and $W=Y-X\times \theta^*$. Also let $X_I\in \R^{n\times s}$ be a matrix formed by columns of $X$ whose indices are in $\Supp(\theta^*)$ and $X_O \in \R^{n\times (d-s)}$ be a matrix formed by columns of $X$ whose indices are not in $\Supp(\theta^*)$. The pair $(\theta^*, \concat{X}{Y})$ is called an \emph{$(n, d, s, \psi, \sigma)$-typical} system, if the following  hold:
\begin{itemize}
    \item{\textbf{Column normalization:}} Each column of $X$ has $\ell_2$ norm bounded by $\sqrt{n}$.
    \item{\textbf{Incoherence:}} $\norm{((X_O^TX_I)(X_I^T X_I)^{-1}\mathsf{sign}(\theta^*))}_\infty \leq 1/4.$
    \item {\textbf{Restricted strong Convexity:}} The minimum eigenvalue of $X_IX_I^T$ is at least $\psi$.
    \item {\textbf{Bounded noise}} $\norm{X_O^T(I_{n\times n}  - X_I(X_I^TX_I)^{-1}X_I^T)W}_\infty \leq 2\sigma\sqrt{n\log(d)}$.
\end{itemize}
\end{definition}\label{def:typical}
The following theorem is a modified version of result of \cite{wainwright2009sharp} borrowed from \cite{thakurta2013differentially}.
\begin{theorem}[Model recovery with Lasso \cite{wainwright2009sharp}]\label{thm:LassoRecoveryGen}
Let $(\theta^*,\concat{X}{Y})$ be a $(n,d,s,\sigma, \psi)$-typical system. Let $\alpha=\argmin_{i\in [d]} \max(\theta^*_i, 1 - \theta^*_i)$. If $n\geq 16\cdot\frac{\sigma}{\psi\cdot \alpha} \sqrt{s\cdot \log(d)}$ and then $\hat{\theta} = \Learn(\concat{X}{Y})$ would have the same support as $\theta^*$ when $\lambda = 4\sigma \sqrt{n\cdot \log(d)}$.
\end{theorem}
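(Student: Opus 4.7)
}

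My plan is to use the \emph{primal-dual witness} (PDW) construction of Wainwright, adapted to the typical-system conditions of Definition~\ref{def:typical}. The idea is to first solve a \emph{restricted} Lasso that is forced to be supported on $S=\Supp(\theta^*)$, then extend its dual certificate to all of $[d]$, and finally show the extended certificate is strictly dual feasible on $S^c$. Strict dual feasibility together with restricted strong convexity implies that the restricted solution is in fact the unique minimizer of the original Lasso, and in particular its support equals $S$.

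Concretely, I would proceed in four steps. First, write the KKT conditions for the restricted program and use the restricted strong convexity assumption (minimum eigenvalue of $X_I^\top X_I$ at least $\psi$, noting that the paper writes $X_I X_I^T$ but the invertibility statement is about $X_I^\top X_I$) to solve explicitly for the restricted optimizer
\[
\tilde\theta_S \;=\; \theta^*_S \;+\; (X_I^\top X_I)^{-1}\bigl[X_I^\top W \;-\; \lambda\,\mathsf{sign}(\theta^*_S)\bigr],
\]
and set $\tilde\theta_{S^c}=0$. Second, construct the dual witness $z\in\partial \|\tilde\theta\|_1$ by taking $z_S=\mathsf{sign}(\theta^*_S)$ and
\[
z_{S^c} \;=\; (X_O^\top X_I)(X_I^\top X_I)^{-1}\mathsf{sign}(\theta^*_S) \;+\; \tfrac{1}{\lambda}\,X_O^\top\bigl(I_n - X_I(X_I^\top X_I)^{-1}X_I^\top\bigr)W.
\]
Third, verify strict dual feasibility $\|z_{S^c}\|_\infty<1$: the incoherence condition bounds the first term by $1/4$, and combining the bounded-noise condition $\|X_O^\top(I-P_{X_I})W\|_\infty\le 2\sigma\sqrt{n\log d}$ with the choice $\lambda=4\sigma\sqrt{n\log d}$ bounds the second term by $1/2$, so $\|z_{S^c}\|_\infty\le 3/4<1$. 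Fourth, argue \emph{sign consistency} on $S$: bound $\|\tilde\theta_S-\theta^*_S\|_\infty$ by combining a column-normalization $+$ noise bound on $(X_I^\top X_I)^{-1} X_I^\top W$ with the deterministic term $\lambda\|(X_I^\top X_I)^{-1}\mathsf{sign}(\theta^*_S)\|_\infty$, and use the sample-size hypothesis $n\ge 16(\sigma/(\psi\alpha))\sqrt{s\log d}$ together with the minimum-eigenvalue bound $\psi$ to show this deviation is strictly smaller than $\alpha$, the minimum margin of $\theta^*$. This forces each coordinate $\tilde\theta_i$ for $i\in S$ to lie strictly inside $(0,1)$ with the correct sign, hence $\Supp(\tilde\theta)=S$.

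Once all four conditions hold, a standard convex-analysis argument (uniqueness from restricted strong convexity plus strict complementary slackness on $S^c$) promotes $\tilde\theta$ from a minimizer of the restricted program to \emph{the} unique minimizer $\hat\theta$ of the original Lasso, yielding $\Supp(\hat\theta)=\Supp(\theta^*)$ as claimed.

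The main obstacle, in my view, is the last piece: translating the restricted stationarity identity into a quantitative $\ell_\infty$ bound on $\tilde\theta_S-\theta^*_S$ that is sharper than the naive $\ell_2$ bound. This requires carefully pushing the incoherence-type control through $(X_I^\top X_I)^{-1}$ and matching it against the margin parameter $\alpha$; the sample-size hypothesis is calibrated precisely so that this $\ell_\infty$ estimate beats $\alpha$ by a constant factor. The dual-feasibility step, by contrast, is almost mechanical once the two typical-system inequalities are invoked with the prescribed $\lambda$.
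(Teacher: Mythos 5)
The paper never proves this statement: Theorem~\ref{thm:LassoRecoveryGen} is imported verbatim as a ``borrowed result'' from Wainwright~\cite{wainwright2009sharp} via Thakurta et al.~\cite{thakurta2013differentially}, so there is no in-paper argument to compare yours against. That said, your primal--dual witness outline is exactly the argument used in the cited source, and it is the right one. Steps 1--3 are correct and essentially mechanical given Definition~\ref{def:typical}: the restricted-stationarity formula for $\tilde\theta_S$, the explicit form of $z_{S^c}$, and the bound $\|z_{S^c}\|_\infty \le 1/4 + 2\sigma\sqrt{n\log d}/\lambda = 3/4 < 1$ with $\lambda = 4\sigma\sqrt{n\log d}$ all check out, and you are right that the ``restricted strong convexity'' condition must be read as a statement about $X_I^\top X_I$ rather than $X_I X_I^\top$.

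Two caveats on your step 4, which you correctly identify as the only delicate piece. First, the typical-system definition controls only $X_O^\top(I - P_{X_I})W$; it provides \emph{no} bound on $X_I^\top W$, so the ``noise'' half of your $\ell_\infty$ estimate on $\tilde\theta_S - \theta^*_S$ cannot be extracted from the stated hypotheses alone --- in Wainwright's original proof this term is controlled using (sub-)Gaussianity of $W$, which the deterministic typical-system conditions have silently dropped. Second, the arithmetic does not obviously close against the stated sample-size condition: with $\lambda = 4\sigma\sqrt{n\log d}$ and $\lambda_{\min}(X_I^\top X_I) \ge \psi$ (under either natural normalization), the bias term $\lambda\|(X_I^\top X_I)^{-1}\mathsf{sign}(\theta^*_S)\|_\infty$ scales like $\sigma\sqrt{s\log d}$ divided by a power of $n$, and forcing it below $\alpha$ yields a lower bound on $n$ that is quadratic in $\sigma/(\psi\alpha)$ and linear in $s\log d$, not the stated $16(\sigma/(\psi\alpha))\sqrt{s\log d}$. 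Both issues trace to the paper's paraphrase of the borrowed theorem rather than to your strategy, but a self-contained proof would have to either add a hypothesis bounding $\|(X_I^\top X_I)^{-1}X_I^\top W\|_\infty$ or revert to the probabilistic setting, and would have to restate the sample-size condition consistently with the chosen normalization of $\psi$.
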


The following theorem is about robust model recovery with Lasso in \cite{thakurta2013differentially}.
\begin{theorem}[Robust model recovery with Lasso \cite{thakurta2013differentially}]\label{thm:LassoRobustRecoveryGen}
Let $(\theta^*,\concat{X}{Y})$ be a $(n,d,s,\sigma, \psi)$-typical system. Let $\alpha=\argmin_{i\in [d]} \max(\theta^*_i, 1 - \theta^*_i)$. If 
$$n\geq \max(\frac{16\sigma}{\psi\cdot \alpha} \sqrt{s\cdot \log(d)}, \frac{4s^4k^2(1/\psi +1)^2}{\log(d)\sigma^2})$$
then $\hat{\theta} = \Learn(\concat{X}{Y})$ would have the same support as $\theta^*$ when $\lambda = 4\sigma \sqrt{n\cdot \log(d)}$.

In addition, adding \emph{any} set of $k$ labeled vectors $\concat{X'}{Y'}$ with $\ell_\infty$ norm at most 1 to $\concat{X}{Y}$ would not change the support set of the model recovered by Lasso estimator. Namely,
\iffull
\begin{align*}
    \Supp\left(\Learn\left(\concatt{X}{Y}{X'}{Y'}\right)\right)&=\Supp(\Learn(\concat{X}{Y})
    =\Supp(\theta^*).
\end{align*}
\else
\begin{align*}
    \Supp\left(\Learn\left(\concatt{X}{Y}{X'}{Y'}\right)\right)&=\Supp(\Learn(\concat{X}{Y}))\\
    &=\Supp(\theta^*).
\end{align*}
\fi

Two theorems above are sufficient conditions for (robust) model recovery using lasso estimator. Bellow, we show two simple instantiating of the theorems on Normal distribution. Theorem bellow from the seminal work of Wainwright \cite{wainwright2009sharp} shows that the Lasso estimator with proper parameters provably finds the correct set of features, if the dataset and noise vectors are sampled from normal distributions. 
\end{theorem}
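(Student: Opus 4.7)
The plan is to reduce the robustness claim to the non-robust recovery guarantee of Theorem~\ref{thm:LassoRecoveryGen}, by arguing that the poisoned dataset still constitutes a typical system with only mildly degraded parameters. The first assertion---correct recovery on the clean data---is immediate from Theorem~\ref{thm:LassoRecoveryGen} once $n$ exceeds the first term in the $\max$, so the real content is the robustness statement.

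For the robustness claim, I would let $\tilde X = \concatv{X}{X'}$, $\tilde Y = \concatv{Y}{Y'}$, and verify that $(\theta^*, \concat{\tilde X}{\tilde Y})$ remains an $(n+k, d, s, \sigma', \psi')$-typical system with only slightly worse constants, then invoke Theorem~\ref{thm:LassoRecoveryGen} on this augmented system with the prescribed $\lambda = 4\sigma\sqrt{n\log d}$. Column normalization is immediate since each adversarial row contributes at most $1$ to any column's squared norm. Restricted strong convexity follows from Weyl's inequality: $\lambda_{\min}(\tilde X_I^T\tilde X_I) \geq \psi - O(k)$, since $\|X_I'^T X_I'\|$ is $O(k)$ for bounded entries. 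Incoherence is preserved via a Neumann/Woodbury expansion of $(\tilde X_I^T\tilde X_I)^{-1}$ about $(X_I^TX_I)^{-1}$, which shifts each entry of the incoherence vector by at most $O(k/\psi)$, remaining well below the $1/4$ threshold under the given lower bound on $n$.

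The main obstacle, and the step that forces the $k^2/(\log(d)\sigma^2)$ term in the sample complexity, is the bounded-noise condition. The effective noise of the augmented system is $\tilde W = \concatv{W}{Y' - X'\theta^*}$, whose adversarial tail has $\ell_\infty$ norm at most $1 + \|\theta^*\|_1 \leq 1+s$. Splitting $\|\tilde X_O^T(I - \tilde X_I(\tilde X_I^T\tilde X_I)^{-1}\tilde X_I^T)\tilde W\|_\infty$ into a clean and an adversarial part, the clean contribution inherits the original typicality bound, while the adversarial part is controlled via $\ell_\infty$--$\ell_1$ duality against each column of $\tilde X_O$ together with spectral control on the oblique projector (bringing in the factor $(1/\psi+1)$), giving a bound of order $s^2 k(1/\psi+1)$. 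Demanding $n \geq 4s^4 k^2(1/\psi+1)^2/(\log(d)\sigma^2)$ is exactly what absorbs this extra term into $2\sigma\sqrt{(n+k)\log d}$. The delicate part of the argument is calibrating the degradation of the effective $\sigma$ and $\psi$ so that the \emph{same} choice of $\lambda$ works for both the clean and augmented systems simultaneously; once this is done, Theorem~\ref{thm:LassoRecoveryGen} applied to $(\theta^*,\concat{\tilde X}{\tilde Y})$ immediately yields $\Supp(\Learn(\concatt{X}{Y}{X'}{Y'})) = \Supp(\theta^*)$, completing the proof.
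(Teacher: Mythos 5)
The paper does not actually prove this statement: it is imported verbatim from Thakurta et al.\ \cite{thakurta2013differentially} as a ``borrowed result,'' so there is no in-paper proof to compare against. Judged on its own merits, your reduction---show that the augmented system $(\theta^*,\concat{\tilde X}{\tilde Y})$ with $\tilde X=\concatv{X}{X'}$, $\tilde Y=\concatv{Y}{Y'}$ is still (approximately) typical and then rerun the support-recovery argument---is the right strategy and matches how the cited source proceeds, and your accounting of the bounded-noise condition is the part that genuinely carries the content: the adversarial tail of $\tilde W$ contributes on the order of $s^2k(1/\psi+1)$, and the condition $n\geq 4s^4k^2(1/\psi+1)^2/(\log(d)\sigma^2)$ is precisely $\sigma\sqrt{n\log d}\geq 2s^2k(1/\psi+1)$, which absorbs that contribution into the $2\sigma\sqrt{(n+k)\log d}$ budget. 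Two corrections, though. First, restricted strong convexity needs no Weyl-type loss at all: $\tilde X_I^T\tilde X_I = X_I^TX_I + X_I'^TX_I' \succeq X_I^TX_I$, so the minimum eigenvalue can only increase; writing $\psi - O(k)$ is not wrong as an inequality but reflects the wrong mechanism and would cost you slack you do not need to spend. Second, and more substantively, you cannot literally ``invoke Theorem~\ref{thm:LassoRecoveryGen} on the augmented system,'' because the typicality definition carries a hard incoherence threshold of $1/4$ and the clean system is only promised to meet it with no slack; after poisoning, the incoherence vector may exceed $1/4$ by the $O(s^{3/2}k/\psi)$-type perturbation you compute. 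The fix is standard but must be said: the primal--dual witness argument behind Theorem~\ref{thm:LassoRecoveryGen} tolerates any incoherence bounded away from $1$, so one reruns that argument with the relaxed constant (say $1/2$) rather than re-entering the black-boxed theorem. With that amendment, and noting that both the clean and poisoned supports equal $\Supp(\theta^*)$ (which gives the displayed chain of equalities), your outline is a faithful reconstruction of the intended proof.
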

\bigskip
\begin{theorem}[\citep{wainwright2009sharp}]\label{thm:LassoRecovery}
Let $X$ be a dataset sampled from $\mathcal{N}(0,1/4)^{n\times d}$ 
and $W$ be a noise vector sampled from $\mathcal{N}(0,\sigma^2)^n$. For any $\theta^*\in(0,1)^d$ with at most $s$ number of non-zero coordinates, for $\lambda=4\sigma \sqrt{n\times \log(d)}$ and $n =\omega( s\cdot \log(d))$, with

probability at least $3/4$ 

over the choice of $X$ and $W$ (that determine $Y$ as well) 
we have $\Supp(\hat{\theta}) = \Supp(\theta^*)$  where $\hat{\theta} = \Learn(\concat{X}{Y}).$ Moreover, $\hat{\theta}$ is a unique minimizer for $\Loss(\cdot, \concat{X}{Y})$.
\end{theorem}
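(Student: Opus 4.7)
The strategy is to invoke Theorem~\ref{thm:LassoRecoveryGen} as a black box: it already guarantees support recovery (and uniqueness of the Lasso minimizer) for any $(n,d,s,\psi,\sigma)$-typical system provided $n \geq 16\cdot\tfrac{\sigma}{\psi\cdot \alpha}\sqrt{s\log(d)}$ with $\lambda = 4\sigma\sqrt{n\log(d)}$. Thus the whole theorem reduces to showing that, when $X\sim \mathcal{N}(0,1/4)^{n\times d}$ and $W\sim \mathcal{N}(0,\sigma^2)^n$, the random pair $(\theta^*,\concat{X}{Y})$ is a typical system (for suitable $\psi=\Theta(n)$ and the same $\sigma$) with probability at least $3/4$, for every fixed $\theta^*\in (0,1)^d$ with $|\Supp(\theta^*)|\leq s$. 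Note that $\alpha = \min_i \max(\theta^*_i,1-\theta^*_i)\ge 1/2$ (in fact $>0$ suffices), so the condition $n = \omega(s\log d)$ will dominate the hypothesis of Theorem~\ref{thm:LassoRecoveryGen}.

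\textbf{Verifying the four typicality conditions.} I would establish each condition with failure probability at most $1/16$, then union bound. \emph{(i) Column normalization:} For each $j\in[d]$, $\|X_j\|_2^2$ is a $\tfrac{1}{4}\chi^2_n$ variable with mean $n/4$; a standard Gaussian/$\chi^2$ tail gives $\|X_j\|_2 \leq \sqrt{n}$ with probability $1-e^{-\Omega(n)}$, and a union bound over $j\in[d]$ is fine because $n=\omega(\log d)$. \emph{(ii) Restricted strong convexity:} $X_I \in \R^{n\times s}$ has i.i.d.\ $\mathcal{N}(0,1/4)$ entries, so by the Davidson--Szarek/Vershynin bound on extreme singular values of Gaussian matrices, $\lambda_{\min}(X_I^\top X_I) \geq \tfrac{1}{4}\cdot \tfrac{1}{4}(\sqrt{n}-\sqrt{s})^2 = \Omega(n)$ with probability $1-e^{-\Omega(n)}$ as soon as $n\gg s$, so we can take $\psi = cn$ for a universal constant $c>0$. \emph{(iii) Incoherence:} Conditioned on $X_I$, each row of $X_O^\top$ (a column of $X_O$) is independent of $X_I$ and has i.i.d.\ $\mathcal{N}(0,1/4)$ entries, so for the fixed vector $v := (X_I^\top X_I)^{-1}\mathrm{sign}(\theta^*) \in \R^s$ (which depends only on $X_I$), each coordinate of $X_O^\top X_I v$ is a centered Gaussian with variance $\tfrac{1}{4}\|X_I v\|_2^2 = \tfrac{1}{4}\mathrm{sign}(\theta^*)^\top (X_I^\top X_I)^{-1}\mathrm{sign}(\theta^*) \le \tfrac{s}{4\psi}$. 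A Gaussian maximum inequality across $d-s$ coordinates gives incoherence bound $\lesssim \sqrt{(s/\psi)\log d} = O(\sqrt{(s/n)\log d}) \leq 1/4$ whenever $n = \omega(s\log d)$. \emph{(iv) Bounded noise:} The matrix $P := I - X_I(X_I^\top X_I)^{-1}X_I^\top$ is the orthogonal projector onto the $(n-s)$-dimensional complement of the column span of $X_I$; conditional on $X_I$ it is deterministic, and $X_O$ and $W$ are independent of $X_I$. Thus each entry of $X_O^\top P W$, conditional on $(X_I,W)$, is a centered Gaussian with variance $\tfrac{1}{4}\|PW\|_2^2 \leq \tfrac{1}{4}\|W\|_2^2$. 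Since $\|W\|_2^2 \leq 2\sigma^2 n$ with high probability, a Gaussian maximum gives $\|X_O^\top P W\|_\infty \leq 2\sigma\sqrt{n\log d}$ with probability $\geq 15/16$.

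\textbf{Putting it together.} A union bound over the four events yields, with probability $\geq 3/4$, that $(\theta^*,\concat{X}{Y})$ is an $(n,d,s,\psi,\sigma)$-typical system with $\psi = \Omega(n)$. Since $\alpha\ge 1/2$ for any $\theta^*\in (0,1)^d$, the hypothesis $n\geq 16\tfrac{\sigma}{\psi \alpha}\sqrt{s\log d}$ of Theorem~\ref{thm:LassoRecoveryGen} becomes $n \geq C\sqrt{s\log d}$ for some constant, which is implied by $n = \omega(s\log d)$. Applying that theorem with $\lambda = 4\sigma\sqrt{n\log d}$ gives $\Supp(\hat\theta) = \Supp(\theta^*)$, and the uniqueness of the minimizer follows because the restricted strong convexity condition (ii) makes the Lasso objective strictly convex on the support, and the incoherence condition (iii) together with the KKT conditions forces the dual certificate for coordinates outside the support to be strictly less than $1$ in magnitude --- this is exactly the ``strict dual feasibility'' argument underlying Wainwright's primal-dual witness construction.

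\textbf{Expected obstacle.} The routine parts are column normalization, restricted strong convexity, and the noise bound --- all standard concentration for Gaussian matrices/quadratic forms. The main delicate step is incoherence: one must carefully exploit that $X_O$ is independent of $X_I$ (so conditioning on $X_I$ turns $X_O^\top X_I v$ into a Gaussian vector with a tractable covariance), and control the factor $(X_I^\top X_I)^{-1}$ uniformly --- which is why we need the RSC step first so that we may condition on the high-probability event $\lambda_{\min}(X_I^\top X_I) = \Omega(n)$ before taking the Gaussian tail bound over the $d-s$ out-of-support coordinates.
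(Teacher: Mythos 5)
Your proposal is correct and follows exactly the route the paper indicates: the paper does not write out a proof of this borrowed result, but explicitly states that Theorems \ref{thm:LassoRecovery} and \ref{thm:LassoRobustRecovery} are instantiations of Theorems \ref{thm:LassoRecoveryGen} and \ref{thm:LassoRobustRecoveryGen} obtained by showing the typicality conditions of Definition \ref{def:typical} hold with high probability for the Gaussian ensemble, which is precisely what your four concentration arguments establish. Your fleshed-out verification of column normalization, restricted strong convexity, incoherence, and the noise bound is the standard Wainwright-style argument and fills in the details the paper omits.
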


The above theorem requires the dataset to be sampled from a certain distribution and does not take into account the possibilities of outliers in the data. The robust version of this theorem, where part of the training data is chosen by an adversary, can be instantiated using Theorem \ref{thm:LassoRecoveryGen} as follows:

\begin{theorem} [\citep{thakurta2013differentially}]\label{thm:LassoRobustRecovery}
Let $X$ be a dataset sampled from $\mathcal{N}(0,1/4)^{n\times d}$ 
and $W$ be a noise vector sampled from $\mathcal{N}(0,\sigma^2)^n$. For any $\theta^*\in(0,1)^d$,
if $\lambda = 4\sigma\sqrt{n\times\log(d)}$ and $n= \omega(s \log(d) + s^4\cdot k^2)$, with probability at least $3/4$ 

over the choice of $X,W$ (determining $Y$), and $Y=X\times \theta^* + W$ it holds that, adding \emph{any} set of $k$ labeled vectors $\concat{X'}{Y'}$, such that rows of $X'$ has $\ell_\infty$ norm at most 1 and $Y$ has $\ell_\infty$ norm at most $s$, to $\concat{X}{Y}$ would not change the support set of the model recovered by Lasso estimator. Namely,
\iffull
\begin{align*}
    \Supp\left(\Learn\left(\concatt{X}{Y}{X'}{Y'}\right)\right)&=\Supp(\Learn(\concat{X}{Y})
    =\Supp(\theta^*).
\end{align*}
\else
\begin{align*}
    \Supp\left(\Learn\left(\concatt{X}{Y}{X'}{Y'}\right)\right)&=\Supp(\Learn(\concat{X}{Y}))\\
    &=\Supp(\theta^*).
\end{align*}
\fi

\end{theorem}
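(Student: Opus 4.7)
Since Theorem~\ref{thm:LassoRobustRecoveryGen} already provides robust support recovery for any $(n,d,s,\psi,\sigma)$-typical system, the entire argument reduces to showing that, with probability at least $3/4$ over the joint draw of $X\sim\mathcal{N}(0,1/4)^{n\times d}$ and $W\sim\mathcal{N}(0,\sigma^2)^{n}$, the pair $(\theta^*, [X\mid Y])$ is typical for parameters $\psi=\Theta(n)$ and the noise level $\sigma$ given in the theorem, and that the hypothesis $n=\omega(s\log d+s^{4}k^{2})$ translates (with $\psi=\Theta(n)$) into the two arithmetic conditions
$n\geq\frac{16\sigma}{\psi\alpha}\sqrt{s\log d}$ and
$n\geq\frac{4s^{4}k^{2}(1/\psi+1)^{2}}{\log(d)\,\sigma^{2}}$
required by Theorem~\ref{thm:LassoRobustRecoveryGen}. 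The conclusion then follows directly from that theorem, so no further work on the adversarial additions is needed here; all of the difficulty is in verifying typicality of the Gaussian ensemble.

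\paragraph{Verifying the four typicality conditions.} I would dispatch the four bullets of Definition~\ref{def:typical} in order, each by a concentration argument followed by a union bound. For \emph{column normalization}, each column of $X$ has squared $\ell_2$-norm distributed as $\tfrac14\chi^{2}_{n}$, so a standard chi-squared tail bound plus a union bound over $d$ columns gives $\|X_{\cdot j}\|_2\leq\sqrt{n}$ for all $j\in[d]$ with probability $1-e^{-\Omega(n)}$ whenever $n=\omega(\log d)$. For \emph{restricted strong convexity}, $X_I$ is an $n\times s$ matrix of i.i.d.\ $\mathcal{N}(0,1/4)$ entries, so by Davidson--Szarek (or Bai--Yin) its smallest singular value is at least $c\sqrt{n}$ with probability $1-e^{-\Omega(n)}$ once $n=\omega(s)$; this gives $\lambda_{\min}(X_I^{\top}X_I)\geq\psi$ for some $\psi=\Theta(n)$. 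For the \emph{noise} condition, conditional on $X$ the vector $X_{O}^{\top}(I-X_{I}(X_{I}^{\top}X_{I})^{-1}X_{I}^{\top})W$ is Gaussian with each coordinate of variance at most $\sigma^{2}\|X_{O,\cdot j}\|_{2}^{2}\leq\sigma^{2}n/4$ by column normalization and the fact that the projector is contractive; a Gaussian maximum inequality and union bound over $d-s$ coordinates then yield the desired $\|\cdot\|_\infty\leq 2\sigma\sqrt{n\log d}$ bound with probability at least $1-d^{-\Omega(1)}$.

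\paragraph{The main obstacle: incoherence.} The condition I expect to be delicate is incoherence, $\|(X_{O}^{\top}X_{I})(X_{I}^{\top}X_{I})^{-1}\mathrm{sign}(\theta^{*})\|_{\infty}\leq 1/4$. The plan is to condition on $X_I$ (and hence on the projector and on $v:=X_{I}(X_{I}^{\top}X_{I})^{-1}\mathrm{sign}(\theta^{*})$) and exploit the independence of $X_O$ from $X_I$. Then each coordinate of the incoherence vector is $\langle X_{O,\cdot j},v\rangle$, a scalar Gaussian of variance $\tfrac14\|v\|_{2}^{2}$. Using the RSC bound from the previous step, $\|v\|_{2}^{2}=\mathrm{sign}(\theta^{*})^{\top}(X_{I}^{\top}X_{I})^{-1}\mathrm{sign}(\theta^{*})\leq s/\psi=O(s/n)$, so each coordinate is $\mathcal{N}(0,O(s/n))$. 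A Gaussian maximum inequality over the $d-s$ off-support coordinates then gives a bound of order $\sqrt{(s/n)\log d}$, which is $\leq 1/4$ whenever $n=\omega(s\log d)$. The subtlety is to argue this conditionally on the RSC event (so that the bound on $\|v\|_2$ is available) without breaking the Gaussian structure of $X_O$, which is clean because independence of $X_O$ and $X_I$ is preserved under any event measurable with respect to $X_I$.

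\paragraph{Wrap-up.} Putting the four events together by a union bound, their joint probability is at least $3/4$ provided $n=\omega(s\log d)$, which is dominated by the stated hypothesis $n=\omega(s\log d+s^{4}k^{2})$. On this joint event $\psi=\Theta(n)$ and the two arithmetic conditions on $n$ required by Theorem~\ref{thm:LassoRobustRecoveryGen} are met, so applying that theorem yields the conclusion that for every adversarial $[X'\mid Y']$ of size $k$ with the stated $\ell_{\infty}$ bounds, $\mathrm{Supp}(\mathsf{Lasso}([X\mid Y];[X'\mid Y']))=\mathrm{Supp}(\theta^{*})$. I do not expect any surprises from the $\ell_\infty$ bound on $Y$ of size $s$ (relative to the $\ell_\infty$ bound of $1$ on rows of $X'$): it feeds directly into the quadratic-in-$k$ second condition of Theorem~\ref{thm:LassoRobustRecoveryGen} through the same worst-case analysis already done in~\cite{thakurta2013differentially}.
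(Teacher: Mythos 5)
Your proposal is correct and matches the paper's intended argument exactly: the paper treats Theorem~\ref{thm:LassoRobustRecovery} as a borrowed instantiation of Theorem~\ref{thm:LassoRobustRecoveryGen}, stating only that the four typicality conditions of Definition~\ref{def:typical} hold with high probability for the Gaussian ensemble, which is precisely the reduction you carry out. Your concentration arguments (chi-squared tails for column normalization, smallest-singular-value bounds for restricted strong convexity, and conditional Gaussian maxima for incoherence and the noise bound) are the standard way to verify those conditions and fill in the details the paper omits.
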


Note that Theorems \ref{thm:LassoRecovery} and \ref{thm:LassoRobustRecovery} are instantiation of Theorems \ref{thm:LassoRecoveryGen} and \ref{thm:LassoRobustRecoveryGen} for normal distribution and  are proved by showing that the sufficient conditions of those theorems will happen with high probability over the choice of dataset.

\section{Omitted Proofs}\label{proof:prop}
In this section, we prove Proposition \ref{prop:main} and Theorem \ref{thm:main}.

\subsection{Proof of Proposition \ref{prop:main}}
\begin{proof}
We first argue that winning the data-aware game of Definition \ref{security:Feature} is always possible. This is because, after getting the dataset $\concat{X}{Y}$ the adversary inspects the dataset to find out which coordinate is unstable and find a poisoning dataset that would add that unstable coordinate to the support set of the model.  

Now, we prove the other part of the proposition. That is, we show that no adversary can win the oblivious security game of Definition \ref{security:Feature} with probability more than $\epsilon$. 
The reason behind this claim is the $(k,\epsilon)$-resiliency of the dataset. For any fixed poisoning dataset $S'$ selected by adversary, the probability of $S'$ being successful in changing the support set is at most $\epsilon$. Therefore, the best strategy for an adversary that does not see the dataset is to pick the best possible poison dataset that maximizes the average success over all training data sampled from $D$, which we know is smaller than $\epsilon$ because of the resiliency. Note that, by averaging argument, randomness does not help the oblivious attack.

Therefore, the proof of Proposition \ref{prop:main} is complete.
\end{proof}

\subsection{Proof of Theorem \ref{thm:main}}\label{sec:optimal-attack}
Here, we outline the main lemmas that we need to prove Theorem \ref{thm:main}. We first some intermediate theorem and lemmas  that will be used to prove the main result. Then we prove these these intermidiate lemmas in the following subsection. 
 The following theorem shows an upper bound on the number of examples that a data-aware adversary need to add a  non-relevant feature to the support set of resulting model. 
 Before stating the Theorem, we define two useful notions.

\begin{definition}\label{def:alphabeta}
We define $$\alpha_i(\concat{X}{Y}) = X^T[i](Y- X\cdot \hat{\theta})$$
where $\hat{\theta} = \Lasso(\concat{X}{Y}).$
We also define $\beta_i$ similarly with difference that the minimization of Lasso is done in the subspace of vectors with the correct support. Namely,
$$\beta_i(\concatt{X}{Y}{X'}{Y'}) = X^T[i](Y- X\cdot \hat{\theta'})$$
where 
$\hat{\theta'} = \argmin _{\theta \in C} \frac{1}{n}\cdot\norm{\concatv{Y}{Y'}-\concatv{X}{X'}\times\theta}_2^2 + \frac{2\lambda}{n}\cdot \norm{\theta}_1.$ and $C$ is the subspace of models that their $i$th feature is 0 for all $i\not \in \Supp(\theta)$.
\end{definition}


\begin{theorem}[Unstability of Gaussian]\label{thm:attack}
Let $X \in \R^{n\times d}$ be an arbitrary matrix, $\theta^*\in [0,1]^d$ be an arbitrary vector,  $W$ be a noise vector sampled from $\mathcal{N}(0,\sigma^2)^{n\times 1}$, and let $Y=X\times \theta^* + W$. Also let $\lambda$ be the penalty parameter that is used for Lasso. Then for any $i$ there is a dataset $\concat{X'}{Y'}$ with at most $\lambda - |\alpha_i([\concat{X}{Y})|$ examples  
of $\ell_2$ norm at most $1$, such that 
$$i \in \Supp\left(\Learn\left(\concatt{X}{Y}{X'}{Y'}\right)\right).$$

\end{theorem}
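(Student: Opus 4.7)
The plan is to exhibit an explicit attack and analyze it through the KKT optimality conditions of the Lasso. Concretely, I would take each of the $k$ poison rows to be a copy of $(e_i, s)$, where $e_i$ is the $i\th$ standard basis vector (so that $\|e_i\|_2 = 1$) and $s = \mathrm{sign}(\alpha_i(\concat{X}{Y}))$. Intuitively, these rows pour evidence only into feature $i$ in a direction aligned with the existing residual correlation $\alpha_i$, pushing the dual coordinate past the $\lambda$-threshold that governs whether $i$ lies in the support of the Lasso solution.

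First I will set up the KKT conditions. If $\tilde\theta=\Lasso(\concatt{X}{Y}{X'}{Y'})$ has $\tilde\theta_i=0$, then writing $\tilde X=\concatv{X}{X'}$ and $\tilde Y=\concatv{Y}{Y'}$, the subgradient condition at coordinate $i$ forces
$$\bigl|\tilde X_i^\top(\tilde Y-\tilde X\tilde\theta)\bigr|\leq \lambda.$$
The strategy is to show that under the chosen $(X',Y')$ this inequality fails once $k>\lambda-|\alpha_i|$, contradicting $\tilde\theta_i=0$ and therefore forcing $i\in\Supp(\tilde\theta)$.

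The key intermediate step is that if $\tilde\theta_i=0$, then $\tilde\theta$ must coincide with the clean Lasso minimizer $\hat\theta$. This is exactly where the construction is used: because $X'$ is supported only on column $i$, every $\theta$ with $\theta_i=0$ satisfies $X'\theta=0$, so the poisoned Lasso objective restricted to $\{\theta:\theta_i=0\}$ equals the clean Lasso objective plus the constant $\|Y'\|_2^2$. Hence the two restricted minimizers agree. If $\hat\theta_i\neq 0$ to begin with, then $i$ is already in the support and, by the KKT equality for active coordinates, $|\alpha_i|=\lambda$, so the claimed bound $\lambda-|\alpha_i|=0$ is trivially met; otherwise $\hat\theta$ is itself the clean restricted minimizer, so $\tilde\theta=\hat\theta$.

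Plugging $\tilde\theta=\hat\theta$ back into the subgradient at coordinate $i$ yields
$$\tilde X_i^\top(\tilde Y-\tilde X\hat\theta)=X_i^\top(Y-X\hat\theta)+{X'_i}^\top Y'=\alpha_i+k\cdot s=s\bigl(|\alpha_i|+k\bigr),$$
whose magnitude is $|\alpha_i|+k$. For $k>\lambda-|\alpha_i|$ this strictly exceeds $\lambda$, violating KKT, so $\tilde\theta_i\neq 0$. The hard parts I anticipate are (i) the boundary case $k=\lambda-|\alpha_i|$, where the subgradient equals $\lambda$ exactly and both $\tilde\theta_i=0$ and $\tilde\theta_i\neq 0$ can satisfy KKT, which one should handle by a tie-breaking or a tiny additive slack; and (ii) uniqueness of the Lasso minimizer needed to equate $\tilde\theta$ with $\hat\theta$ on the $\{\theta_i=0\}$ slice—this follows under mild non-degeneracy (which holds almost surely for the Gaussian $X$ used in Theorem~\ref{thm:main}), but must be addressed when $X$ is a fully arbitrary design.
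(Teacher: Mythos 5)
Your proposal is correct and follows essentially the same route as the paper's proof: the identical attack (poison rows equal to $(e_i,\mathrm{sign}(\alpha_i))$), the same KKT/subgradient characterization (the paper's Lemma~\ref{lem:1}), and the same observation that on the slice $\{\theta:\theta_i=0\}$ the poisoned objective equals the clean one plus a constant, so the restricted minimizers coincide and the dual coordinate becomes $|\alpha_i|+k\geq\lambda$, contradicting optimality with $\tilde\theta_i=0$. The boundary and uniqueness caveats you flag are real (the paper handles the boundary case by asserting a strict inequality in its Lemma~\ref{lem:1} for zero coordinates), but they do not change the substance of the argument.
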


Theorem above proves the existence of an attack that can add any feature to the training set. Below, we first provide the description of the attack. 

\paragraph{The Attack:} To attack a feature $i$ with $k$ examples, The attack first calculates $b=\Sign(\alpha_i(\concat{X}{Y}))$ use a dataset $\cS'=\concat{X'}{Y
 '}$ as follows:
\begin{align*}
    X' = \left[
    \begin{array}{cccc}
         0 & \dots & 1 & 0\\
         \vdots&\ddots &\vdots &\vdots\\
         0 & \dots & 1 & 0
    \end{array}
    \right] \in \mathbb{R}^{k \times d} 
    , 
    Y' = \left[
    \begin{array}{c}
          b\\
         \vdots\\
          b
    \end{array}
    \right] \in \mathbb{R}^{k \times 1}.
\end{align*}
The attack then adds $S'$ to the training set. Note that this attack is oblivious as it does not use the knowledge of the clean training set. This is the attack that we use in our experiments in Section \ref{sec:experiments}.

\begin{definition}[Re-sampling Operator] We define $R(X, I , \sigma)$ to be an operator that removes the $i$th column of $X$ and replace it with a fresh sample from $\Normal(0,\sigma^2)$ for all $i\in I$.
\end{definition}

\begin{theorem}[Resilience of Gaussian]\label{thm:failure_oblivious}
Let $[X',Y']$ be a dataset such that $|X'|_1 \leq k$ and let $S=\Supp(\Lasso(\concat{X}{Y})$ then we have
$$\Pr[ \Supp(\Lasso(\concatt{R(X,[d]\setminus S,\sigma)}{Y}{X'}{Y'})) \neq S ]\leq 2e^{-\frac{(\lambda-2k)^2}{2\sigma_2^2}}$$
where $\sigma^2_2 =\norm{(Y-\hat{\theta'}X)}^2_2 \cdot \sigma^2\leq (n+k)\sigma^2.$
\end{theorem}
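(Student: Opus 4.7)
The argument will be a standard primal-dual witness for Lasso, combined with the observation that resampling an ``out-of-support'' column produces a fresh Gaussian that is independent of the restricted Lasso solution. Write $\tilde{X} = R(X,[d]\setminus S,\sigma)$, and let $\hat{\theta}'$ denote the minimizer of the Lasso objective on the poisoned data $(\tilde{X}, X', Y, Y')$ restricted to the subspace where $\theta_i = 0$ for every $i\notin S$. By convex duality (KKT), this restricted minimizer coincides with the \emph{unrestricted} Lasso minimizer on the poisoned data -- and therefore the support of the full solution equals $S$ -- whenever the subgradient certificate
\[
\mathbf{v}_i \;=\; \tilde{X}[i]^T\bigl(Y - \tilde{X}\hat{\theta}'\bigr) \;+\; X'^T[i]\bigl(Y' - X'\hat{\theta}'\bigr) \;=:\; \beta_i + \gamma_i
\]
satisfies $|\mathbf{v}_i| < \lambda$ for every $i \notin S$; here $\beta_i$ is exactly the quantity of Definition~\ref{def:alphabeta} and $\gamma_i$ is the poisoning contribution.

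Next I will bound $\gamma_i$ deterministically using the norm constraints on the poison. Since $\|(X'_j, Y'_j)\|_1 \le 1$ and $\hat{\theta}' \in (0,1)^d$, each summand obeys $|Y'_j - X'_j \hat{\theta}'| \le |Y'_j| + \|X'_j\|_1 \|\hat{\theta}'\|_\infty \le 2$, hence $|\gamma_i| \le 2 \sum_{j=1}^k |X'_{j,i}| \le 2\,|X'|_1 \le 2k$. Consequently the support is preserved whenever $|\beta_i| < \lambda - 2k$ for all $i \notin S$.

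The crux is the conditional distribution of $\beta_i$. Because $\hat{\theta}'_i = 0$ for every $i \notin S$, the restricted optimization involves only the $S$-indexed columns of $\tilde{X}$ (together with $X', Y, Y'$); in particular $\hat{\theta}'$ is measurable with respect to $\sigma(\tilde{X}_S, X', Y, Y')$, and the residual $Y - \tilde{X}\hat{\theta}' = Y - \tilde{X}_S\hat{\theta}'_S$ enjoys the same measurability. For $i \notin S$, the resampled column $\tilde{X}[i] \sim \mathcal{N}(0,\sigma^2 I_n)$ is independent of this $\sigma$-algebra, so conditionally on $(\tilde{X}_S, X', Y, Y')$, the scalar $\beta_i = \tilde{X}[i]^T (Y - \tilde{X}\hat{\theta}')$ is a mean-zero Gaussian with variance $\sigma^2 \|Y - \tilde{X}\hat{\theta}'\|_2^2 = \sigma_2^2$. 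The standard Gaussian tail bound gives $\Pr[\,|\beta_i| \ge \lambda - 2k\,] \le 2 e^{-(\lambda-2k)^2/(2\sigma_2^2)}$, which combined with the deterministic bound on $\gamma_i$ yields the claimed tail (for a fixed coordinate $i \notin S$; passing to the event $\Supp \neq S$ requires a union bound over the at most $d-|S|$ coordinates, a factor absorbed by the downstream choice $\lambda = \Omega(k + \sigma\sqrt{(n+k)\ln(d/\eps_1)})$).

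The main obstacle will be justifying the independence step cleanly: one must argue that $\hat{\theta}'$ is genuinely a function only of $(\tilde{X}_S, X', Y, Y')$, which is true precisely because the restricted program never sees columns outside $S$. A secondary subtlety is the claimed bound $\sigma_2^2 \le (n+k)\sigma^2$, which I will obtain by comparing the value of the restricted Lasso objective at $\hat{\theta}'$ against its value at $\theta = 0$ on the combined data, giving $\|Y - \tilde{X}\hat{\theta}'\|_2^2 + \|Y' - X'\hat{\theta}'\|_2^2 \le \|Y\|_2^2 + \|Y'\|_2^2 \le n + k$ under the paper's normalization.
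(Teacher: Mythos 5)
Your proposal reproduces the skeleton of the paper's argument almost exactly: the subgradient certificate $\mathbf{v}_i=\beta_i+\gamma_i$ is the content of Theorem~\ref{thm:attack_opt}, the deterministic bound $|\gamma_i|\le 2\|X'^T[i]\|_1\le 2k$ is the vector $r=(2r_1,\dots,2r_d)$ in the paper's proof, and the conditional Gaussianity of $\beta_i$ with variance $\sigma_2^2=\sigma^2\|Y-\tilde X\hat\theta'\|_2^2\le(n+k)\sigma^2$ is Lemma~\ref{lemma:Gaus} (your measurability argument --- that $\hat\theta'$ and the residual depend only on $\tilde X_S,X',Y,Y'$ and are therefore independent of the resampled out-of-support columns --- is in fact spelled out more carefully than in the paper, which is a point in your favor).

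The genuine gap is in the last step. You control $\max_{i\notin S}|\mathbf{v}_i|$ by a union bound over the $d-|S|$ out-of-support coordinates, which yields $2(d-|S|)e^{-(\lambda-2k)^2/(2\sigma_2^2)}$ rather than the dimension-free bound $2e^{-(\lambda-2k)^2/(2\sigma_2^2)}$ claimed in the theorem. The paper avoids the factor of $d-|S|$ by a different device: Lemma~\ref{lem:erf} and Corollary~\ref{cor:gaussianstrategy}, which exploit the fact that the adversary's \emph{total} $\ell_1$ budget over all columns is $k$, so that the shift vector $r$ satisfies $|r|_1\le 2k$ globally; the submultiplicativity inequality $f(a)f(b)>f(|a|+|b|)$ for the normalized per-coordinate survival probabilities shows that spreading the budget over many independent resampled coordinates cannot beat concentrating it on one, giving a tail bound governed by the total budget with no explicit $d$. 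This is not a cosmetic difference: in the ``putting things together'' step the separation is obtained by taking $d$ astronomically large (so that \emph{some} coordinate is unstable for the data-aware adversary), while $\lambda$ is chosen as $2k+\sigma\sqrt{2(n+k)\log(2/\eps_1)}$ with no $\log d$ term. With your union-bound version you would need $\lambda\gtrsim 2k+\sigma\sqrt{(n+k)\log(d/\eps_1)}$, and since $d$ is subsequently chosen as a function of $\lambda$, your parenthetical claim that the factor is ``absorbed by the downstream choice'' introduces a circular dependence between $\lambda$ and $d$ that would have to be resolved explicitly; it is not automatic. To prove the theorem as stated you need an argument, like the paper's budget-concentration lemma, that charges the adversary for its aggregate $\ell_1$ mass rather than paying for each coordinate separately. (One further shared caveat, not specific to your write-up: both arguments only rule out \emph{adding} a coordinate outside $S$; the event $\Supp\neq S$ as written also covers deletions from $S$, which the certificate alone does not exclude.)
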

Theorem above states that if we re-sample the $i$th coordinate of $X$, then the probability of $[X',Y']$ being successful in adding $i$th feature to support set is limited.

Lastly, we state a lemma that shows a lower bound on the error of the lasso estimator. This Lemma will be used in analyzing the power of data-aware adversary.

\begin{lemma}\label{lem:largeerror}
Let $\hat{\theta}=\Lasso(\concat{X}{Y})$ and $w=\norm{Y-X\hat{\theta}}_2$. Also assume for each column of $X$ we have $\norm{X^T[i]}_2\leq L$. then we have,
$$w\geq \frac{\lambda}{L}.$$
\end{lemma}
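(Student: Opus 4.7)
The plan is to derive the bound directly from the first-order (subgradient) optimality conditions for the Lasso objective. Let $F(\theta) = \norm{Y - X\theta}_2^2 + 2\lambda\norm{\theta}_1$, so $\hat\theta$ is a minimizer of $F$. Since the first term is smooth and the $\ell_1$ term has a well-known subdifferential, a necessary optimality condition is that for each coordinate $i$,
$$
-2\, X^T[i]\,(Y - X\hat\theta) + 2\lambda\, g_i \;=\; 0,
$$
for some $g_i \in \partial|\hat\theta_i|$, i.e., $g_i = \operatorname{sign}(\hat\theta_i)$ when $\hat\theta_i \neq 0$, and $g_i \in [-1,1]$ when $\hat\theta_i = 0$. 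In particular, if we fix any coordinate $i$ with $\hat\theta_i \neq 0$, we get the clean equality $X^T[i](Y - X\hat\theta) = \lambda \operatorname{sign}(\hat\theta_i)$, hence $|X^T[i](Y - X\hat\theta)| = \lambda$.

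Given this equality, the rest is immediate from Cauchy--Schwarz combined with the column-norm hypothesis: for the same index $i$,
$$
\lambda \;=\; \bigl|X^T[i]\,(Y - X\hat\theta)\bigr| \;\leq\; \norm{X^T[i]}_2 \cdot \norm{Y - X\hat\theta}_2 \;\leq\; L \cdot w,
$$
which rearranges to the desired $w \geq \lambda/L$. So modulo the existence of a nonzero coordinate of $\hat\theta$, the lemma is a one-line KKT calculation.

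The main obstacle, and the only real subtlety, is the edge case where $\hat\theta = 0$. In that situation the subgradient is only guaranteed to lie in $[-1,1]$, so the KKT condition just yields $|X^T[i](Y - X\hat\theta)| \leq \lambda$, which is the wrong direction and cannot imply $w \geq \lambda/L$ in general (e.g.\ $Y = 0$ forces $w = 0$). In the context in which this lemma is used, however, the lemma is applied to datasets for which Lasso recovers a nontrivial support set $\Supp(\hat\theta) = \Supp(\theta^*) \neq \emptyset$ (as guaranteed by Theorems~\ref{thm:LassoRecoveryGen} and \ref{thm:LassoRecovery}), so picking any $i \in \Supp(\hat\theta)$ activates the equality case above and the argument goes through. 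I would state this assumption explicitly at the start of the proof to avoid ambiguity.
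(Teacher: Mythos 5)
Your proof is correct and follows the same route as the paper: pick a coordinate in $\Supp(\hat\theta)$, use the stationarity condition (the paper's Lemma~\ref{lem:1}) to get $\bigl|X^T[j](Y - X\hat\theta)\bigr| = \lambda$, and apply Cauchy--Schwarz with the column-norm bound. Your observation that the argument silently requires $\Supp(\hat\theta) \neq \emptyset$ is a real (if minor) gap in the paper's statement of the lemma, which likewise begins ``Consider an index $j \in \Supp(\hat\theta)$'' without justifying that one exists; making that hypothesis explicit, as you suggest, is the right fix.
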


\paragraph{Putting things together} Now we put things together to complete the proof of Theorem \ref{thm:main}. For the oblivious adversary, by Theorem \ref{thm:failure_oblivious}, the probability of the oblivious attacker succeeding according to Theorem \ref{thm:attack_opt} is bounded by probability $ 2e^{-\frac{(\lambda-2k)^2}{2(n+k)\sigma^2}}$. This means, setting $\lambda=2k+\sigma\sqrt{2(n+k)\log(2/\epsilon_2)}$ will guarantee that the oblivious attacker will succeed with probability at most $\epsilon_2$.
For the data-aware adversary, consider the distribution $\R(X,\set{i},\sigma)[i](Y-X\hat{\theta})$. We know that this distribution is a Gaussian distribution with standard deviation $w\sigma$ for $w=\norm{Y-\hat{\theta}X}_2$. 
Therefore, by Theorem \ref{thm:attack}, and Gaussian tail bound, we know that with probability at least $p_1\geq 1- (1- 2e^{-2\frac{(\lambda-k)^2}{w\sigma^2}})^{d-s}$ over the choice of randomness on the $i$th column, the data-aware adversary will succeed by just doing succeed in adding a feature to the support set.  Also, using Lemma \ref{lem:largeerror}, we can show that this probability is larger than $1- (1- 2e^{-2\frac{(\lambda-k)^2L^2}{\lambda^2\sigma^2}})^{d-s}$.  Now, we can  set $ d=s+\frac{\log(1-\eps_1)}{\log(1- 2e^{-2\frac{L^2(\lambda-k)^2}{{\lambda}^2\sigma^2}})}$  so that the oblivious adversary succeeds with probability at least $\eps_1$.

 \subsection{Proofs of Theorems \ref{thm:attack},
 \ref{thm:attack_opt} and
 \ref{thm:failure_oblivious} and Lemmas  \ref{lemma:Gaus} and \ref{lem:largeerror}}
 
\iffull
We first state and prove the following useful lemma.
\else
We first state the following useful lemma. See supplementary material for proof of the Lemma.
\fi
\begin{lemma}\label{lem:1}
Let $X\in \R^{n\times d}$ and $Y\in \R^n$.
Let $\hat{\theta}$ be a vector that minimizes  
$\Loss(\cdot, \concat{X}{Y})$.
Then, for all non-zero coordinates $j \in [d]$, where $\hat{\theta}_j \neq 0$ we have

$$\sum_{i=1}^n X_{(i,j)}\cdot(Y_i-\langle\hat{\theta}, X_i\rangle) = -\lambda\cdot \Sign(\hat{\theta}_j),$$
and for all $0$ coordinates $j\in [d]$, where $\theta_j=0$, we have
$$\left|\sum_{i=1}^n X_{(i,j)}\cdot(Y_i-\langle\hat{\theta}, X_i\rangle)\right| < \lambda.$$
\end{lemma}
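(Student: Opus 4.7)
The plan is to derive both statements directly from the first-order (subgradient) optimality conditions applied to the convex Lasso objective. Since $\Loss(\theta, \concat{X}{Y}) = \norm{Y - X\theta}_2^2 + 2\lambda \norm{\theta}_1$ is convex, any global minimizer $\hat{\theta}$ must satisfy $0 \in \partial \Loss(\hat{\theta})$, which can be read off coordinate-by-coordinate. The quadratic term is smooth and contributes, at coordinate $j$,
\[
\frac{\partial}{\partial \theta_j}\norm{Y - X\theta}_2^2 = -2\sum_{i=1}^n X_{(i,j)}\bigl(Y_i - \langle \hat{\theta}, X_i\rangle\bigr),
\]
while the nonsmooth $\ell_1$ term contributes the subdifferential $2\lambda\,\partial|\hat{\theta}_j|$, which equals the singleton $\{2\lambda\,\Sign(\hat{\theta}_j)\}$ when $\hat{\theta}_j \neq 0$ and equals the interval $[-2\lambda, +2\lambda]$ when $\hat{\theta}_j = 0$.

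Setting the sum to zero in both cases gives the KKT system: there exists $s \in [-1,+1]^d$ with $s_j = \Sign(\hat{\theta}_j)$ whenever $\hat{\theta}_j \neq 0$ such that $\sum_i X_{(i,j)}(Y_i - \langle \hat{\theta}, X_i\rangle) = \lambda\, s_j$ for every $j$. Specializing to $\hat{\theta}_j \neq 0$ forces $s_j = \Sign(\hat{\theta}_j)$ and yields the claimed identity (up to the sign convention used in the lemma). Specializing to $\hat{\theta}_j = 0$ yields the non-strict bound $|\sum_i X_{(i,j)}(Y_i - \langle \hat{\theta}, X_i\rangle)| \leq \lambda$.

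The one nontrivial point is strengthening $\leq$ to the \emph{strict} inequality $<$ asserted in the lemma when $\hat{\theta}_j = 0$. This is the well-known ``strict dual feasibility'' condition in Lasso analysis, and it does not follow from bare convexity; I would handle it by invoking uniqueness of the minimizer, which in the settings of this paper is guaranteed whenever the feature matrix restricted to the active set has full column rank (an event that holds with probability $1$ for the absolutely continuous Gaussian designs used here, and is implicit in the typical-system conditions of Definition~\ref{def:typical}). Concretely, if one had equality $\sum_i X_{(i,j)}(Y_i - \langle \hat{\theta}, X_i\rangle) = \lambda$ on some inactive coordinate $j$, then an infinitesimal move of $\hat{\theta}_j$ in the appropriate sign direction would keep the objective value first-order flat, producing a second minimizer and contradicting uniqueness.

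The main obstacle in this proof is therefore not the mechanics of the KKT derivation, which is routine, but rather justifying the strict inequality; everything else is a short computation once convexity and the structure of $\partial\norm{\cdot}_1$ are in hand.
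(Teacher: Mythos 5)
Your derivation follows the same route as the paper's own proof: coordinate-wise first-order optimality for the convex Lasso objective, splitting into the smooth quadratic term and the subdifferential of the $\ell_1$ penalty. That part is correct, and your hedge about the sign is warranted: the computation gives $\sum_{i=1}^n X_{(i,j)}(Y_i-\langle\hat{\theta},X_i\rangle)=+\lambda\cdot\Sign(\hat{\theta}_j)$ on active coordinates, which is also what the paper's proof derives; the minus sign in the lemma statement appears to be a typo.

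The gap is in your treatment of the strict inequality on inactive coordinates, which you rightly identify as the only nontrivial point. Your proposed repair --- that equality $\left|\sum_i X_{(i,j)}(Y_i-\langle\hat{\theta},X_i\rangle)\right|=\lambda$ at an inactive $j$ would give a first-order flat direction and hence a second minimizer --- does not work: perturbing $\hat{\theta}_j$ by $\varepsilon$ in that direction changes the objective by $\varepsilon^2\sum_{i=1}^n X_{(i,j)}^2>0$ at second order, so the minimizer can stay unique. Indeed, uniqueness does not imply strict dual feasibility: take $n=d=1$, $X=(1)$, $Y=(\lambda)$; then $\hat{\theta}=0$ is the unique minimizer of $(\lambda-\theta)^2+2\lambda|\theta|$, yet $|X^T(Y-X\hat{\theta})|=\lambda$ exactly. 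So for arbitrary $X,Y$ only the non-strict bound $\le\lambda$ follows, with strictness holding only generically (e.g.\ almost surely for the continuous Gaussian designs used later in the paper). To be fair, the paper's own proof shares this defect --- it asserts that the left and right one-sided derivatives at the minimizer are strictly negative and strictly positive, whereas convexity only gives $\le 0$ and $\ge 0$ (and in the example above the right-derivative is exactly $0$) --- so you have not missed an idea the paper supplies; but your uniqueness-based fix is not a valid substitute, and the defensible statement is the one with $\le\lambda$, upgraded to $<\lambda$ only under a genericity assumption.
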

\iffull
\begin{proof}[Proof of Lemma~\ref{lem:1}]
Since $\hat{\theta}$ is a minimizer of $f(\cdot)$, the derivative of $f$ should be 0 or undefined on all coordinates at $\hat{\theta}$. 
Note that, for all non-zero coordinates $i$ the derivative of the second term $2\lambda\norm{\theta}_1$ is equal to $2\lambda\Sign(\theta_i)$. Therefore, for non-zero coordinates the derivative of the first term should be equal to $-2\lambda\cdot \Sign(\theta_i)$. That is,
$$2(X^T \times(Y-X\times \hat{\theta}))_i=2\lambda\cdot \Sign(\theta_i)$$
which proves the first part of the lemma. For the second part, note that the derivative of $f$ does not exist, but the left-hand and right-hand derivatives exist and $\hat{\theta}$ minimizes $f$. Therefore, the left-derivative should be negative and the right hand derivative should be positive. Thus, we have
$$2(X^T \times(Y-X\times \hat{\theta}))_i + 2\lambda >0,$$ 
and
$$2(X^T \times(Y-X\times \hat{\theta}))_i - 2\lambda <0,$$
which implies that
$$-\lambda <(X^T \times(Y-X\times \hat{\theta}))_i <\lambda,$$
finishing the proof of the lemma.
\end{proof}
\fi
Now we state an analytical lemma that helps us bound the effect of an oblivious adversary in increasing the $\ell_\infty$ norm of a Gaussian distribution by adding a predetermined vector to it.
\begin{lemma}\label{lem:erf}
Define $f_{L,\sigma}(x)=\frac{erf(\frac{L+x}{\sigma})+erf(\frac{L-x}{\sigma})}{2erf(\frac{L}{\sigma})}.$ For any $a \in R$ and $b \in R$ we have $f(a)f(b) > f(|a|+|b|).$
\end{lemma}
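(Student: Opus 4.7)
The plan is to reinterpret $f_{L,\sigma}$ as a log-concave marginal and then deduce the subadditivity of $\log f$ on $[0,\infty)$ from its concavity together with the fact that $f(0)=1$.

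First I would rewrite $f$ as a Gaussian integral in a convenient form. The identity $\operatorname{erf}(z) = \tfrac{2}{\sqrt{\pi}}\int_0^z e^{-t^2}\,dt$ combined with a change of variable $t=(u\pm x)/\sigma$ inside each summand gives
$$f_{L,\sigma}(x) \;=\; \frac{\int_{-L}^{L} e^{-(u+x)^2/\sigma^2}\,du}{\int_{-L}^{L} e^{-u^2/\sigma^2}\,du} \;=\; \frac{h(x)}{h(0)},$$
where $h(x) \eqdef \int_{-L}^{L} e^{-(u+x)^2/\sigma^2}\,du$. In particular $f(0)=1$, and the substitution $u\mapsto -u$ in $h$ shows that $f$ is even, so $f(x)=f(|x|)$.

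Next I would show $\log f$ is strictly concave on $\R$. The joint density $F(x,u) \eqdef e^{-(u+x)^2/\sigma^2}\,\mathbf{1}_{u\in[-L,L]}$ is log-concave on $\R^2$: its log equals the concave function $-(u+x)^2/\sigma^2$ on the convex domain $\R\times[-L,L]$ and $-\infty$ elsewhere. Prékopa--Leindler then implies that the $x$-marginal $h(x) = \int F(x,u)\,du$ is log-concave in $x$, so $\log f$ is concave. Strict concavity can be read off either from a strict version of Prékopa--Leindler (using that $-(u+x)^2/\sigma^2$ is strictly concave), or equivalently by a direct calculation of $(\log h)''$ that shows it equals $-2/\sigma^2 + \Var(U+x)\cdot(2/\sigma^2)^2/4$ for $U$ distributed with density proportional to $e^{-(u+x)^2/\sigma^2}$ on $[-L,L]$; the restriction to the bounded interval makes that variance strictly below $\sigma^2/2$, so the second derivative is strictly negative.

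The final step is to derive the desired subadditivity. Writing $g \eqdef \log f$, which is an even concave function with $g(0)=0$, let $a,b>0$. Since $a=\tfrac{a}{a+b}(a+b)+\tfrac{b}{a+b}\cdot 0$ with $\tfrac{a}{a+b}\in(0,1)$ and $a+b\ne 0$, strict concavity gives
$$g(a) \;>\; \tfrac{a}{a+b}\,g(a+b) + \tfrac{b}{a+b}\,g(0) \;=\; \tfrac{a}{a+b}\,g(a+b),$$
and symmetrically $g(b) > \tfrac{b}{a+b}\,g(a+b)$. Adding yields $g(a)+g(b) > g(a+b)$, i.e.\ $f(a)f(b) > f(a+b)$ for all $a,b>0$. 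By evenness of $f$ this extends directly to arbitrary nonzero $a,b\in\R$ via $f(a)f(b)=f(|a|)f(|b|) > f(|a|+|b|)$, which is the claim.

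The main obstacle is obtaining \emph{strict} inequality rather than the weak one; weak concavity of $\log f$ only delivers $f(a)f(b)\ge f(|a|+|b|)$. I would address this by leaning on the strict form of Prékopa--Leindler applicable to the strictly log-concave Gaussian density, or, if a self-contained argument is preferred, by verifying $(\log h)''(x) < 0$ through the variance identity above, which reduces the task to the (obvious) fact that the restriction of a Gaussian to the bounded set $[-L,L]$ has variance strictly smaller than the unrestricted Gaussian.
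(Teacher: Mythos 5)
Your proof is correct and rests on the same pivot as the paper's: concavity of $g=\log f$ together with $g(0)=0$ and evenness. The differences are worth noting. The paper simply asserts that $g$ is concave (``easy to check'') and then derives subadditivity from the tangent-line bound $g(|a|+|b|)\le g(|a|)+|b|g'(|a|)$ combined with monotonicity of $g'$ and the inequality $|x|g'(|x|)\le g(x)$; that chain only yields the \emph{non-strict} inequality $f(a)f(b)\ge f(|a|+|b|)$, even though the lemma is stated with $>$. You instead supply what the paper omits: the identity $f(x)=h(x)/h(0)$ with $h(x)=\int_{-L}^{L}e^{-(u+x)^2/\sigma^2}\,du$ exhibits $f$ as a normalized marginal of a log-concave function, so Pr\'ekopa gives concavity of $\log h$, and your variance computation (or Brascamp--Lieb: truncating the Gaussian to $[-L,L]$ strictly reduces its variance below $\sigma^2/2$) upgrades this to \emph{strict} concavity, from which the convex-combination argument $g(a)>\tfrac{a}{a+b}g(a+b)$, $g(b)>\tfrac{b}{a+b}g(a+b)$ delivers the strict inequality the lemma actually claims. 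So your route is both more complete and, on the strictness point, more faithful to the statement. Two small remarks: your displayed formula for $(\log h)''$ has the wrong constant --- it should be $-2/\sigma^2+(2/\sigma^2)^2\Var(U+x)$, not $-2/\sigma^2+(2/\sigma^2)^2\Var(U+x)/4$ --- but the negativity criterion $\Var<\sigma^2/2$ you state is the one matching the correct formula, so the conclusion stands. Also, as you implicitly acknowledge by restricting to nonzero $a,b$, the strict inequality genuinely fails when $a=0$ or $b=0$ (there $f(a)f(b)=f(|a|+|b|)$); this is a defect of the lemma's statement, not of your argument.
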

\begin{proof}
Define $g(x)=\log(f_{L,\sigma}(x)).$ It is easy to check that $g$ is a concave function with the property that $|x|g'(|x|)\leq g(x).$ Assume $|b|<|a|$,  we have
$$g(|a|+|b|) \leq g(|a|) + |b|g'(|a|)\leq g(a) + |b|g'(|b|) \leq g(a)+g(b).$$
\end{proof}
\begin{corollary}\label{cor:gaussianstrategy}
Let $a=R^d$ be a vector such that $|a|_1=l$ and let $b\equiv \N(0,\sigma^2)^d$. We have, $\Pr[|b+a|_\infty > r] \leq 2e^{\frac{-(r-l)^2}{2\sigma^2}}.$ 
\end{corollary}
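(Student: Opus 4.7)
The plan is to use the independence of the coordinates of $b$, then apply Lemma~\ref{lem:erf} iteratively to reduce the $d$-dimensional bound to a one-dimensional Gaussian tail bound on a shift of total $\ell_1$ magnitude $l$.

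First, I would factor the target probability using the independence of the $b_i$'s:
\[
\Pr[\|b+a\|_\infty \le r] \;=\; \prod_{i=1}^{d} \Pr[|b_i+a_i|\le r],
\]
and rewrite each factor in the form dictated by the lemma. Using $\Pr[|b_i+a_i|\le r]=\tfrac12\operatorname{erf}\!\bigl(\tfrac{r-a_i}{\sigma\sqrt{2}}\bigr)+\tfrac12\operatorname{erf}\!\bigl(\tfrac{r+a_i}{\sigma\sqrt{2}}\bigr)$, each factor equals $\operatorname{erf}\!\bigl(\tfrac{r}{\sigma\sqrt{2}}\bigr)\cdot f_{r,\sigma\sqrt{2}}(a_i)$, so the whole product is a constant times $\prod_i f_{r,\sigma\sqrt{2}}(a_i)$.

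Next, I would invoke Lemma~\ref{lem:erf} by induction on $d$. Since the lemma gives $f(u)f(v)\ge f(|u|+|v|)$ for any two scalars $u,v$, iterating it $d-1$ times yields $\prod_{i=1}^d f_{r,\sigma\sqrt{2}}(a_i)\ge f_{r,\sigma\sqrt{2}}(l)$ where $l=\sum_i|a_i|$. This is the crucial step: it formalizes the intuition that, from the adversary's point of view, ``splitting'' the $\ell_1$ budget $l$ across many coordinates is no better than concentrating it on a single coordinate, so the worst case is a one-dimensional shift of magnitude $l$.

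Finally, I would unwind the normalization to get back to a probability about a single shifted Gaussian, and finish with the standard one-sided tail bound: for $b_1\sim\N(0,\sigma^2)$,
\[
\Pr[|b_1+l|>r]\;\le\;\Pr[b_1>r-l]+\Pr[b_1<-r-l]\;\le\;2e^{-(r-l)^2/(2\sigma^2)},
\]
which yields the desired estimate. The main obstacle is the leftover factor $\operatorname{erf}(r/(\sigma\sqrt{2}))^{d-1}$ that comes from the $d-1$ ``unshifted'' coordinates after applying Lemma~\ref{lem:erf}; this factor is not exactly $1$, and carefully controlling it (so that the bound can be written cleanly with the constant $2$ as stated, rather than with an extra $d$-dependent baseline tail term) is the delicate step. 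Once that factor is absorbed in the regime of interest ($r\gg\sigma$), the remainder of the argument is routine.
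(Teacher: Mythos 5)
Your strategy is the same as the paper's: factor $\Pr[\norm{b+a}_\infty \le r]$ over the independent coordinates, rewrite each factor as $\mathrm{erf}\bigl(\tfrac{r}{\sigma\sqrt2}\bigr)\, f_{r,\sigma\sqrt2}(a_i)$, collapse the product via Lemma~\ref{lem:erf} to a single coordinate shifted by $l=\sum_i|a_i|$, and finish with a one-dimensional Gaussian tail bound. The paper's proof is exactly this argument, compressed into one sentence.

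The ``delicate step'' you flag at the end, however, is not a technicality that can be absorbed: it is a genuine gap, and the corollary as stated is in fact false for large $d$. After the induction on Lemma~\ref{lem:erf} you have
\[
\Pr[\norm{b+a}_\infty > r] \;\le\; 1-\mathrm{erf}\bigl(\tfrac{r}{\sigma\sqrt2}\bigr)^{d-1}\bigl(1-\Pr[|b_1+l|>r]\bigr)
\;\le\; (d-1)\,\mathrm{erfc}\bigl(\tfrac{r}{\sigma\sqrt2}\bigr) + 2e^{-(r-l)^2/(2\sigma^2)},
\]
and the first term cannot be dropped: taking $a=0$ (so $l=0$), $\sigma=1$, $r=3$ and $d=10^6$, the left-hand side is $1-(1-\Pr[|b_1|>3])^{d}\approx 1$ while the claimed bound is $2e^{-9/2}\approx 0.02$. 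No assumption on $r/\sigma$ alone rescues this, since for any fixed $r,\sigma$ a large enough $d$ breaks the bound; the correct statement must carry the additive $(d-1)\,\mathrm{erfc}\bigl(\tfrac{r}{\sigma\sqrt2}\bigr)$ term, or equivalently a multiplicative factor of order $d$ (using $\mathrm{erfc}\bigl(\tfrac{r}{\sigma\sqrt2}\bigr)\le 2e^{-r^2/(2\sigma^2)}\le 2e^{-(r-l)^2/(2\sigma^2)}$ one gets $\Pr[\norm{b+a}_\infty>r]\le 2d\,e^{-(r-l)^2/(2\sigma^2)}$). So your outline is right up to and including the application of Lemma~\ref{lem:erf} --- which is indeed the step formalizing that concentrating the $\ell_1$ budget on one coordinate is the adversary's best strategy --- but the final absorption would fail; the honest conclusion is the $d$-dependent bound, which propagates into Theorem~\ref{thm:failure_oblivious} only as the usual $\sqrt{\log d}$ inflation of $\lambda$.
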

\begin{proof}
This follows from Lemma \ref{lem:erf} by writing the exact probability using the CDF of Gaussian and then applying a Gaussian tail bound.
\end{proof}
Now we state another theorem that shows a lower bound on the number of poisoning points required to add a specific feature.

\begin{theorem}\label{thm:attack_opt}
Let $\concat{X'}{Y'}$ be such that $$i \in \Supp\left(\Learn\left(\concatt{X}{Y}{X'}{Y'}\right)\right)$$
and 
$$
i \not \in \Supp\left(\Learn\left(\concat{X}{Y}\right)\right)$$
then for some $j\not \in  \Supp\left(\Learn\left(\concat{X}{Y}\right)\right)$ we have 
$$2\norm{X'^T[j]}_1 \geq \lambda - \beta_j(\concatt{X}{Y}{X'}{Y'}).$$
\end{theorem}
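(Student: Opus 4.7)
The plan is to leverage the KKT optimality conditions (Lemma~\ref{lem:1}) for two different Lasso minimizers on the poisoned data: the unconstrained minimizer $\hat{\theta}$ and the restricted minimizer $\tilde{\theta} \in C$. Since $i \in \Supp(\hat{\theta})$ but $i \notin \Supp(\Lasso(\concat{X}{Y})) \supseteq \Supp(\tilde{\theta})$, we have $\tilde{\theta}_i = 0$ and $\hat{\theta} \neq \tilde{\theta}$. Write $Z = \concatv{X}{X'}$, $V = \concatv{Y}{Y'}$, and $g_j(\theta) := Z[j]^T(V - Z\theta)$, so that KKT at $\hat{\theta}$ gives $g_i(\hat{\theta}) = \pm \lambda$ while KKT for the restricted problem controls $g_j(\tilde{\theta})$ only for $j \in \Supp(\Lasso(\concat{X}{Y}))$. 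The core of the argument is to push information from the restricted KKT out to a coordinate $j \notin \Supp(\Lasso(\concat{X}{Y}))$ where the ``slack'' necessarily lives.

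First I would argue that there must exist some $j \notin \Supp(\Lasso(\concat{X}{Y}))$ with $|g_j(\tilde{\theta})| \geq \lambda$. Consider the one-sided directional derivative of the full (unconstrained) Lasso objective $L$ at $\tilde{\theta}$ along $d = \hat{\theta} - \tilde{\theta}$. Because $\hat{\theta}$ globally minimizes $L$, this derivative is non-positive. Expanding via the subgradient of $\|\cdot\|_1$ at $\tilde{\theta}$ and invoking the restricted KKT relations of Lemma~\ref{lem:1} for $\tilde{\theta}$ on the coordinates $j \in \Supp(\Lasso(\concat{X}{Y}))$, every such $j$ contributes a non-negative term to the directional derivative (the active coordinates contribute exactly $0$, and the inactive ones contribute $-g_j(\tilde{\theta})d_j + \lambda|d_j| \geq 0$). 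So the total contribution from $j \notin \Supp(\Lasso(\concat{X}{Y}))$ must be non-positive; since $d_i = \hat{\theta}_i \neq 0$ at least one such $j$ (indeed $j=i$) yields a strictly negative contribution, forcing $|g_j(\tilde{\theta})| \geq \lambda$.

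The second step is an easy triangle-inequality decomposition that converts this into the stated bound. Split
\begin{equation*}
g_j(\tilde{\theta}) \;=\; X[j]^T(Y - X\tilde{\theta}) \,+\, X'[j]^T(Y' - X'\tilde{\theta}) \;=\; \beta_j(\concatt{X}{Y}{X'}{Y'}) \,+\, X'[j]^T(Y' - X'\tilde{\theta}).
\end{equation*}
Using the norm constraint from Definition~\ref{security:Feature} that each row of $\concat{X'}{Y'}$ has $\ell_1$-norm at most $1$, and using that the Lasso is constrained to $\theta \in (0,1)^d$ so $\|\tilde{\theta}\|_\infty \le 1$, each entry $|Y'_k - \langle X'_k,\tilde{\theta}\rangle|$ is at most $|Y'_k| + \|X'_k\|_1 \|\tilde\theta\|_\infty \leq 2$. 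Hence $|X'[j]^T(Y' - X'\tilde{\theta})| \leq 2\|X'^T[j]\|_1$, and combining with $|g_j(\tilde\theta)| \geq \lambda$ from the previous step yields $\lambda - \beta_j \leq 2\|X'^T[j]\|_1$ after choosing the sign (WLOG assume $g_j(\tilde{\theta}) \geq \lambda$; the other case is symmetric and gives the same inequality with $\beta_j$ replaced by $-\beta_j$, and we pick the relevant coordinate and sign).

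The main obstacle is the first step: carefully handling the subdifferential of $\|\cdot\|_1$ at $\tilde{\theta}$ and verifying that the restricted-KKT balance on $\Supp(\Lasso(\concat{X}{Y}))$ really forces the negative contribution to localize at some $j \notin \Supp(\Lasso(\concat{X}{Y}))$. One subtlety is that Lasso minimizers need not be unique, which could weaken the strict inequality on the directional derivative; in the generic (uniqueness) case the argument is clean, and boundary cases can be handled by pigeonholing directly on the coordinate $j=i$ where $d_i \neq 0$.
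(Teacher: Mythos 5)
Your proof is correct and follows essentially the same route as the paper's: both introduce the restricted minimizer $\hat{\theta}'$, argue that the Lasso stationarity condition must be violated at some coordinate $j$ outside the clean support (so $|g_j(\hat{\theta}')|\geq\lambda$ there), and then split $g_j$ into the clean contribution $\beta_j$ plus a poison contribution bounded by $2\norm{X'^T[j]}_1$ via the row-norm constraint. Your directional-derivative computation in the first step is simply a more explicit justification of what the paper asserts by invoking uniqueness together with the (implicit) converse of Lemma~\ref{lem:1}, and your remark about the sign of $\beta_j$ matches how the paper actually uses the bound.
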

\begin{proof}
Consider $\hat{\theta'}$ to be the optimal model on the subspace defined by the support of $\hat{\theta}$. If $\concat{X'}{Y'}$ adds feature $i$ to the support set, then by uniqueness, $\hat{\theta'}$ cannot be a solution. This means that the sub-gradients of $\hat{\theta'}$ should not satisfy the properties of Lemma \ref{lem:1}.  The only thing the adversary can do is to violate the condition on of the coordinates that are not in support. In particular, for some $j$, the $j$th coordinate must have
$$\left|\sum_{i=1}^{n+k} \concatv{X}{X'}_{(i,j)}\cdot(\concatv{Y}{Y'}_i-\langle\hat{\theta'}, \concatv{X}{X'}_i\rangle)\right| \geq\lambda.$$
Therefore, by the norm constraint of the last $k$ columns we have
$$\left|\sum_{z=1}^{n} X_{(z,j)}\cdot(Y_z-\langle\hat{\theta'}, X_z\rangle)\right| \geq\lambda-2\norm{X'^T[j]}_1.$$
\end{proof}

Now we state a Lemma that shows how $\beta_i$ is distributed, when re-sampling the $i\th$ column of the matrix.

\begin{lemma}\label{lemma:Gaus}
Consider $\concatt{X}{Y}{X'}{Y'}$, for any $i\in[d]$ and set $I$ such that $i\in I$,  we have 
$$\beta_i(\concatt{R(X,I,\sigma)}{Y}{X'}{Y'})\equiv \Normal(0, \sigma_2^2)$$

where 
$\sigma^2_2 =\norm{(Y-\hat{\theta'}X)}^2_2 \cdot \sigma^2\leq (n+k)\sigma^2$ for $\hat{\theta'}$ of Definition \ref{def:alphabeta}.
\end{lemma}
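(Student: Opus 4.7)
I would prove the distributional identity in three short steps, and then handle the crude upper bound on $\sigma_2^2$ at the end.

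\emph{Step 1: $\hat{\theta}'$ and the residual are measurable with respect to the \emph{unresampled} portion of the data.} The minimizer $\hat{\theta}'$ is constrained to the subspace $C$ whose support is contained in $\Supp(\hat{\theta})$. Since $i \in I$ and (in the way the lemma is invoked, e.g.\ in Theorem~\ref{thm:failure_oblivious}, $I = [d]\setminus \Supp(\hat{\theta})$) all indices in $I$ lie outside this support, the columns indexed by $I$ multiply coefficients that are forced to $0$. Consequently, replacing the columns of $X$ indexed by $I$ with fresh Gaussian vectors leaves the optimization problem defining $\hat{\theta}'$ unchanged, so $\hat{\theta}'$ is a deterministic function of the \emph{unresampled} columns of $X$, together with $Y$, $X'$, $Y'$. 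In particular, the residual vector $r := Y - X\hat{\theta}' \in \R^n$ (computed on the clean portion) is independent of $\tilde{X}[i] := R(X,I,\sigma)[i] \sim \Normal(0,\sigma^2)^n$.

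\emph{Step 2: Compute the law of $\beta_i$.} By Definition~\ref{def:alphabeta},
\[
  \beta_i\bigl(\concatt{R(X,I,\sigma)}{Y}{X'}{Y'}\bigr) \;=\; \tilde{X}[i]^{\top}\bigl(Y - \tilde{X}\,\hat{\theta}'\bigr).
\]
Since $\hat{\theta}'_j = 0$ for every resampled index $j \in I$, we have $\tilde{X}\hat{\theta}' = X\hat{\theta}'$, and therefore this equals $\tilde{X}[i]^{\top} r$. Conditioning on the unresampled data freezes $r \in \R^n$, and $\tilde{X}[i]$ is a vector of $n$ i.i.d.\ $\Normal(0,\sigma^2)$ entries independent of $r$ by Step~1. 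A linear combination of independent centered Gaussians is Gaussian with variance equal to the sum of squared coefficients times the noise variance, so
\[
  \tilde{X}[i]^{\top} r \;\sim\; \Normal\!\bigl(0,\ \sigma^2\,\lVert r\rVert_2^2\bigr).
\]
Marginalizing over the unresampled data preserves this as a (mixture of) mean-zero Gaussian(s) with the same variance parameter $\sigma_2^2 = \sigma^2 \lVert Y - X\hat{\theta}'\rVert_2^2$, matching the statement.

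\emph{Step 3: The bound $\sigma_2^2 \leq (n+k)\sigma^2$.} By optimality of $\hat{\theta}'$ in the constrained Lasso on the extended dataset, comparing with $\theta = 0$ gives
\[
  \bigl\lVert\concatv{Y}{Y'} - \concatv{X}{X'}\hat{\theta}'\bigr\rVert_2^2 \;\leq\; \bigl\lVert\concatv{Y}{Y'} - \concatv{X}{X'}\hat{\theta}'\bigr\rVert_2^2 + 2\lambda \lVert\hat{\theta}'\rVert_1 \;\leq\; \bigl\lVert\concatv{Y}{Y'}\bigr\rVert_2^2,
\]
so in particular $\lVert Y - X\hat{\theta}'\rVert_2^2 \leq \lVert Y\rVert_2^2 + \lVert Y'\rVert_2^2$, which under the per-row $\ell_1 \leq 1$ bounds of Definition~\ref{security:Feature} (and the corresponding normalization assumed for the clean $Y$) is at most $n+k$.

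\emph{Main obstacle.} The only subtle point is Step~1: everything hinges on the fact that the resampled indices are \emph{not} in the support that defines $C$. One must check this is in force whenever the lemma is applied (indeed $I = [d]\setminus S$ in Theorem~\ref{thm:failure_oblivious}); without it, $\hat{\theta}'$ itself would depend on the freshly drawn columns and $\beta_i$ would no longer be Gaussian. Once the independence is established, the remainder is a one-line Gaussian-linear-combination computation and a one-line optimality argument.
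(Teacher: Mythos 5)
Your proof is correct and follows essentially the same route as the paper's: express $\beta_i$ as the inner product of the freshly resampled Gaussian column with the residual $Y-X\hat{\theta}'$, conclude Gaussianity with variance $\sigma^2\lVert Y-X\hat{\theta}'\rVert_2^2$, and bound the residual norm by comparing $\hat{\theta}'$ with the zero model. Your Step~1 is in fact more careful than the paper, which leaves implicit the key point you isolate — that $\hat{\theta}'$ (and hence the residual) is unaffected by resampling precisely because the resampled indices lie outside the support subspace $C$, as in the application $I=[d]\setminus S$ of Theorem~\ref{thm:failure_oblivious}.
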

\begin{proof}
We have
$$\beta_i(\concatt{R(X,i,\sigma)}{Y}{X'}{Y'})\equiv \sum_{i=1}^{n} (Y-\hat{\theta'}X)[i]\cdot \Normal(0,\sigma^2)\equiv \Normal(0,\norm{(Y-\hat{\theta'}X)}^2_2\sigma^2) .$$
We know that $$\norm{(Y-\hat{\theta'}X)}^2_2\leq (n+k)s^2$$ because $\theta'$ minimizes the criterion and should lead to a smaller loss than a model with $0$ everywhere. 
\end{proof}
We are now ready to Prove our Theorems \ref{thm:attack} and $\ref{thm:failure_oblivious}$.
\begin{proof}[Proof of Theorem~\ref{thm:attack}] Let $k\geq \lambda - |\alpha_i(\concat{X}{Y})|$ and
 consider $X'$ which is a $k\times d$ matrix that is $0$ everywhere except on the $i\th$ column that is 1 and $Y'$ is a $k\times1$ vector that is equal to $b=\Sign(\alpha_i(\concat{X}{Y})$ everywhere. We show that by adding this matrix the adversary is able to add $i\th$ coordinate to the support set of the $\hat{\theta'} = \Learn\left(\concatt{X}{Y}{X'}{Y'}\right)$. To prove this, suppose the $i\th$ coordinate of $\hat{\theta}'$ is $0$. Thus, we have
\iffull
\begin{align*}
    &\left(\concatv{X}{X'}^T\times\left(\concatv{Y}{Y'}-\concatv{X}{X'}\times\hat{\theta}'\right)\right)_i 
    =  kb + \left(X^T\times(Y-X\times\hat{\theta}')\right)_i. \numberthis \label{eqn1}
\end{align*}
\else
\begin{align*}
    &\left(\concatv{X}{X'}^T\times\left(\concatv{Y}{Y'}-\concatv{X}{X'}\times\hat{\theta}'\right)\right)_i\\ 
    &~~=  k + \left(X^T\times(Y-X\times\hat{\theta}')\right)_i. \numberthis \label{eqn1}
\end{align*}
\fi

Now we prove that $\hat{\theta}'$ also minimizes the Lasso loss over $\concat{X}{Y}$. This is because for any vector $\theta$ with $i\th$ coordinate $0$, we have
$$\Loss\left(\theta, \concatt{X}{Y}{X'}{Y'}\right) = kb + \Loss(\theta, \concat{X}{Y}).$$ 
Now, let $\hat{\theta}$ be the minimizer of $\Loss(\cdot, \concat{X}{Y})$. We know that $\hat{\theta}$ is $0$ on the $i\th$ coordinate. Therefore we have,
\iffull
\begin{align}
    \Loss\left(\hat{\theta}, \concatt{X}{Y}{X'}{Y'}\right) &= kb + \Loss\left(\hat{\theta}, \concat{X}{Y}\right)\nonumber\\
    &\geq \Loss\left(\hat{\theta}', \concatt{X}{Y}{X'}{Y'}\right)
    = kb + \Loss(\hat{\theta}', \concat{X}{Y}).\label{ineq:p01}
\end{align}
\else
\begin{align*}
    \Loss\left(\hat{\theta}, \concatt{X}{Y}{X'}{Y'}\right) &= k + \Loss\left(\hat{\theta}, \concat{X}{Y}\right)\\
    &\geq \Loss(\hat{\theta}', \concatt{X}{Y}{X'}{Y'})\\
    &= k + \Loss(\hat{\theta}', \concat{X}{Y}).
\end{align*}
\fi
where the last inequality comes from the fact that $\hat{\theta'}$ minimizes the loss over $\concatt{X}{Y}{X'}{Y'}$. On the other hand, we know that 
\begin{align}\label{ineq:p02}
    \Loss(\hat{\theta}', \concat{X}{Y})\geq \Loss(\hat{\theta}, \concat{X}{Y})
\end{align}
because $\hat{\theta}$ minimizes $\Loss(\cdot,\concat{X}{Y})$. Inequalities \ref{ineq:p01} and \ref{ineq:p02} imply that 
$$\Loss(\hat{\theta}, \concat{X}{Y}) = \Loss(\hat{\theta}', \concat{X}{Y})$$
and that $\hat{\theta}$ minimizes $\Loss(\cdot, \concatt{X}{Y}{X'}{Y'})$. Therefore, based on Lemma~\ref{lem:1}, since the $i\th$ coordinate of $\hat{\theta}$ is zero we have
\begin{align*}
    \left|(\concatv{X}{X'}^T \times(\concatv{Y}{Y'}-\concatv{X}{X'}\times \hat{\theta}))_i \right| < \lambda\numberthis \label{eqn2}.
\end{align*}
However,  by definition of $\alpha$ we have
$$\left|\concatv{X}{X'}^T\left(\concatv{Y}{Y'}-\concatv{X}{X'}\times \hat{\theta}\right)_i\right| = |\alpha_i(\concat{X}{Y}) + \Sign(\alpha_i(\concat{X}{Y}))\cdot k| \geq \lambda.$$

This is a contradiction. Hence, the $i\th$ coordinate could not be $0$ and the proof is complete.
\end{proof}

Now we prove Theorem \ref{thm:failure_oblivious}.
\begin{proof}[Proof of Theorem \ref{thm:failure_oblivious}]
Let $r_j=|X'[j]|$ and vector $r=(2r_1,\dots,2r_d)$. also define vector $\beta=(\beta_1,\dots,\beta_d)$. According to Theorem \ref{thm:attack_opt}, we know that $|(r+\beta)|_\infty\geq \lambda$ must hold. On the other hand, by Lemma \ref{lemma:Gaus} we know that $\beta$ is distributed according to a Gaussian distribution with standard deviation $\sigma_2$. Therefore, by Corollary $\ref{cor:gaussianstrategy}$ we can bound the probability of success of the adversary by
$ 2e^{-\frac{(\lambda-2k)^2}{2\sigma_2^2}}.$
\end{proof}
We now finish this section by proving Lemma \ref{lem:largeerror}.
\begin{proof}[Proof of Lemma \ref{lem:largeerror}]
Consider an index $j\in \Supp(\hat{\theta})$. By Cauchy-Schwarz inequality we have
$$(\sum_{i=1}^n(Y_i-\langle\hat{\theta}, X_i\rangle)^2)(\sum_{i=1}^n X_{(i,j)}^2)\geq (\sum_{i=1}^n X_{(i,j)}\cdot(Y_i-\langle\hat{\theta}, X_i\rangle))^2.$$
By Lemma \ref{lem:1} we have 
$$(\sum_{i=1}^n X_{(i,j)}\cdot(Y_i-\langle\hat{\theta}, X_i\rangle)^2 = \lambda^2$$
Therefore,
$$w^2 L^2 \geq \lambda^2.$$
\end{proof}

\iffull
\else
\newpage
 \input{NeurIPS checklist}
\newpage
\fi


\end{document}